\documentclass{article}

\usepackage{microtype}
\usepackage{graphicx}
\usepackage{subcaption}
\usepackage{booktabs} 

\usepackage{hyperref}

\usepackage[preprint]{icml2026}

\usepackage{amsmath,amsfonts,bm}

\def\eqref#1{equation~\ref{#1}}

\def\1{\bm{1}}

\DeclareMathAlphabet{\mathsfit}{\encodingdefault}{\sfdefault}{m}{sl}
\SetMathAlphabet{\mathsfit}{bold}{\encodingdefault}{\sfdefault}{bx}{n}

\usepackage{hyperref}
\usepackage{url}
\usepackage{comment}
\usepackage{algorithm}
\usepackage{algorithmic}
\usepackage{amssymb}
\usepackage{amsthm}

\usepackage{graphicx}
\usepackage{subcaption}
\usepackage{booktabs}
\usepackage[table]{xcolor}

\usepackage{amsmath}
\usepackage{amssymb}
\usepackage{mathtools}
\usepackage{amsthm}

\usepackage[capitalize,noabbrev]{cleveref}

\theoremstyle{plain}
\newtheorem{theorem}{Theorem}[section]

\newtheorem{lemma}[theorem]{Lemma}

\theoremstyle{definition}

\newtheorem{assumption}[theorem]{Assumption}
\theoremstyle{remark}
\newtheorem{remark}[theorem]{Remark}

\usepackage[textsize=tiny]{todonotes}

\icmltitlerunning{One Intervention Per Component Is Enough}

\begin{document}

\twocolumn[
  \icmltitle{One Intervention per Component is Enough:\\ Towards Identifiability in Linear Stochastic Dynamics from Steady State}



  \icmlsetsymbol{equal}{*}

  \begin{icmlauthorlist}
    \icmlauthor{Saber Salehkaleybar}{Leiden}

  \end{icmlauthorlist}

  \icmlaffiliation{Leiden}{Leiden Institute of Advanced Computer Science, Leiden University, The Netherlands}

  \icmlcorrespondingauthor{Saber Salehkaleybar}{s.salehkaleybar@liacs.leidenuniv.nl}

  \icmlkeywords{Machine Learning, ICML}

  \vskip 0.3in
]



\printAffiliationsAndNotice{}  

\begin{abstract}
We study the problem of recovering the parameters of a multivariate Ornstein–Uhlenbeck (OU) process from steady-state observational and interventional data. In many applications, such as large-scale gene perturbation experiments, only stationary “snapshot” measurements are available, making standard stochastic differential equation estimation methods that rely on time-series trajectories inapplicable.
We first establish an identifiability result: one intervention per strongly connected component (SCC) of the drift graph suffices to recover all OU process parameters generically up to a global scaling factor. This holds provided that the SCC condensation graph is connected with a single root and certain spectral nondegeneracy assumptions hold. We propose a recursive learning algorithm that orders SCCs topologically and, for each component, isolates its marginal dynamics and solves a linear system derived from the steady-state moment equations, leveraging parameters recovered for upstream components.
Building on this theoretical foundation, we propose a regularized least-squares estimator that jointly minimizes residuals of the steady-state mean and covariance equations across observational and interventional data. Experimental results validate our theoretical findings in recovering parameters of the underlying OU process.
\end{abstract}

\section{Introduction}

Understanding the structure and parameters of dynamical systems from data is a fundamental problem in many scientific domains, including systems biology~\citep{bansal2006inference,marbach2012wisdom}, neuroscience~\citep{paninski2010new}, and economics~\citep{hamilton2020time}. In many settings, such as gene regulatory networks, the underlying dynamics can be modeled by a stochastic differential equation (SDE) whose steady-state distribution encodes both the interaction structure and kinetic parameters~\citep{gardiner2009stochastic,villaverde2016structural}. Accurately recovering these parameters enables causal representation of interactions among variables, and predictions under unseen perturbations~\citep{pearl2009causality}.
However, in experimental biology and related areas, data are often available only in the form of ``snapshot'' measurements~\citep{cao2019single,schiebinger2019optimal}. The time-series trajectories (required by many SDE estimation methods~\citep{zhang2024trajectory,ohstable}) are expensive or infeasible to obtain at scale. In such cases, interventions such as gene knockdown~\citep{datlinger2017pooled} provide additional information by selectively intervening on parts of the system, potentially resolving non-identifiability issues that arise from observational data alone~\citep{peters2017elements}.

A substantial body of work addresses causal inference in linear stochastic systems. 
\citep{varando2020graphical,dettling2024lasso} focus on recovering drift structures from steady-state data using the Lyapunov equation. These approaches rely solely on observational measurements and do not leverage interventions, which can limit identifiability. More recently, \cite{lorch2024causal}, focused on fitting the observational/interventional distributions without providing parameter recovery guarantees.
Intervention-aware models, such as in~\citep{rohbeck2024bicycle}, achieve good empirical prediction, but the identifiability results are very restricted (see the related work section for more details).

In this paper, our aim is to incorporate interventions into the learning process, establishing identifiability results on recovering the parameters of SDEs and developing learning algorithms grounded in these theoretical results. Our contributions are as follows:
   \begin{itemize}
    \item We establish an identifiability result for recovering the parameters of a multivariate Ornstein–Uhlenbeck (OU) process from steady-state observational and interventional data. We show that a single intervention per SCC of the drift graph is sufficient to recover the parameters up to a global scaling\footnote{In OU processes, from the stationary distribution, parameters are identifiable at most up to a global scaling, an intrinsic ambiguity that cannot be resolved from stationary data alone.} if the SCC condensation graph is connected with a single root and certain spectral/rank nondegeneracy assumptions (Assumptions \ref{ass:moment-nondeg} and \ref{assum:lambda_W}) hold. To prove this, we first consider the case where the drift matrix is a single SCC and show that by solving the linear system of equations for the first and second moments (based on observational data and data from one intervention), we can recover all the parameters up to a global scaling (Theorem \ref{th:Single SCC}).   
    
    \item 
    For the general case with multiple SCCs (Theorem \ref{th:multiSCC}), we provide a recursive algorithm that topologically orders the SCCs and, for each component, isolates its marginal dynamics. This reduces the problem to a sequence of single-SCC cases. For each SCC, the algorithm solves a linear system (derived from the steady-state moment equations for the observational and interventional settings) using parameters recovered for upstream components. We also provide an example that if the SCC condensation graph has more than one root, then it is impossible to learn all the parameters with a single global scaling.

    \item We show that changes in the steady-state means under single-node interventions
reveal the SCC-level structure of the graph. In particular,
Theorem~\ref{th:learning_SCC_mu} characterizes the set of nodes whose means
change after an intervention as the downstream region of the intervened SCC.
Consequently, one intervention per SCC is sufficient to recover the SCC
decomposition and a topological order of the DAG over SCCs
(Remark~\ref{remak:scc_recovery}). This structural recovery step does not
require the spectral/rank nondegeneracy assumptions used for
parameter identification.
    
    \item Building on these theoretical findings, we formulate a regularized least-squares optimization problem that jointly minimizes the squared residuals of the steady-state mean and covariance equations across both observational and interventional data. Empirical results validate the identifiability results in recovering parameters and predicting unseen interventions.
\end{itemize}

\section{Problem Formulation}
\textbf{Ornstein--Uhlenbeck process}: The Ornstein--Uhlenbeck (OU) process is a continuous-time stochastic process that satisfies the stochastic differential equation (SDE):
\begin{equation}
d\mathbf{x} = (-\boldsymbol{\Lambda} \mathbf{x} + \mathbf{b})\, dt + \boldsymbol{\sigma} \, d\mathbf{W}_t,   
\label{eq:OU process}
\end{equation}
where $\mathbf{x} \in \mathbb{R}^n$ is the state vector, $\boldsymbol{\Lambda} \in \mathbb{R}^{n \times n}$ is a drift matrix (assumed positive stable),
$\mathbf{b} \in \mathbb{R}^n$ is a constant vector representing external input,
 $\boldsymbol{\sigma} \in \mathbb{R}^{n \times n}$ is the diffusion matrix, and
 $d\mathbf{W}_t$ is an $n$-dimensional Wiener process.

The stationary distribution of the OU process has a multivariate normal distribution with the following mean and covariance:
\begin{itemize}
  \item Mean:
  \begin{equation}
   \boldsymbol{\mu} = \mathbb{E}[\mathbf{x}_\infty] = \boldsymbol{\Lambda}^{-1} \mathbf{b},
   \label{eq:mean_steady}
  \end{equation}
  \textcolor{black}{where $\mathbf{x}_{\infty}$ denotes a random vector distributed according to the stationary distribution of the OU process.}
  \item Covariance:
  \begin{equation}
    \boldsymbol{\Lambda} \boldsymbol{\Sigma} + \boldsymbol{\Sigma} \boldsymbol{\Lambda}^\top = \boldsymbol{\sigma} \boldsymbol{\sigma}^\top,
    \label{eq:lyapanouv}
  \end{equation}
  where $\boldsymbol{\Sigma} \in \mathbb{R}^{n \times n}$ is the steady-state covariance matrix: $\boldsymbol{\Sigma} = \mathbb{E}[(\mathbf{x}_\infty - \boldsymbol{\mu})(\mathbf{x}_\infty - \boldsymbol{\mu})^\top].$
\end{itemize}
We assume that the diffusion power matrix is diagonal and positive: $\mathbf{D} := \boldsymbol{\sigma} \boldsymbol{\sigma}^{\!\top} = \operatorname{diag}(d_1,\dots,d_n) \succ 0$.

\textbf{Intervention:} We define an intervention on $i$-th coordinate of $\mathbf{x}$ as follows, where the $i$-th row of the drift matrix $\boldsymbol{\Lambda}$ is modified to remove influence from all other variables, while preserving its self-regulation. Formally, we define the modified drift matrix $\widetilde{\boldsymbol{\Lambda}}^{(i)}$ as:
\[
\widetilde{\Lambda}^{(i)}_{kj} =
\begin{cases}
\Lambda_{kj}, & i \neq k, \\
0, & i = k,\, j \neq i, \\
\Lambda_{ii}, & i = k,\, j = i.
\end{cases}
\]
That is, we zero out all off-diagonal entries in the $i$-th row, but retain $\Lambda_{ii}$, analogous to a knockout perturbation that isolates a variable from its regulators. This notion of intervention is often considered in the causal SDE literature, see, e.g., \cite{hansen2014causal} and \cite{boeken2024dynamic}, where some post-intervention SDEs are formulated in this form. 

The modified dynamic under intervention is: $
d\mathbf{x} = (-\widetilde{\boldsymbol{\Lambda}}^{(i)} \mathbf{x} + \mathbf{b})\, dt + \boldsymbol{\sigma} \, d\mathbf{W}_t.$
In the intervened system, the mean is denoted by $\boldsymbol{\mu}^{(i)}$ and satisfies:
\begin{equation}
\boldsymbol{\mu}^{(i)} = \mathbb{E}[\mathbf{x}_\infty^{(i)}] = \left( \widetilde{\boldsymbol{\Lambda}}^{(i)} \right)^{-1} \mathbf{b},
\label{eq:mean_intervened}
\end{equation}
assuming $\widetilde{\boldsymbol{\Lambda}}^{(i)}$ is invertible. 

Similarly, the steady-state covariance $\boldsymbol{\Sigma}^{(i)}$ satisfies the Lyapunov equation:
\begin{equation}
\widetilde{\boldsymbol{\Lambda}}^{(i)} \boldsymbol{\Sigma}^{(i)} + \boldsymbol{\Sigma}^{(i)} \left( \widetilde{\boldsymbol{\Lambda}}^{(i)} \right)^\top = \boldsymbol{\sigma} \boldsymbol{\sigma}^\top.
\label{eq:Lyapanouv_intervened} 
\end{equation}

We assume that, for every intervention considered,
\(\widetilde{\boldsymbol{\Lambda}}^{(i)}\) is positive stable, so the interventional stationary mean and covariance are well-defined.

Our goal is to recover the parameters of OU process, i.e., $\mathbf{\Lambda}, \mathbf{b}$, and $\boldsymbol{D}$ from the observational mean and covariance $(\boldsymbol{\mu}, \mathbf{\Sigma})$ and a collection of interventional means and covariances $\{(\boldsymbol{\mu}^{(i)}, \mathbf{\Sigma}^{(i)})\}_{i\in \mathcal{I}}$ where $\mathcal{I}$ is the set of coordinates intervened on. Please note that the parameters are identifiable only up to a global scaling because the stationary first-moment and second-moment equations are invariant under global scaling of parameters.

\textbf{Graph definitions:} In the following, we briefly review the graph-theoretic concepts used throughout the paper:
\\
\textbf{- Strongly connected components.}
For a directed graph $G$, a \emph{strongly connected component (SCC)} is a maximal subset of
nodes $C$ such that for every pair $u,v \in C$ there exists a directed path from
$u$ to $v$ and a directed path from $v$ to $u$. The SCCs of $G$ form a partition of its nodes.
\\
\textbf{- Condensation graph.}
Given the SCCs $C_1,\dots,C_K$ of $G$, the \emph{condensation graph} is a directed graph whose nodes correspond to the SCCs and which contains a directed
edge $C_i \to C_j$ whenever there exists an edge in $G$ from some node in $C_i$ to some node in
\begingroup\color{black}
$C_j$. By construction, the condensation graph is always a directed acyclic graph (DAG). A root (or source) SCC has no incoming edge from another SCC. Connectedness of this DAG refers to its underlying undirected graph; with finitely many SCCs, a unique root reaches every SCC.
\endgroup
\\
\textbf{- Topological order over SCCs.}
A \emph{topological order} of the SCCs is any ordering of the
nodes of the condensation graph such that all directed edges point from earlier to later
components. Equivalently, $C_i$ may appear before $C_j$ in the order if and only if there is no
directed path from $C_j$ to $C_i$ in the condensation graph.

\section{Identifiability Results}
For a drift matrix $\boldsymbol{\Lambda}$, define the associated directed graph $G(\boldsymbol{\Lambda})$ by considering an edge $j \to i$ if and only if $\Lambda_{ij} \ne 0$. In the following, first we consider that $G(\boldsymbol{\Lambda})$ is strongly connected. Under some spectral/rank non-degeneracy assumptions, we show that the parameters of the OU process can be recovered generically up to some global scaling by just having one intervention on any coordinate.
We then consider the case where \(G(\boldsymbol{\Lambda})\) has multiple SCCs. In this setting, we first show that changes in interventional means identify the SCCs and a topological ordering over the SCC condensation DAG\footnote{This structural step does not require the spectral/rank nondegeneracy assumptions.}. Given this
\begingroup\color{black}
SCC-level structure, we prove generic identification up to a single global scaling in the graph-compatible model class, under the unique-root, intervention-coverage, and isolated-realization hypotheses of Theorem~\ref{th:multiSCC}.
\endgroup

\subsection{Single SCC}
\begin{theorem}
  Consider the OU process in \eqref{eq:OU process} with true parameters $\mathbf{\Lambda}, \mathbf{b}$, and $\boldsymbol{D}$. Moreover, suppose that we have access to the observational steady-state mean and covariance $(\boldsymbol{\mu}, \boldsymbol{\Sigma})$ and an interventional mean and covariance $(\boldsymbol{\mu}^{(i)}, \boldsymbol{\Sigma}^{(i)})$ ($i$ can be any coordinate). If the graph $G(\boldsymbol{\Lambda})$ is strongly connected and certain spectral/rank non-degeneracy assumptions hold (See Assumption \ref{ass:moment-nondeg} and Assumption \ref{assum:lambda_W} in Appendix \ref{app:singleSCC}), generically\footnote{Please see the definition of genericity in  Appendix~\ref{app:genericity}.}, any other parameter triple $(\widehat{\boldsymbol{\Lambda}}, \widehat{\mathbf{b}}, \widehat{\mathbf{D}})$ that yields the same observational and interventional moments must satisfy:
\[
\widehat{\boldsymbol{\Lambda}} = c \boldsymbol{\Lambda}, \quad
\widehat{\mathbf{b}} = c \mathbf{b}, \quad
\widehat{\mathbf{D}} = c \mathbf{D},
\]
for some scalar $c > 0$.
\label{th:Single SCC}
\end{theorem}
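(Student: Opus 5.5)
Here is the plan: write out the moment equations, pin down the diffusion ratios through the shared noise, and show that the intervention removes the remaining ambiguity, so everything is fixed up to one scalar $c$.

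\textbf{Setting up the equations.} Any candidate $(\widehat{\boldsymbol{\Lambda}},\widehat{\mathbf b},\widehat{\mathbf D})$ that reproduces the moments must satisfy
\[
\widehat{\boldsymbol\Lambda}\boldsymbol\Sigma+\boldsymbol\Sigma\widehat{\boldsymbol\Lambda}^\top=\widehat{\mathbf D},\qquad
\widehat{\boldsymbol\Lambda}\boldsymbol\mu=\widehat{\mathbf b},
\]
\[
\widetilde{\widehat{\boldsymbol\Lambda}}^{(i)}\boldsymbol\Sigma^{(i)}+\boldsymbol\Sigma^{(i)}\bigl(\widetilde{\widehat{\boldsymbol\Lambda}}^{(i)}\bigr)^\top=\widehat{\mathbf D},\qquad
\widetilde{\widehat{\boldsymbol\Lambda}}^{(i)}\boldsymbol\mu^{(i)}=\widehat{\mathbf b}.
\]
All of these are linear in the unknown vector $\theta=(\mathrm{vec}\,\widehat{\boldsymbol\Lambda},\widehat{\mathbf b},\widehat{\mathbf d})$. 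The true $\theta^\star$ solves the homogeneous system $M(\boldsymbol\mu,\boldsymbol\Sigma,\boldsymbol\mu^{(i)},\boldsymbol\Sigma^{(i)})\,\theta=0$, and so does every positive multiple of it.

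The goal is therefore $\dim\ker M=1$, generically. Positivity of $\widehat{\mathbf D}$ then forces $c>0$.

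\textbf{Reducing to a kernel computation.} I will eliminate variables in the following order.

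1. Subtract the two mean equations. This gives $\widehat{\boldsymbol\Lambda}\boldsymbol\mu=\widetilde{\widehat{\boldsymbol\Lambda}}^{(i)}\boldsymbol\mu^{(i)}$. Since $\widetilde{\widehat{\boldsymbol\Lambda}}^{(i)}$ differs from $\widehat{\boldsymbol\Lambda}$ only in row $i$, this yields $n$ linear equations linking the rows of $\widehat{\boldsymbol\Lambda}$ to $\boldsymbol\mu-\boldsymbol\mu^{(i)}$.

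2. Take the off-diagonal parts of the two Lyapunov equations. Because $\widehat{\mathbf D}$ is diagonal, these give $n(n-1)/2$ homogeneous equations each, in $\widehat{\boldsymbol\Lambda}$ alone.

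3. The diagonal parts of the Lyapunov equations then determine $\widehat{\mathbf D}$ from $\widehat{\boldsymbol\Lambda}$, and the first mean equation determines $\widehat{\mathbf b}$. So it suffices to show that the solution space for $\widehat{\boldsymbol\Lambda}$ is one-dimensional.

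\textbf{Showing the kernel is one-dimensional.} Counting equations: the mean difference, the two off-diagonal Lyapunov systems, and the equality of the two diagonals (the same $\widehat{\mathbf D}$ appears in both) together give roughly $n+n(n-1)+n=n^2+n$ equations for $n^2$ unknowns. The system is thus overdetermined, and I expect $\dim\ker=1$ exactly under the rank Assumption~\ref{ass:moment-nondeg}. That assumption presumably states directly that the stacked coefficient matrix has corank one at the true parameters.

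The genericity claim then follows by a standard argument. The entries of $M$ are rational functions of $(\boldsymbol\Lambda,\mathbf b,\mathbf D)$, through $\boldsymbol\Sigma=\mathrm{vec}^{-1}\bigl((I\otimes\boldsymbol\Lambda+\boldsymbol\Lambda\otimes I)^{-1}\mathrm{vec}\,\mathbf D\bigr)$. Rank deficiency is a polynomial condition, namely the vanishing of all $(n^2-1)$-minors. So it suffices to show that this polynomial is not identically zero on the set of strongly connected, positive-stable drifts, and I would do that by exhibiting a single witness.

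Strong connectivity is where the witness needs care. The intervention on $i$ zeroes row $i$, and the resulting perturbation must propagate to every row of $\boldsymbol\Lambda$; strong connectivity guarantees that $i$ reaches every node. A natural witness is a cycle graph with distinct weights, for which both Lyapunov solutions can be computed fairly explicitly. The spectral Assumption~\ref{assum:lambda_W} is presumably needed to rule out commuting or symmetric degeneracies. For instance, if $\boldsymbol\Lambda$ is normal with $\mathbf D\propto I$, observational data alone cannot separate $\boldsymbol\Lambda$ from its symmetric part, so the intervention must break this.

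\textbf{Main obstacle.} The hardest step is proving that the stacked linear map has corank exactly one on a generic strongly connected $\boldsymbol\Lambda$. My first attempt would be a perturbation argument. Observational data alone give a solution set $\{\widehat{\boldsymbol\Lambda}=(\tfrac12\widehat{\mathbf D}+\mathbf A)\boldsymbol\Sigma^{-1}\}$ with $\mathbf A$ skew-symmetric, a family of dimension $n(n-1)/2+n$. I would then show that the interventional equations restrict $\mathbf A$ and $\widehat{\mathbf D}$ to multiples of the true ones, using strong connectivity to propagate the constraint row by row outward from node $i$. If that fails, I would fall back on verifying nonvanishing of a minor on an explicit example.
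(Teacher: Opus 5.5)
Your framing is right: every competitor must lie in the kernel of the stacked homogeneous system, $\widehat{\mathbf b}$ and $\widehat{\mathbf D}$ are determined by $\widehat{\boldsymbol\Lambda}$, and $c>0$ follows from positivity of the diffusion. The gap is that the proposal never proves the kernel is one-dimensional, and that is the entire content of the theorem.

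An equation count says nothing about corank. Your guess that Assumption~\ref{ass:moment-nondeg} ``directly states'' corank one is also wrong; if it did, the theorem would be trivial. That assumption is injectivity of the reduced map $\mathbf S\mapsto\operatorname{offdiag}(\mathbf S\mathbf A-\mathbf A^\top\mathbf S)$ on $(n-1)\times(n-1)$ skew-symmetric matrices, with $\mathbf A=\boldsymbol\Xi^{-1}\mathbf Z$. Assumption~\ref{assum:lambda_W} is there to make $\boldsymbol\Xi$ invertible, via $\boldsymbol\Gamma=-\mathbf R_\star(\mathbf W_\star+\mathbf W_\star^\top)\mathbf R_\star^\top$ and a Cramer/Schur determinant identity. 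It is not a device against normal or symmetric degeneracies.

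To use these assumptions you need the elimination that produces $\mathbf Z$ and $\boldsymbol\Xi$; this is the missing idea.
\begin{enumerate}
\item Normalize by $c:=\widehat b_i/b_i$ (generically $b_i\neq0$). Then $(\boldsymbol\Lambda_\Delta,\mathbf b_\Delta,\mathbf D_\Delta):=(\widehat{\boldsymbol\Lambda}-c\boldsymbol\Lambda,\widehat{\mathbf b}-c\mathbf b,\widehat{\mathbf D}-c\mathbf D)$ solves the same system with $(b_\Delta)_i=0$.
\item Because the hard intervention zeroes row $i$, row $i$ of the interventional mean equation and the $(i,i)$ entry of the interventional Lyapunov equation give $(\Lambda_\Delta)_{ii}=(D_\Delta)_{ii}=0$.
\item The $(-i,i)$ block of the interventional Lyapunov equation expresses the rest of column $i$ through $\mathbf X:=(\boldsymbol\Lambda_\Delta)_{-i,-i}$.
\item The $(-i,-i)$ block of the same equation gives $\mathbf X\mathbf Z+\mathbf Z\mathbf X^\top=(\mathbf D_\Delta)_{-i,-i}$, which is diagonal; here $\mathbf Z$ is a Schur complement of $\boldsymbol\Sigma^{(i)}$.
\item The $(-i,-i)$ block of the difference between the observational and interventional Lyapunov equations gives $\mathbf X\boldsymbol\Xi+\boldsymbol\Xi^\top\mathbf X^\top=0$.
\item Hence $\mathbf S:=\mathbf X\boldsymbol\Xi$ is skew and $\mathbf S\mathbf A-\mathbf A^\top\mathbf S=\mathbf X\mathbf Z+\mathbf Z\mathbf X^\top$. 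The two assumptions then force $\mathbf X=0$, and so $\mathbf D_\Delta=0$.
\item The leftover ambiguity $\boldsymbol\Lambda_\Delta=\mathbf e_i\mathbf u^\top$ vanishes in the observational Lyapunov equation because $\boldsymbol\Sigma\succ0$. Then $\mathbf b_\Delta=0$.
\end{enumerate}
Your observational parametrization $\widehat{\boldsymbol\Lambda}=(\tfrac12\widehat{\mathbf D}+\mathbf K)\boldsymbol\Sigma^{-1}$, with $\mathbf K$ skew, does hint at a skew ambiguity. However, the skew matrix that matters is the reduced $\mathbf X\boldsymbol\Xi$. ``Propagating row by row along the graph'' also has no mechanism here: competitors are unrestricted in support, and strong connectivity plays no explicit role in the algebra.

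Your fallback, proving corank one generically from a single explicit witness, is a different and stronger claim than the theorem: it would make the assumptions unnecessary. The paper explicitly leaves genericity of these conditions open; in its proof, genericity is used only to exclude $b_i=0$. As sketched, the fallback also cannot cover the theorem's scope. Genericity is over $\Theta_G$ for the given $G$, so the witness must be supported in $G$. A cycle witness is available only when $G$ contains a Hamiltonian cycle, which a strongly connected graph need not have (for example, two $2$-cycles sharing a node). The witness would also have to exclude competitors of arbitrary support. Even the paper's explicit directed-cycle analysis only rules out cycle-supported competitors.
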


All the proofs of theorems (if not given in the main body) are available in the appendix. The key idea in the proof is based on forming a system of linear equations according to \eqref{eq:mean_steady}, \eqref{eq:lyapanouv}, \eqref{eq:mean_intervened}, and \eqref{eq:Lyapanouv_intervened}, showing that any possible solution for this set of equations should be a scale of the true parameters. In particular, let
\[
\Theta
=
\begin{bmatrix}
\operatorname{vec}(\boldsymbol{\Lambda})\\[2pt]
\mathbf{b}\\[2pt]
\mathbf{d}
\end{bmatrix}
\in\mathbb{R}^{p},
\]
where $p:=n^{2}+2n$, and
$\mathbf d:=(d_{1},\dots ,d_{n})^{\!\top},$
where $d_{k}:=D_{kk}>0$. Moreover, the operator $\operatorname{vec}$ stacks the columns of its matrix
argument. Therefore, $\operatorname{vec}(\boldsymbol{\Lambda})\in\mathbb{R}^{n^{2}}$. Now, we can write the linear equations in the following form: $\mathbf{A}\,\Theta = \mathbf{0}$ where $\mathbf{A}\in\mathbb{R}^{\,m\times p}$ (refer to Appendix \ref{app:linear_system} for the definition of $\mathbf{A}$) and $m=n^{2}+3n$. In the proof, we show that every admissible solution of this linear system is a
positive scaling of the true parameters.

\begin{remark}
The nondegeneracy assumptions used in the proofs include spectral/rank conditions on moment-derived linear
systems. We do not
prove their genericity in this work; instead, we
evaluate them numerically in
Appendix~\ref{app:Spec_assumption}. The precise statements of the assumptions and
their roles in the proofs are deferred to Appendix~\ref{app:proofs} and
Appendix~\ref{app:spec_assumption_role}, respectively. Establishing genericity
of these conditions alone is left as a
direction for future work. For SCCs whose graph is a directed cycle,
Appendix~\ref{app:spec_assumption_role} provides additional justification for
the relevant injectivity condition.
\end{remark}

\subsection{General Case}
In the general case, where the graph 
$G(\mathbf{\Lambda})$ has multiple SCCs, assume that the intervention set 
$\mathcal{I}$ contains at least one intervention targeting a variable inside each SCC. Moreover, the DAG of SCCs is connected and there is exactly one root SCC. Under these conditions and certain spectral/rank non-degeneracy assumptions (see  Assumptions~\ref{ass:moment-nondeg} and ~\ref{assum:lambda_W}, in the appendix), we show that the parameters of the OU process can be recovered up to a global scaling. The key idea is to first recover a topological ordering over the SCCs by inspecting which means change under each intervention; this structural recovery step does not require the spectral/rank
nondegeneracy assumptions. The following theorem allows us to identify the set of variables in each SCC, as well as a topological ordering over the SCCs. With this structural information in hand, we then recursively identify the model parameters for each SCC by conditioning on previously resolved components, thereby reducing the multi-SCC case to a sequence of single-SCC problems.

\begin{theorem}\label{th:learning_SCC_mu}
Let $C$ be an SCC in $G(\boldsymbol{\Lambda})$ and let $i\in C$. 
Consider the intervention on $i$ and let $\boldsymbol{\mu}$ and $\boldsymbol{\mu}^{(i)}$ be the observational and interventional steady-state means, respectively. 

\smallskip
\noindent\emph{(Non-null case).} If the $i$-th row is non-null, i.e., $\exists\,j\neq i$ with $\Lambda_{ij}\neq 0$, then, generically, for the set of parameters $(\boldsymbol{\Lambda},\mathbf b)$ consistent with $G(\boldsymbol{\Lambda})$,
\[
\mu^{(i)}_k \neq \mu_k 
\quad\Longleftrightarrow\quad
k\in \mathrm{Desc}(C),
\]
where $\mathrm{Desc}(C)$ denotes the set of nodes in SCCs reachable from $C$ in the DAG of SCCs (including $C$ itself).

\noindent\emph{(Null case).} If the $i$-th row has no off-diagonals, i.e., $\Lambda_{ij}=0$ for all $j\neq i$, then $\boldsymbol{\mu}^{(i)}=\boldsymbol{\mu}$.
\end{theorem}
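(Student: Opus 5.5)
The plan is to write the interventional drift as a rank-one modification of the original one and use the resulting difference equation for the means. Let $\mathbf r^\top$ denote the off-diagonal part of the $i$-th row of $\boldsymbol{\Lambda}$, i.e.\ $r_j=\Lambda_{ij}$ for $j\neq i$ and $r_i=0$. Then $\widetilde{\boldsymbol{\Lambda}}^{(i)}=\boldsymbol{\Lambda}-\mathbf e_i\mathbf r^\top$. Subtracting $\boldsymbol{\Lambda}\boldsymbol{\mu}=\mathbf b$ from $\widetilde{\boldsymbol{\Lambda}}^{(i)}\boldsymbol{\mu}^{(i)}=\mathbf b$ gives
\[
\widetilde{\boldsymbol{\Lambda}}^{(i)}\big(\boldsymbol{\mu}^{(i)}-\boldsymbol{\mu}\big)=\mathbf e_i\,(\mathbf r^\top\boldsymbol{\mu}),
\qquad\text{so}\qquad
\boldsymbol{\mu}^{(i)}-\boldsymbol{\mu}=(\mathbf r^\top\boldsymbol{\mu})\,\big(\widetilde{\boldsymbol{\Lambda}}^{(i)}\big)^{-1}\mathbf e_i .
\]
The null case follows immediately: $\mathbf r=\mathbf 0$ gives $\boldsymbol{\mu}^{(i)}=\boldsymbol{\mu}$. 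In the non-null case, the question reduces to the support of the column $\mathbf v:=(\widetilde{\boldsymbol{\Lambda}}^{(i)})^{-1}\mathbf e_i$, together with the scalar $\mathbf r^\top\boldsymbol{\mu}=\sum_{j\neq i}\Lambda_{ij}\mu_j$.

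For the scalar, I will argue genericity directly. $\boldsymbol{\mu}=\boldsymbol{\Lambda}^{-1}\mathbf b$ is linear in $\mathbf b$, so $\mathbf r^\top\boldsymbol{\Lambda}^{-1}\mathbf b$ is a linear form in $\mathbf b$. This form vanishes identically only if $\mathbf r^\top\boldsymbol{\Lambda}^{-1}=\mathbf 0$, which is impossible since $\mathbf r\neq\mathbf 0$ and $\boldsymbol{\Lambda}$ is invertible. Hence $\mathbf r^\top\boldsymbol{\mu}\neq 0$ off a proper algebraic subset, and in the non-null case $\operatorname{supp}(\boldsymbol{\mu}^{(i)}-\boldsymbol{\mu})=\operatorname{supp}(\mathbf v)$ generically.

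For the support of $\mathbf v$, the key observation is that $G(\widetilde{\boldsymbol{\Lambda}}^{(i)})$ is $G(\boldsymbol{\Lambda})$ with all edges into $i$ deleted (self-loop kept). Write $\widetilde{\boldsymbol{\Lambda}}^{(i)}=\mathbf S-\mathbf N$, with $\mathbf S$ the diagonal part, and use the Neumann/adjugate expansion. The entry $v_k$ is then a sum over directed walks from $i$ to $k$ in the intervened graph, or equivalently $v_k=\det(\text{minor})/\det\widetilde{\boldsymbol{\Lambda}}^{(i)}$, which by Cramer's rule and the path/linkage expansion of the minor is nonzero only if $k$ is reachable from $i$.

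\emph{``Only if'' direction.} Nodes reachable from $i$ in the intervened graph are a subset of those reachable in $G(\boldsymbol{\Lambda})$, which equals $\mathrm{Desc}(C)$. So $v_k=0$ for $k\notin\mathrm{Desc}(C)$, and this holds for every parameter value, not just generically.

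\emph{``If'' direction.} Reachability is preserved because deleting edges into $i$ does not affect paths that start at $i$: a simple path from $i$ never reenters $i$. Every $k\in\mathrm{Desc}(C)$, including $k\in C$, is reachable from $i$ in $G(\boldsymbol{\Lambda})$ by a simple path, hence also in the intervened graph. Then $v_k$, i.e.\ the cofactor $(-1)^{i+k}\det\widetilde{\boldsymbol{\Lambda}}^{(i)}_{[i\text{ col},k\text{ row removed}]}$, is a polynomial in the free entries of $\boldsymbol{\Lambda}$. To see that it is not identically zero, specialize: set all edge weights off a chosen simple path $i=p_0\to\cdots\to p_m=k$ to zero (keeping diagonals positive). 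The matrix becomes triangular-like with $v_k=\prod\Lambda_{p_{t+1}p_t}/\prod\Lambda_{p_tp_t}\neq 0$. A nonzero polynomial vanishes only on a measure-zero set, so generically $v_k\neq0$. Intersecting the finitely many generic conditions, over $k$ and the scalar condition, gives the equivalence.

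The main obstacle is the genericity argument within the graph-compatible class. The specialization sets some required nonzero entries to zero, which lies outside the class, but polynomial non-vanishing at one point of the ambient coordinate space of allowed entries suffices, since the class is an open dense subset of that space. One must also make sure that the set where positive stability of $\boldsymbol{\Lambda}$ and $\widetilde{\boldsymbol{\Lambda}}^{(i)}$ holds is open and nonempty, so that ``generic'' is meaningful there. I would handle this by noting that a diagonally dominant choice with small off-diagonals lies in it, and then appeal to the genericity definition in Appendix~\ref{app:genericity}.
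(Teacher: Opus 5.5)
Your proof is correct and follows essentially the paper's argument. The paper uses the mirror-image factorization $\boldsymbol{\mu}-\boldsymbol{\mu}^{(i)}=s(\boldsymbol{\mu}^{(i)})\,\boldsymbol{\Lambda}^{-1}\mathbf e_i$, so it needs no remark that deleting edges into $i$ preserves reachability from $i$, and it perturbs its path witness by $\varepsilon$ to get exact support, which your ambient-polynomial argument makes unnecessary. There are two harmless slips: your witness value should carry a factor $(-1)^m$, and $v_k$ is the cofactor obtained by deleting row $i$ and column $k$ of $\widetilde{\boldsymbol{\Lambda}}^{(i)}$, not column $i$ and row $k$.
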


\begin{remark}
\label{remak:scc_recovery}
For an intervention on $i\in C$, define $R_i:=\{\,j:\ \mu^{(i)}_j\neq \mu_j\,\}$. Suppose there is at least one intervention in each SCC. Based on the above theorem, intersecting and differencing the sets $R_i$ across interventions identifies the SCCs and yields a topological order for the DAG over SCCs (see Appendix \ref{app:rec_SCC} for more details).
\end{remark}

\begin{remark}
\begingroup\color{black}
The same SCC-recovery argument extends to non-null soft row interventions (at least one per SCC), where
\endgroup
the intervened drift differs from \(\boldsymbol{\Lambda}\) only in row \(i\), but
the changed row is not necessarily obtained by hard zeroing. The set of coordinates whose means change is again generically the
downstream region of the intervened SCC. The proof is given in
Appendix~\ref{app:learning_SCC_soft}.
\end{remark}

\begin{theorem}
  Consider the OU process in \eqref{eq:OU process} with true parameters 
  $\boldsymbol{\Lambda}, \mathbf{b}, \mathbf{D}$. Suppose we have access to the 
  observational steady-state mean and covariance $(\boldsymbol{\mu}, \boldsymbol{\Sigma})$, 
  as well as to interventional means and covariances $(\boldsymbol{\mu}^{(i)}, \boldsymbol{\Sigma}^{(i)})$ 
  for at least one intervention in each SCC of $G(\boldsymbol{\Lambda})$. 
\begingroup\color{black}
  Assume that, for each non-singleton SCC and its selected intervention,
the admissible isolated SCC model admits a realization satisfying
Assumptions~\ref{ass:moment-nondeg} and~\ref{assum:lambda_W}.
If the DAG over SCCs of $G(\boldsymbol{\Lambda})$ is connected with a single
root SCC, then, generically, any other admissible parameter triple
\endgroup
$(\widehat{\boldsymbol{\Lambda}},
  \widehat{\mathbf b},
  \widehat{\mathbf D})$
compatible with the same graph $G(\boldsymbol{\Lambda})$
  that yields the same observational and interventional moments must satisfy
  \[
  \widehat{\boldsymbol{\Lambda}} = c \boldsymbol{\Lambda}, \quad
  \widehat{\mathbf{b}} = c \mathbf{b}, \quad
  \widehat{\mathbf{D}} = c \mathbf{D},
  \]
  for some scalar $c > 0$.
\label{th:multiSCC}
\end{theorem}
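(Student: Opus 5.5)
We argue by induction along a topological order of the SCCs; Theorem~\ref{th:learning_SCC_mu} and Remark~\ref{remak:scc_recovery} make this order available from the interventional means. Write the SCCs as $C_1,\dots,C_K$, with $C_1$ the unique root.

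\emph{Root SCC.} Since $C_1$ has no incoming edges from other SCCs, the rows of $\boldsymbol{\Lambda}$ indexed by $C_1$ vanish outside the columns of $C_1$. This holds for every graph-compatible $\widehat{\boldsymbol{\Lambda}}$ as well.
\begin{itemize}
\item Hence the coordinates $\mathbf{x}_{C_1}$ follow an autonomous OU process with drift $\boldsymbol{\Lambda}_{C_1C_1}$, input $\mathbf b_{C_1}$ and diffusion $\mathbf D_{C_1}$.
\item Its stationary moments are the corresponding sub-blocks $\boldsymbol{\mu}_{C_1}$ and $\boldsymbol{\Sigma}_{C_1C_1}$.
\item An intervention on $i\in C_1$ only modifies a row inside $C_1$, so the same block-triangular reduction applies to $\boldsymbol{\mu}^{(i)}_{C_1}$ and $\boldsymbol{\Sigma}^{(i)}_{C_1C_1}$.
\end{itemize}
Applying Theorem~\ref{th:Single SCC} to this isolated model gives $\widehat{\boldsymbol{\Lambda}}_{C_1C_1}=c\boldsymbol{\Lambda}_{C_1C_1}$, $\widehat{\mathbf b}_{C_1}=c\mathbf b_{C_1}$ and $\widehat{\mathbf D}_{C_1}=c\mathbf D_{C_1}$ for some $c>0$. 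Two points need care here:
\begin{itemize}
\item The hypothesis that the isolated model admits a realization satisfying Assumptions~\ref{ass:moment-nondeg} and~\ref{assum:lambda_W} turns these into generic properties: the relevant rank conditions are polynomial (non-vanishing minors), so holding at one point implies they hold off a proper algebraic subset.
\item A singleton root is handled directly. The equations $\lambda\mu=b$ and $2\lambda\Sigma=d$ determine $(\lambda,b,d)$ up to scale.
\end{itemize}

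\emph{Inductive step.} Suppose all blocks with rows in $U:=C_1\cup\dots\cup C_{k-1}$ are known up to the common factor $c$, and consider $C:=C_k$. Its rows involve two kinds of unknowns: the internal block $\boldsymbol{\Lambda}_{CC}$ and the incoming blocks $\boldsymbol{\Lambda}_{CU}$, both restricted to the graph's support, together with $\mathbf b_C$ and $\mathbf d_C$. The equations we use are
\begin{itemize}
\item the mean equations for the rows in $C$: $\boldsymbol{\Lambda}_{CU}\boldsymbol{\mu}_U+\boldsymbol{\Lambda}_{CC}\boldsymbol{\mu}_C=\mathbf b_C$, in both regimes;
\item the Lyapunov equations restricted to the $(C,U)$ and $(C,C)$ blocks, which are linear in the unknowns of row block $C$ once $\widehat{\boldsymbol{\Lambda}}_{UU}=c\boldsymbol{\Lambda}_{UU}$ is substituted.
\end{itemize}
For the intervention on $i\in C$, the upstream moments are unchanged, because $C$ is not upstream of $U$. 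The row of $i$, however, loses its incoming edges, so we obtain a second, different linear system.

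Because the known upstream quantities all carry the factor $c$, the resulting inhomogeneous system in $\widehat\Theta_C$ has right-hand side $c$ times the true one. Uniqueness of its solution therefore gives $\widehat\Theta_C=c\,\Theta_C$ with the same $c$. Connectivity and the single-root hypothesis ensure every SCC is linked to $C_1$ through this chain. Without them a second root would carry its own independent scale, which is exactly the counterexample the paper mentions.

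\emph{Main obstacle.} The hard step is proving injectivity of the linear system in the inductive step, i.e.\ that its homogeneous part has trivial kernel. I would first reduce it to the single-SCC analysis: the $(C,C)$ Lyapunov block together with the mean equations is, up to the known terms contributed by $\boldsymbol{\Lambda}_{CU}$, the same operator that appears in Theorem~\ref{th:Single SCC}. That operator's kernel is one-dimensional, spanned by the isolated scaling direction. It then remains to show that the cross-block equations $(C,U)$ for the incoming couplings pin this direction down generically, for instance by exhibiting one parameter point where the combined matrix has full column rank and invoking the polynomial-genericity argument. The difficulty is that the $(C,U)$ block also involves $\boldsymbol{\Sigma}_{UU}$ and $\boldsymbol{\Sigma}_{CU}$, which couple the incoming weights to the internal parameters. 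Producing such a witness point for arbitrary sparsity patterns, for example by taking the incoming couplings small and perturbing, is the delicate part.
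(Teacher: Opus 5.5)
Your outer structure is sound: you induct along the topological order, handle the root with Theorem~\ref{th:Single SCC}, and observe that the row block $C$ gives an inhomogeneous linear system whose right-hand side is $c$ times the true one, so injectivity would force $\widehat\Theta_C=c\,\Theta_C$. The genuine gap is that this injectivity is the whole content of the theorem, and you leave it unproved. The reduction you sketch for it is also incorrect as stated.

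\begin{itemize}
\item You cannot treat the $\boldsymbol\Lambda_{CU}$ contributions as known terms, because $\boldsymbol\Lambda_{CU}$ is among the unknowns.
\item The single-SCC operator evaluated at the marginal moments $(\boldsymbol\mu_C,\boldsymbol\Sigma_{CC},\boldsymbol\mu^{(i)}_C,\boldsymbol\Sigma^{(i)}_{CC})$ has no ``isolated scaling direction'' once $\boldsymbol\Lambda_{CU}\neq0$. The true $(\boldsymbol\Lambda_{CC},\mathbf b_C,\mathbf d_C)$ is not in its kernel, and Theorem~\ref{th:Single SCC} says nothing about these moments, since they are not generated by an isolated SCC with diagonal diffusion.
\item Your picture is literally right only at the decoupled point $\boldsymbol\Lambda_{CU}=0$. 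There $\boldsymbol\Sigma_{CU}=\boldsymbol\Sigma^{(i)}_{CU}=0$, and the homogeneous $(C,U)$ block forces the $\boldsymbol\Lambda_{CU}$-component to vanish. The combined homogeneous kernel then contains the isolated scaling direction $(\boldsymbol\Lambda_{CC},0,\mathbf b_C,\mathbf d_C)$.
\end{itemize}

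So your combined system is rank-deficient exactly where the theorem's hypothesis gives you information, namely at an isolated realization satisfying Assumptions~\ref{ass:moment-nondeg} and~\ref{assum:lambda_W}. A full-rank witness would have to be a genuinely coupled point, and you do not produce one. The ``small coupling and perturb'' idea would require a first-order computation showing that the kernel vector does not persist, which you do not carry out.

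The paper avoids needing one witness by splitting the rank condition into pieces with different witnesses (with $T=C$, $P=U$).
\begin{itemize}
\item It first eliminates the incoming block through the cross-covariance identity $\boldsymbol\Lambda_{TP}+\boldsymbol\Lambda_{TT}\mathbf B=\mathbf B\boldsymbol\Lambda_{PP}-\mathbf B\mathbf D_P\boldsymbol\Sigma_{PP}^{-1}$. This leaves residual equations in $\boldsymbol\mu_{T\mid P}$ and $\boldsymbol\Sigma_{T\mid P}$ and their interventional analogues, with non-diagonal effective diffusion $\mathbf D_T+c\mathbf Q$ and offset $c\mathbf v$.
\item The stacked operator $\mathcal L_{\mathrm{blk}}$ on $\boldsymbol\Lambda_{-i,-i}$ is injective at the decoupled isolated realization, hence generically. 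This makes $\boldsymbol\Lambda_{TT}$ affine in $(c,\lambda_{ii})$.
\item The leftover scaling direction is pinned by the $i$-th mean-difference equation $\lambda_{ii}\bigl(\Delta\mu_i+\mathbf A_1(\boldsymbol\mu_{T\mid P})_{-i}\bigr)=c\,\Delta v_i-\mathbf A_0(\boldsymbol\mu_{T\mid P})_{-i}$. Its coefficient vanishes identically at decoupled points, so it needs its own coupled witness: an incoming edge into $T$ followed by a path to $i$, which gives $B_{ij}\neq0$ and $v_i\neq0$. This is where the single-root hypothesis is actually used.
\item The two rational conditions are then made to hold simultaneously, since a finite union of measure-zero sets has measure zero.
\end{itemize}

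Without this decomposition, or an explicit coupled witness for your combined matrix, your argument shows only that the theorem would follow from a rank condition, not that the rank condition holds generically.
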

\begingroup\color{black}


\begin{remark} The graph-compatibility assumption in Theorem~\ref{th:multiSCC} can be replaced by the requirement that every competing model under consideration satisfies the mean-support property of Theorem~\ref{th:learning_SCC_mu} for each performed non-null intervention, with descendant sets computed in its own graph. Keeping the remaining hypotheses unchanged, matching interventional means then forces competing drifts to respect the recovered block ordering (i.e., a competing graph cannot contain an edge from a later block to an earlier block). The single-SCC assumptions provide injectivity on the full target-block matrix space, so the recursive reconstruction does not require competitors to have exactly the same edges within blocks or between earlier and later blocks as the true graph. Although the mean-support property holds generically for each fixed graph, requiring it of every competitor is an additional restriction. Genericity of the true parameters alone does not exclude exceptional competitors that violate this property. \end{remark}
\endgroup

\begin{proof}
Based on Theorem \ref{th:learning_SCC_mu}, we can infer the SCCs and also a topological ordering over them if there is at least one intervention in each SCC. Let us denote these SCCs based on the topological ordering as $C_1,C_2,\cdots, C_K$ where $K$ is the number of components. Suppose that we already learned the parameters of the OU process in the components $C_1,C_2,\cdots,C_r$ up to some global scaling. Now, we aim for learning the parameters in $C_{r+1}$.

We partition the state vector according to the SCC decomposition of $G(\boldsymbol{\Lambda})$:
\[
\mathbf{x} =
\underbrace{\mathbf{x}_P}_{C_1 \cup \cdots \cup C_r}
\;\oplus\;
\underbrace{\mathbf{x}_T}_{C_{r+1}}
\;\oplus\;
\underbrace{\mathbf{x}_F}_{C_{r+2} \cup \cdots \cup C_K},
\]
where the operator $\oplus$ denotes concatenation of subvectors corresponding to disjoint index sets.

The steady-state mean and covariance matrices are partitioned accordingly:
\[
\boldsymbol{\mu} =
\begin{bmatrix}
\boldsymbol{\mu}_P \\
\boldsymbol{\mu}_T \\
\boldsymbol{\mu}_F
\end{bmatrix},
\qquad
\boldsymbol{\Sigma} =
\begin{bmatrix}
\boldsymbol{\Sigma}_{PP} & \boldsymbol{\Sigma}_{PT} & \boldsymbol{\Sigma}_{PF} \\
\boldsymbol{\Sigma}_{TP} & \boldsymbol{\Sigma}_{TT} & \boldsymbol{\Sigma}_{TF} \\
\boldsymbol{\Sigma}_{FP} & \boldsymbol{\Sigma}_{FT} & \boldsymbol{\Sigma}_{FF}
\end{bmatrix}.
\]
Everything inside the $P$-block is assumed known, i.e., the blocks
$\boldsymbol{\Lambda}_{PP}$, $\mathbf{b}_P$, and $\mathbf{D}_{P}$, up to the same scaling $c$.

Because $\mathbf{x}_P$ and $\mathbf{x}_T$ are jointly Gaussian, we can write:
\[
\mathbf{x}_T = \mathbf{B}\,\mathbf{x}_P + \mathbf{r}, \quad
\text{where } \mathbf{B} := \boldsymbol{\Sigma}_{TP} \boldsymbol{\Sigma}_{PP}^{-1}.  
\]
The regression matrix $\mathbf{B}$ is computable directly from the observed moments, with no dependence on model parameters. Moreover, 
the residual term $\mathbf{r} := \mathbf{x}_T - \mathbf{B} \mathbf{x}_P$ is a Gaussian variable with
\begin{equation}
\boldsymbol{\mu}_{T \mid P} = \boldsymbol{\mu}_T - \mathbf{B}\,\boldsymbol{\mu}_P,
\qquad
\boldsymbol{\Sigma}_{T \mid P} = \boldsymbol{\Sigma}_{TT} - \boldsymbol{\Sigma}_{TP} \boldsymbol{\Sigma}_{PP}^{-1} \boldsymbol{\Sigma}_{PT}.
\label{eq:muTPSigmaTP}
\end{equation}

Substituting the definitions of \( \mathbf{B} \) and \( \mathbf{r} \) into the OU dynamics yields:
\begin{equation}
\begin{split}
  \dot{\mathbf{r}}
  = 
  -\boldsymbol{\Lambda}_{TT} \mathbf{r}&
  +
  \bigl( -\boldsymbol{\Lambda}_{TP}
         - \boldsymbol{\Lambda}_{TT} \mathbf{B}
         + \mathbf{B} \boldsymbol{\Lambda}_{PP} \bigr) \mathbf{x}_P
  \\
  &+
  (\mathbf{b}_T - \mathbf{B} \mathbf{b}_P)
  +
  (\boldsymbol{\sigma}_T \dot{\mathbf{W}}_T - \mathbf{B} \boldsymbol{\sigma}_P\dot{\mathbf{W}}_P).
\end{split}
  \label{eq:rSDE_pre}
\end{equation}
\textcolor{black}{To identify the residual moment equations, we use the cross-covariance block of the Lyapunov equation:}
\[
\textcolor{black}{
\boldsymbol{\Lambda}_{TP}\boldsymbol{\Sigma}_{PP}
+
\boldsymbol{\Lambda}_{TT}\boldsymbol{\Sigma}_{TP}
+
\boldsymbol{\Sigma}_{TP}\boldsymbol{\Lambda}_{PP}^{\!\top}
=
\mathbf 0.
}
\]
\textcolor{black}{Using $\mathbf B=\boldsymbol{\Sigma}_{TP}\boldsymbol{\Sigma}_{PP}^{-1}$, this gives}
\[
\textcolor{black}{
\boldsymbol{\Lambda}_{TP}
+
\boldsymbol{\Lambda}_{TT}\mathbf B
=
-\mathbf B\boldsymbol{\Sigma}_{PP}
\boldsymbol{\Lambda}_{PP}^{\!\top}
\boldsymbol{\Sigma}_{PP}^{-1}.
}
\]
\textcolor{black}{Combining this with the parent Lyapunov equation}
\[
\textcolor{black}{
\boldsymbol{\Lambda}_{PP}\boldsymbol{\Sigma}_{PP}
+
\boldsymbol{\Sigma}_{PP}\boldsymbol{\Lambda}_{PP}^{\!\top}
=
\mathbf D_P,
}
\]
\textcolor{black}{we obtain}
\begin{equation}
\textcolor{black}{
\boldsymbol{\Lambda}_{TP}
+
\boldsymbol{\Lambda}_{TT}\mathbf B
=
\mathbf B\boldsymbol{\Lambda}_{PP}
-
\mathbf B\mathbf D_P\boldsymbol{\Sigma}_{PP}^{-1}.
}
\label{eq:cancel_identity}
\end{equation}
\textcolor{black}{ For the mean, using}
\[
\textcolor{black}{
\boldsymbol{\Lambda}_{TP}\boldsymbol{\mu}_P
+
\boldsymbol{\Lambda}_{TT}\boldsymbol{\mu}_T
=
\mathbf b_T,
\qquad
\boldsymbol{\mu}_T
=
\boldsymbol{\mu}_{T\mid P}
+
\mathbf B\boldsymbol{\mu}_P,
}
\]
\textcolor{black}{together with \eqref{eq:cancel_identity}, gives}
\begin{equation}
\textcolor{black}{
\boldsymbol{\Lambda}_{TT}\boldsymbol{\mu}_{T\mid P}
=
\mathbf b_T
-
\mathbf B
\left(
\mathbf b_P
-
\mathbf D_P\boldsymbol{\Sigma}_{PP}^{-1}\boldsymbol{\mu}_P
\right).
}
\label{eq:residual_mean_equation}
\end{equation}
\textcolor{black}{Similarly, for the residual covariance,}
\begin{equation}
\textcolor{black}{
\boldsymbol{\Lambda}_{TT}\boldsymbol{\Sigma}_{T\mid P}
+
\boldsymbol{\Sigma}_{T\mid P}\boldsymbol{\Lambda}_{TT}^{\!\top}
=
\mathbf D_T+\mathbf B\mathbf D_P\mathbf B^{\!\top}.
}
\label{eq:residual_cov_equation}
\end{equation}

Now, based on the first and second moments of the residual which are given above, we can identify the parameters of component $T$ and also $\boldsymbol{\Lambda}_{TP}$ up to the same scaling. In particular, we have:
\\
\textbf{$\mathbf{\Lambda}_{TT}$:}
\textcolor{black}{Note that the corresponding diffusion power matrix of the residual moment equation is $\mathbf{D}_T+\mathbf{B}\mathbf{D}_P\mathbf{B}^{\!\top}$, and therefore it is not necessarily diagonal. Nevertheless, the proof of Theorem~\ref{th:Single SCC} can be adapted to recover $\boldsymbol{\Lambda}_{TT}$ up to the same scaling $c$ by solving the residual moment equations; see Appendix~\ref{app:multiSCC}.}
\\
\textbf{$\mathbf{\Lambda}_{TP}$:}
\textcolor{black}{Having $\boldsymbol{\Lambda}_{PP}$, $\mathbf D_P$, and $\boldsymbol{\Lambda}_{TT}$ up to the same scaling $c$, ~\eqref{eq:cancel_identity} gives}
\[
\textcolor{black}{
\boldsymbol{\Lambda}_{TP}
=
\mathbf B\boldsymbol{\Lambda}_{PP}
-
\mathbf B\mathbf D_P\boldsymbol{\Sigma}_{PP}^{-1}
-
\boldsymbol{\Lambda}_{TT}\mathbf B.
}
\]
\textcolor{black}{Therefore, $\boldsymbol{\Lambda}_{TP}$ is recovered with the same scaling.}
\\
\textbf{$\mathbf{b}_T$ and $\boldsymbol{D}_T$:}
According to \eqref{eq:mean_steady}, we have
\[
\mathbf{b}_T
=
\boldsymbol{\Lambda}_{TT}\boldsymbol{\mu}_T
+
\boldsymbol{\Lambda}_{TP}\boldsymbol{\mu}_P .
\]
Since we recovered $\boldsymbol{\Lambda}_{TT}$ and $\boldsymbol{\Lambda}_{TP}$ with the same scaling factor, $\boldsymbol{b}_T$ is identifiable with the same scaling from the above equation.

Regarding $\boldsymbol{\sigma}_T$ (or diffusion power matrix $\mathbf{D}_T$), from the Lyapunov equation for the covariance matrix of residual (in other words, $\boldsymbol{\Sigma}_{T|P}$), we have:
\[
\boldsymbol{\Lambda}_{TT} \boldsymbol{\Sigma}_{T \mid P}
+ \boldsymbol{\Sigma}_{T \mid P} \boldsymbol{\Lambda}_{TT}^{\!\top}
= \mathbf{D}_T + \mathbf{B}\,\mathbf{D}_P\,\mathbf{B}^{\!\top}.
\]
\textcolor{black}{Therefore,}
\[
\textcolor{black}{
\mathbf D_T
=
\operatorname{diag}\!\left(
\boldsymbol{\Lambda}_{TT}\boldsymbol{\Sigma}_{T\mid P}
+
\boldsymbol{\Sigma}_{T\mid P}\boldsymbol{\Lambda}_{TT}^{\!\top}
-
\mathbf B\mathbf D_P\mathbf B^{\!\top}
\right),
}
\]
\textcolor{black}{and hence $\mathbf D_T$ is learned with the same scaling. This completes the recursive step and the proof.}
\end{proof}

\begin{remark}
The connectedness assumption in Theorem~\ref{th:multiSCC} is necessary. 
If the DAG over SCCs is disconnected, then each disconnected part of the DAG can only be 
identified up to its own scaling factor. Moreover, if there are multiple root SCCs, 
the parameters of the OU process may not be recovered up to a single global scaling. 
An example illustrating this case is provided in Appendix~\ref{app:Example}.
\end{remark}

Building on the above, we design a recursive algorithm that proceeds in two stages (the pseudo-code is given in Algorithm \ref{alg:ROULI}):
\begin{itemize}
    \item In the first stage, we use the changes in interventional means to identify the SCCs of the drift graph \(G(\boldsymbol{\Lambda})\), along with a topological ordering over these components. This structural information is inferred using Theorem~\ref{th:learning_SCC_mu}.
    
    \item In the second stage, we iterate over the SCCs in topological order. For each component \(T\), we treat the union of previously processed SCCs as \(P\), and condition on \(\mathbf{x}_P\) to isolate the marginal dynamics of \(\mathbf{x}_T\). Using the conditional moments \((\boldsymbol{\mu}_{T\mid P}, \boldsymbol{\Sigma}_{T\mid P})\), we first recover \(\boldsymbol{\Lambda}_{TT}\) up to a scaling. Then, leveraging the structure of the OU dynamics, we recover \(\boldsymbol{\Lambda}_{TP},\; \boldsymbol{b}_T\), and \(\boldsymbol{D}_T\) up to the same scaling. The procedure continues recursively until all components have been identified.
\end{itemize}

\begin{algorithm}[t]
\caption{Recursive OU Learning with Interventions}
\begin{algorithmic}[1]
\STATE \textbf{Input:} Observational and interventional means and covariances \((\boldsymbol{\mu}, \boldsymbol{\Sigma}), \{(\boldsymbol{\mu}^{(i)}, \boldsymbol{\Sigma}^{(i)})\}_{i \in \mathcal{I}}\)

\vspace{1mm}
\STATE \textbf{Phase 1: Learn SCCs}
\STATE Identify SCCs, \(C_1, C_2, \dots, C_K\), and a topological ordering over them from mean changes using Theorem~\ref{th:learning_SCC_mu}

\vspace{1mm}
\STATE \textbf{Phase 2: Recover parameters per SCC}
\FOR{each component \(T = C_j\) in topological order}
    \STATE Let \(P := C_1 \cup \cdots \cup C_{j-1}\) be the union of previously processed SCCs
    \STATE Compute conditional mean \(\boldsymbol{\mu}_{T \mid P}\) and covariance \(\mathbf{\Sigma}_{T \mid P}\) according to \eqref{eq:muTPSigmaTP}
    \STATE Recover \(\boldsymbol{\Lambda}_{TT}\) using the interventional moments with the same scaling in $P$
    \STATE Recover \(\boldsymbol{\Lambda}_{TP},\; \boldsymbol{b}_T,\; \boldsymbol{D}_T\) up to the same scaling using cross-covariances and stationary conditions
\ENDFOR
\STATE \textbf{Output:} Drift matrix \(\boldsymbol{\Lambda}\), input vector \(\boldsymbol{b}\), and diffusion matrix \(\boldsymbol{D}\) (up to a global scaling)
\end{algorithmic}
\label{alg:ROULI}
\end{algorithm}

\begin{remark}
Rather than directly running Algorithm~\ref{alg:ROULI}, one can alternatively
construct a global linear system from the observational and interventional
first- and second-moment equations, as in the proof of
Theorem~\ref{th:Single SCC}. If this system has a one-dimensional null space,
then all parameters are identified up to a global scaling. Theorems
\ref{th:Single SCC} and~\ref{th:multiSCC} provide conditions under which this
one-dimensional ambiguity is the only ambiguity, while
Theorem~\ref{th:learning_SCC_mu} provides the SCC-level structural information
used by the recursive procedure.
\end{remark}

\begin{remark}\label{rem:moment_invariance}
Although the parameters $(\boldsymbol{\Lambda}, \boldsymbol{b}, \boldsymbol{D})$ are identifiable
only up to a global scaling, this ambiguity does not affect the prediction of stationary
moments. In particular, our identifiability result implies that any alternative parameter triple consistent with the
observational and interventional steady state is of the form $\hat{\boldsymbol{\Lambda}} = c\boldsymbol{\Lambda}, 
\hat{\boldsymbol{b}} = c\boldsymbol{b}, 
\hat{\boldsymbol{D}} = c\boldsymbol{D},$
for some scalar $c>0$. For the steady-state mean, we have $\boldsymbol{\mu}=\boldsymbol{\Lambda}^{-1}\boldsymbol{b}$, and
$\hat{\boldsymbol{\mu}}
=\hat{\boldsymbol{\Lambda}}^{-1}\hat{\boldsymbol{b}}
=(c\boldsymbol{\Lambda})^{-1}(c\boldsymbol{b})
=\boldsymbol{\Lambda}^{-1}\boldsymbol{b}
=\boldsymbol{\mu},$
so the scaling cancels out. The same holds for the steady-state covariance where $\boldsymbol{\Sigma}$ is defined as the unique
solution of the Lyapunov equation
$\boldsymbol{\Lambda}\boldsymbol{\Sigma}+\boldsymbol{\Sigma}\boldsymbol{\Lambda}^\top=\boldsymbol{D},$
under the assumption that $\boldsymbol{\Lambda}$ is stable. Under the scaled parameters,
the covariance $\hat{\boldsymbol{\Sigma}}$ satisfies
$
\hat{\boldsymbol{\Lambda}}\hat{\boldsymbol{\Sigma}}
+\hat{\boldsymbol{\Sigma}}\hat{\boldsymbol{\Lambda}}^\top
=\hat{\boldsymbol{D}}
\;\Longleftrightarrow\;
c\boldsymbol{\Lambda}\hat{\boldsymbol{\Sigma}}
+\hat{\boldsymbol{\Sigma}}c\boldsymbol{\Lambda}^\top
=c\boldsymbol{D}
\;\Longleftrightarrow\;
\boldsymbol{\Lambda}\hat{\boldsymbol{\Sigma}}
+\hat{\boldsymbol{\Sigma}}\boldsymbol{\Lambda}^\top
=\boldsymbol{D}$.
By uniqueness of the solution to the Lyapunov equation for a stable drift matrix, we obtain
$\hat{\boldsymbol{\Sigma}}=\boldsymbol{\Sigma}$. The same arguments can be applied to an intervened drift
$\widetilde{\boldsymbol{\Lambda}}^{(i)}$, so the predicted moments for unseen
interventions are also invariant to the global scaling.
\end{remark}

\section{Learning Algorithm}
In finite-sample settings, plugging in empirical moments generally yields full-column-rank systems, which admit only the trivial solution of zero vector. 
Nevertheless, the identifiability result in Theorem \ref{th:Single SCC} motivates replacing true moments with empirical ones and relaxing the equations into a least-squares objective.
Specifically, in the observational case, the mean vector $\boldsymbol{\mu}$ and covariance matrix $\boldsymbol{\Sigma}$ satisfy the linear system in \eqref{eq:mean_steady}
and \eqref{eq:lyapanouv}, respectively. Moreover, for each intervention $i \in \mathcal{I}$, the mean vector $\boldsymbol{\mu}^{(i)}$ and $\boldsymbol{\Sigma}^{(i)}$ satisfy equations in \eqref{eq:mean_intervened} and \eqref{eq:Lyapanouv_intervened}, respectively.

For the set of free parameters $\Theta = (\operatorname{vec}(\boldsymbol{\Lambda}), \mathbf{b}, \mathbf{d})$, where $\mathbf{d}$ is the diagonal of matrix $\mathbf{D}$, we define the following least-square objective:
\begin{equation}
\begin{split}
&\mathcal{L}(\Theta)
= 
\alpha_O \left(
\left\| \boldsymbol{\Lambda}\,\hat{\boldsymbol{\mu}} - \mathbf{b} \right\|_2^2
+ \left\| \boldsymbol{\Lambda}\,\hat{\boldsymbol{\Sigma}} + \hat{\boldsymbol{\Sigma}}\,\boldsymbol{\Lambda}^{\!\top} - \mathbf{D} \right\|_F^2
\right) \\
&+ \alpha_I \sum_{i \in \mathcal{I}}
\left(
\left\| \widetilde{\boldsymbol{\Lambda}}^{(i)} \,\hat{\boldsymbol{\mu}}^{(i)} - \mathbf{b} \right\|_2^2
\right)\\
&+ \alpha_I \sum_{i \in \mathcal{I}}
\left( \left\| \widetilde{\boldsymbol{\Lambda}}^{(i)} \,\hat{\boldsymbol{\Sigma}}^{(i)} + \hat{\boldsymbol{\Sigma}}^{(i)} (\widetilde{\boldsymbol{\Lambda}}^{(i)})^{\!\top} - \mathbf{D} \right\|_F^2
\right),
\end{split}
\end{equation}
where \(\alpha_O, \alpha_I > 0\) are weighting coefficients; we often give a larger weight to observational terms since observational samples are often more abundant and their estimates are more accurate.
Moreover, $\hat{\boldsymbol{\mu}}$, $\hat{\boldsymbol{\Sigma}}$, $\hat{\boldsymbol{\mu}}^{(i)}$, $\hat{\boldsymbol{\Sigma}}^{(i)}$, $i\in \mathcal{I}$ are the unbiased estimates of first and second moments in the observational and interventional settings and $\|\cdot\|_F$ is the Frobenius norm.
To promote sparsity in the drift graph, we add an $\ell_1$ penalty on the off-diagonal elements of $\boldsymbol{\Lambda}$:
$\mathcal{R}(\boldsymbol{\Lambda}) = \gamma \sum_{i \neq j} |\Lambda_{ij}|,$
where $\gamma>0$ controls the sparsity level. Therefore, the optimization problem becomes
\begin{equation}
\min_{\Theta}
\ \mathcal{L}(\Theta) + \mathcal{R}(\boldsymbol{\Lambda}),
\end{equation}
\begingroup\color{black}
subject to positive stability of $\boldsymbol{\Lambda}$ and the intervened drifts, with $d_k > 0$ for all $k$. Because the moment equations are homogeneous in
\((\boldsymbol{\Lambda},\mathbf b,\mathbf D)\), we can impose a scale-fixing
normalization, such as \(\operatorname{tr}(\boldsymbol{\Lambda})=n\), to remove the global
scaling ambiguity. 
\endgroup

\section{Related Work}
Herein, we mainly review methods that perform inference from stationary distributions of dynamical systems, rather than from full trajectories\footnote{There are several surveys on causal discovery from temporal data (e.g., \cite{gong2024causal,hasansurvey})}.

\textbf{Identifiability and learning from steady state in SDE models.}
A line of work on graphical continuous Lyapunov models treats the steady-state covariance as the solution of a continuous Lyapunov equation. In this setting, \cite{varando2020graphical} proposed an $l_1$-regularized estimator to recover sparse drift structure from observational snapshots. \cite{dettling2023identifiability} subsequently analyzed identifiability in this framework, proving that when only observational covariances are available and the diffusion matrix is known, the drift is globally identifiable from the covariance if and only if the drift graph is simple, meaning it contains no directed two-cycles. While these results provide valuable insights, they are limited to observational data, cannot recover models with two-cycles in the drift, and do not handle unknown diffusion. In contrast, our work incorporates interventional data, allows unknown diagonal diffusion, and shows that a single intervention in each SCC (under some conditions on the DAG of SCCs) suffices for recovery up to a global scaling. More recently, \cite{dettling2024lasso} proposed a Lasso-based estimator for recovering the drift structure of linear SDEs from stationary observational covariances and analyzed its identifiability properties. However, their framework does not incorporate interventions. \textcolor{black}{Very recently, \cite{zweig2025towards} studied identifiability of linear SDEs under mean-shift interventions, assuming a low-rank drift matrix. In contrast, we allow general drift matrices and base identifiability on the SCC structure of the drift graph rather than on a low rank assumption.}

Other approaches aim to learn stationary SDE models without focusing on identifiability. \cite{lorch2024causal} proposed a kernel deviation-from-stationarity objective that measures how far a candidate SDE’s stationary distribution deviates from the empirical distribution. Their framework can accommodate cycles and generalizes well to unseen interventions, but its goal is density fitting in reproducing kernel Hilbert spaces rather than moment-based parameter identification. Our work differs by deriving explicit graphical conditions under which the OU parameters are recoverable from first and second moments.

Recent empirical work has also combined steady-state dynamics with interventions. \cite{rohbeck2024bicycle} introduced Bicycle, a model in which interventions alter a subset of parameters. Bicycle achieves good performance in both structure recovery and prediction under out-of-distribution interventions in single-cell datasets. However, it is designed primarily as a predictive model and provides identifiability guarantees only when interventions are performed on all coordinates except one. In contrast, we gave identifiability results under substantially weaker requirements, i.e., one intervention per SCC.

\cite{boege2025conditional} have studied the conditional independence relations implied by sparsity in the drift of stationary multivariate diffusions. Their results link graph structure to conditional independencies in the stationary state distribution. Nonetheless, this line of work does not address the recovery of drift or diffusion parameters, nor how interventions could enable identifiability. 

Finally, there exists related work for deterministic linear ODEs rather than stochastic OU models. For example, \cite{wang2024identifiability} analyzed the identifiability of linear ODE systems with hidden confounders from time series or discretely sampled trajectories and derived conditions under which latent confounding can be resolved. 

\textbf{Gene perturbation prediction from steady state.}
In systems biology, several methods aim to predict the effects of genetic perturbations directly from stationary data. For instance, \cite{sethuraman2023nodags} proposed NODAGS-Flow, which learns nonlinear cyclic causal structures from interventional steady-state gene expression by fitting residual normalizing flows, producing predictions and plausible graphs. While providing good performance in practice, NODAGS-Flow is a likelihood-based method without formal identifiability guarantees on recovering the parameters of the underlying system. 

Other works focus on high-performing predictive models. For instance, \cite{roohani2022gears} proposed GEAR, which combines graph neural networks with prior biological network information to predict transcriptional responses to single or multiple gene perturbations, showing improved generalization to unseen combinations. \cite{yu2025perturbnet} proposed PerturbNet, which uses conditional invertible flows to model the distributional effects of unseen perturbations. PerturBench \citep{wu2024perturbench} provides a unified benchmark suite for single-cell perturbation modeling, facilitating comparison between predictive models. 
\begin{figure*}[t]
  \centering
  \begin{subfigure}[t]{.25\textwidth}
    \centering
    \includegraphics[width=\linewidth]{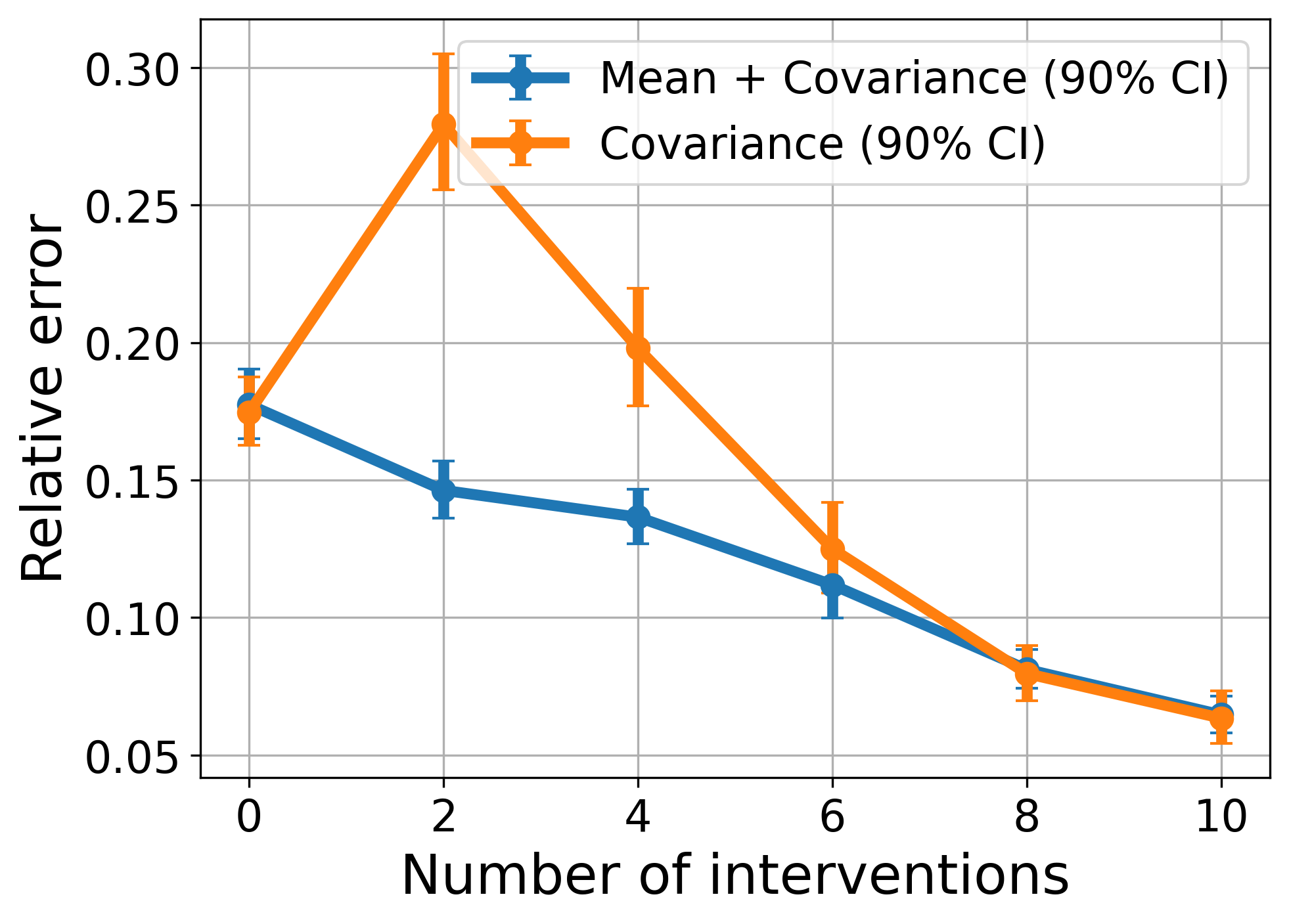}
    \caption{$\boldsymbol{\Lambda}$}
    \label{fig:sub1}
  \end{subfigure}%
  \begin{subfigure}[t]{.25\textwidth}
    \centering
    \includegraphics[width=\linewidth]{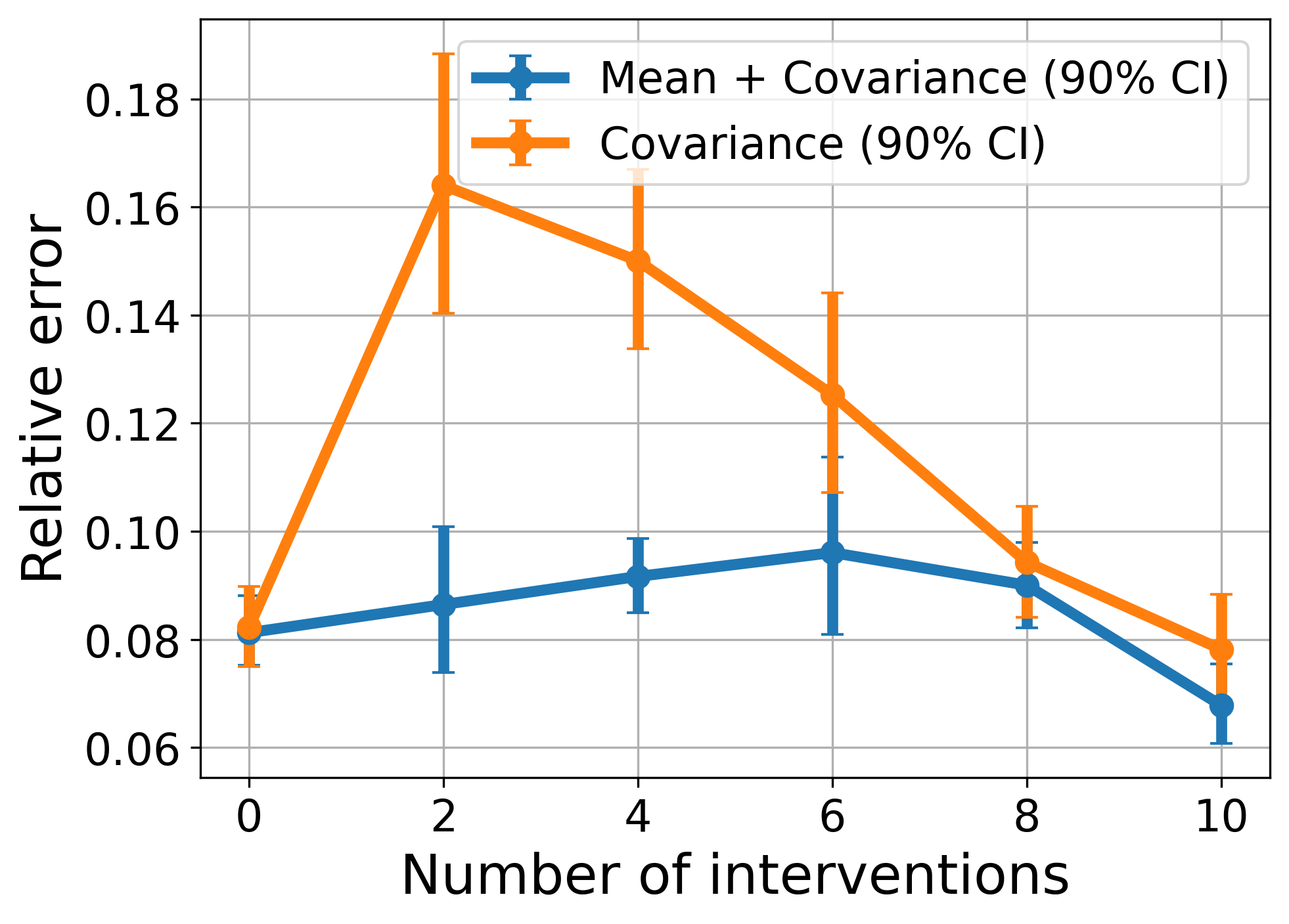}
    \caption{$\boldsymbol{D}$}
    \label{fig:sub2}
  \end{subfigure}%
  \begin{subfigure}[t]{.25\textwidth}
    \centering
    \includegraphics[width=\linewidth]{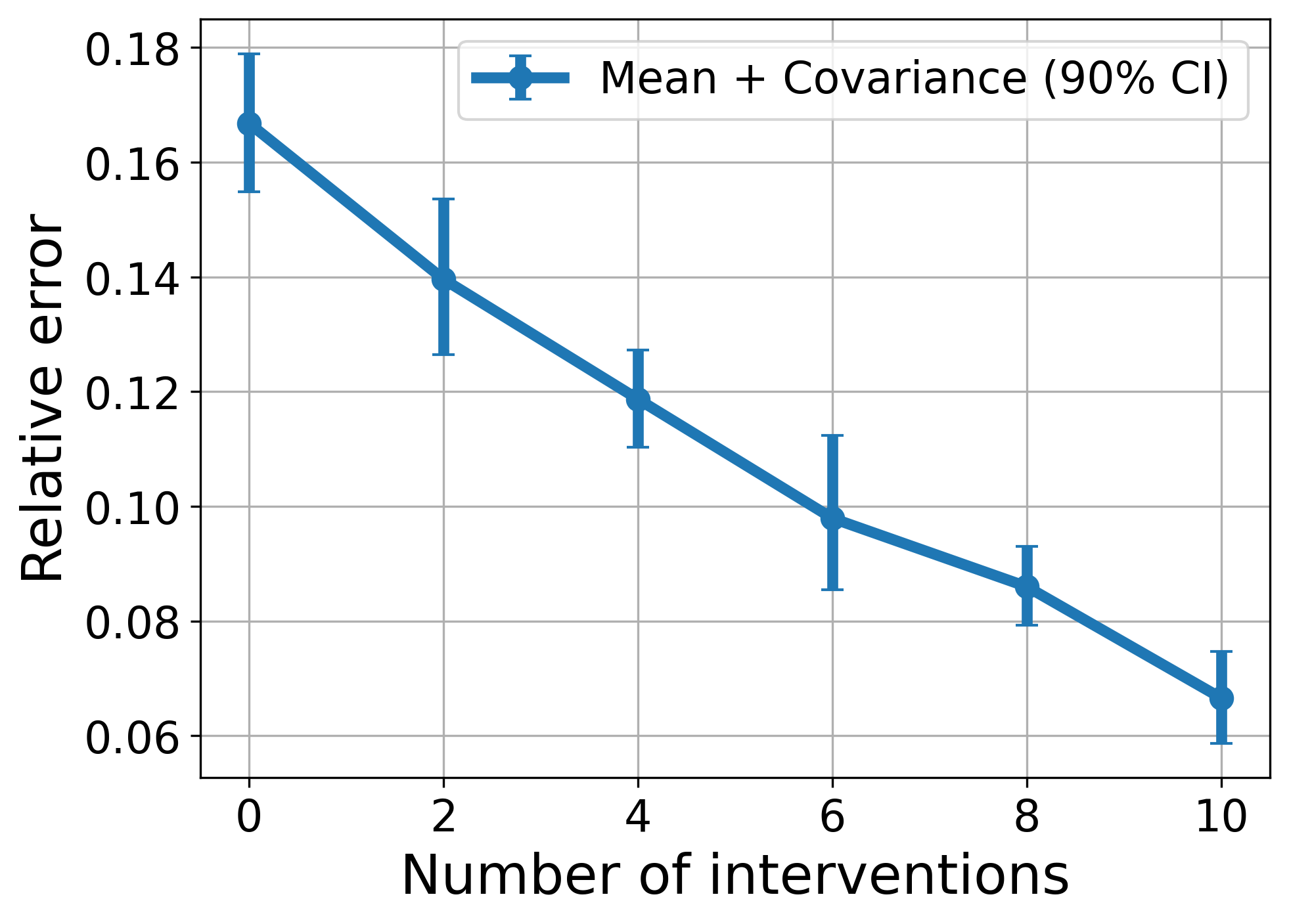}
    \caption{$\boldsymbol{b}$}
    \label{fig:sub3}
  \end{subfigure}%
  \begin{subfigure}[t]{.25\textwidth}
    \centering
    \includegraphics[width=\linewidth]{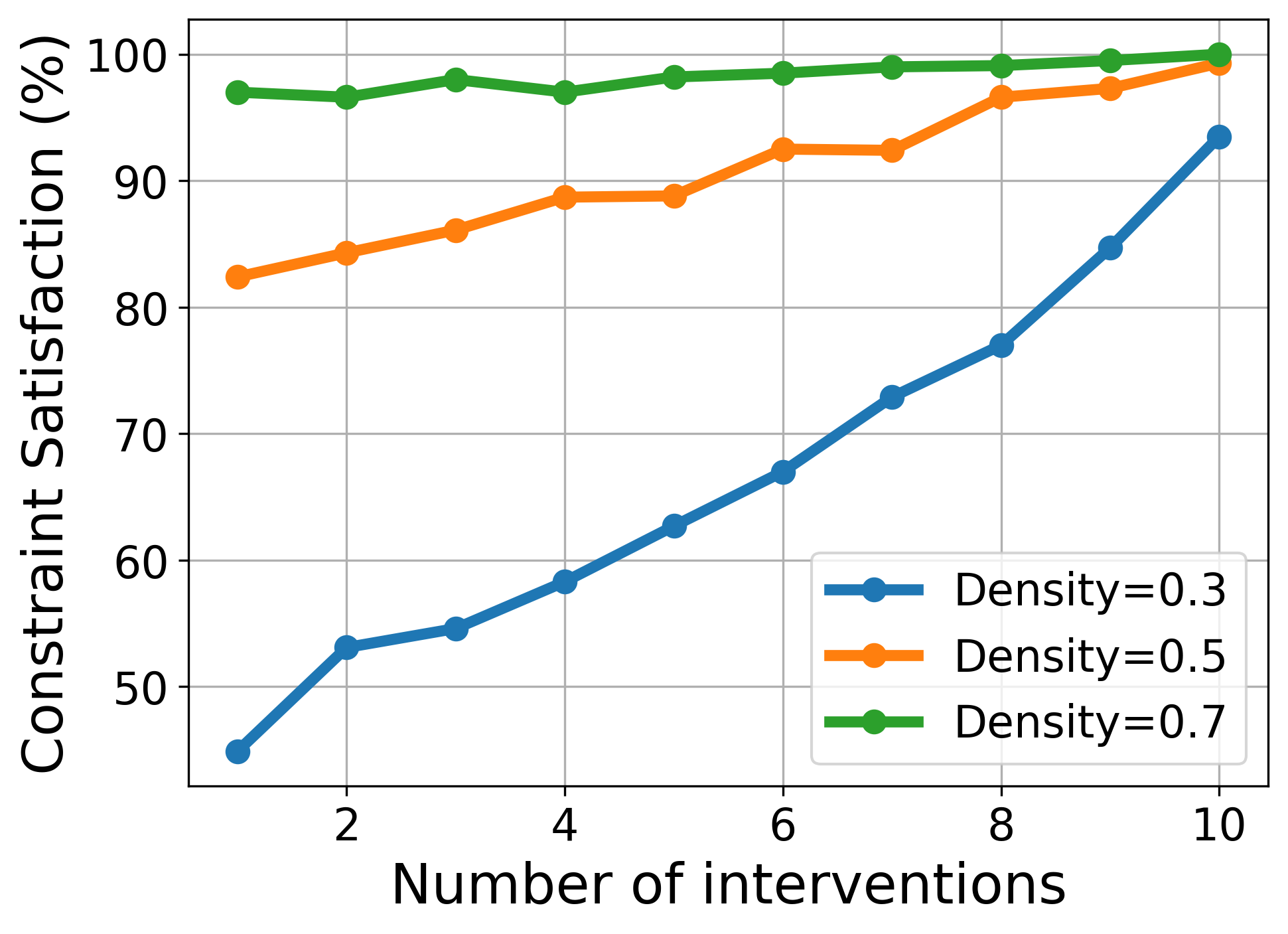}
    \caption{Constraints satisfaction}
    \label{fig:sub4}
  \end{subfigure}

  \caption{(a–c) Relative errors of estimated parameters of the OU process versus the number of interventions, and (d) percentage of instances satisfying graphical conditions in Theorem~\ref{th:multiSCC}.}
  \label{fig:all}
\end{figure*}

\section{Experiments}
\paragraph{Scope of empirical evaluation.}
The goal of this section is to validate the identifiability results (e.g.,  parameter recovery). Regarding comparisons, we focus on methods that assume linear SDE models and operate on
stationary data. In particular, \cite{rohbeck2024bicycle} considered a Lyapunov-based loss that
corresponds to our objective without the mean terms; we include this variant explicitly as an
ablation (\emph{Covariance only}). \cite{varando2020graphical} proposed an
$\ell_1$-regularized estimator based solely on observational covariances (without interventions),
and is therefore not designed for the interventional setting. \cite{lorch2024causal}
also proposed a loss for linear SDEs, but their interventions are implemented as mean shifts rather
than the hard interventions assumed in our work. We adapted their loss to our hard-intervention
setting and evaluated it on synthetic data. However, we did not observe
improvements in performance as the number of interventions increased, and hence did not
report its results.

\textbf{Synthetic Data.}
We generated synthetic datasets by simulating steady-state observations from a stable linear stochastic system. 
The drift matrix $\boldsymbol{\Lambda} \in \mathbb{R}^{n\times n}$ was initialized as a zero matrix, then each 
off-diagonal entry was set to a Gaussian random value with probability $\rho$ and left at zero otherwise, where $\rho$ 
is the desired density level. 
For each row, the diagonal entry was set to be larger than the sum of the absolute values of 
off-diagonal entries in that row by at least a positive margin, ensuring stability. 
The diffusion power $\mathbf{D}$ was 
diagonal with strictly positive entries sampled uniformly from $[d_{\min}=0.2, d_{\max}=0.4]$, and the bias vector $\mathbf{b}$ 
was drawn uniformly from $[b_{\min}=0.2, b_{\max}=1.5]$.
The observational steady-state mean and covariance were then computed by solving the corresponding linear and Lyapunov 
equations for the given $\boldsymbol{\Lambda}$, $\mathbf{b}$, and $\mathbf{D}$. Interventions were simulated by zeroing 
all off-diagonal entries in a selected row of $\boldsymbol{\Lambda}$ while keeping the diagonal entry unchanged and 
leaving $\mathbf{b}$ unchanged. For the observational setting and each intervention, samples were drawn from the corresponding multivariate normal distribution. All the implementations are available in the following link: \href{https://github.com/sabersalehk/OU_ID}{https://github.com/sabersalehk/OU\_ID}. More details of experiments are given in Appendix \ref{app:experiments}.

\begin{table*}[t]
\centering
\caption{Evaluation of DES and PDS on the three Perturb-seq datasets. The interventional mean is shaded to indicate oracle access to interventional data.}
\label{tab:realdata}
\begin{tabular}{lcccccc}
\toprule
 & \multicolumn{2}{c}{Co-Culture} & \multicolumn{2}{c}{Control} & \multicolumn{2}{c}{IFN-$\gamma$} \\
\cmidrule(lr){2-3} \cmidrule(lr){4-5} \cmidrule(lr){6-7}
Method & DES & PDS & DES & PDS & DES & PDS \\
\midrule
Observational mean       & 0.33 & 0.57 & 0.33 & 0.57 & 0.35 & 0.67 \\
Ours (``Covariance") & 0.51 & 0.43 & \textbf{0.46} & 0.43 & 0.42 & 0.43 \\
Ours (``Mean + Covariance")         & 0.43 & \textbf{0.67} & 0.33 & 0.63 & \textbf{0.47} & \textbf{0.70} \\
\rowcolor{gray!15} Interventional mean & \textbf{0.52} & 0.57 & 0.42 & \textbf{0.67} & 0.36 & \textbf{0.70} \\
\bottomrule
\end{tabular}
\end{table*}

In Figure~\ref{fig:all} (a-c), we report the estimation error of 
$\boldsymbol{\Lambda}$, $\mathbf{D}$, and $\mathbf{b}$ as a function of 
the number of interventions with $n=10$. 
To evaluate recovery up to scaling, we compute the scaling factor 
$c = \langle \hat{\mathbf{A}}, \mathbf{A} \rangle / \langle \mathbf{A}, \mathbf{A} \rangle$ for each parameter 
matrix/vector $\mathbf{A} \in \{\boldsymbol{\Lambda}, \mathbf{D}, \mathbf{b}\}$, 
where $\langle \mathbf{X}, \mathbf{Y} \rangle = \operatorname{tr}(\mathbf{X}^\top \mathbf{Y})$ for matrices and 
$\langle \mathbf{x}, \mathbf{y} \rangle = \mathbf{x}^\top \mathbf{y}$ for vectors. The relative error is then 
measured as $\|\hat{\mathbf{A}} - c\mathbf{A}\| / \|\mathbf{A}\|$. We apply this procedure separately 
to $\boldsymbol{\Lambda}$, $\mathbf{D}$, and $\mathbf{b}$. The results with 
90\% confidence intervals are given in blue curves with the legend 
``Mean and Covariance (90\% CI)''. As shown, for $\boldsymbol{\Lambda}$ and 
$\mathbf{b}$, the relative error decreases as the number of interventions increases.
For $\mathbf{D}$, we observe 
a small non-monotone effect: the error is initially low with no interventions, 
increases slightly for a few interventions, and then decreases again. 
One possible explanation is that $\mathbf{D}$ is already estimated accurately from observational data, so incorporating a small number of interventional data (which may contain fewer samples than the observational data) can worsen its estimation. As the number of interventions increases, however, the estimation of $\boldsymbol{\Lambda}$ improves, which in turn reduces the error in $\mathbf{D}$.

We also report results when the loss includes only the residuals of the 
Lyapunov equations (i.e., using covariances but not means) with the legend 
``Covariance (90\% CI),'' similar to \citep{dettling2024lasso,rohbeck2024bicycle}. 
This variant performs noticeably worse, highlighting that the mean terms are 
essential (please note that there is no curve for $\mathbf{b}$ in this case as there is no 
term for estimating it); mean estimates are typically much more 
accurate than covariance estimates, and incorporating them substantially 
improves recovery.

In Figure~\ref{fig:sub4}, we depict the percentage of instances of $\boldsymbol{\Lambda}$ satisfying the graphical conditions in Theorem~\ref{th:multiSCC} as a function of the number of interventions for different graph densities ($\rho$). For sparse graphs ($\rho=0.3$), less than 50\% of instances satisfy the conditions with a single intervention, but the percentage increases steadily as more interventions are added. In contrast, denser graphs ($\rho=0.5,0.7$) already satisfy the graphical conditions at a high rate with only a few interventions.

We also evaluated the SCC-recovery step from finite samples. Using the
reconstruction procedure implied by Theorem~\ref{th:learning_SCC_mu} and
Remark~\ref{remak:scc_recovery}, we estimated the response sets from empirical
mean shifts and recovered the SCC partition under both hard and soft
interventions. We measured recovery quality using the Adjusted Rand Index (ARI),
where \(1\) indicates exact recovery. As shown in Table~\ref{tab:scc_recovery},
SCC recovery improves steadily as the number of samples increases. Hard
interventions are more informative in this experiment, but soft interventions
also show a clear improvement with sample size. The details of experiments are given in Appendix \ref{app:scc_recovery_experiment}.

\begin{table}[h]
\centering
\caption{SCC recovery from estimated means. Recovery is measured by Adjusted Rand
Index (ARI), where \(1\) indicates exact recovery.}
\label{tab:scc_recovery}
\begin{tabular}{ccc}
\toprule
Sample size & Hard int. (ARI) & Soft int. (ARI) \\
\midrule
500   & 0.660 & 0.464 \\
1000  & 0.746 & 0.615 \\
5000  & 0.849 & 0.804 \\
10000 & 0.881 & 0.837 \\
\bottomrule
\end{tabular}
\end{table}

Our identifiability theory assumes diagonal diffusion, and this assumption
is mainly used for the parameter-identifiability result in
Theorem~\ref{th:multiSCC}. The learning objective,
however, can also be applied with a non-diagonal diffusion matrix. To evaluate
this empirically, we repeated the synthetic experiment in the same setup as
Figure~\ref{fig:all}, but allowed the diffusion matrix to be non-diagonal. The
results in Table~\ref{tab:nondiag_diffusion} show the same qualitative trend as
in the diagonal-diffusion setting: the relative errors for
\(\boldsymbol{\Lambda}\), \(\mathbf D\), and \(\mathbf b\) decrease as the number
of interventions increases. This suggests that, although the current
identifiability proof uses diagonal diffusion, the proposed moment-based
estimator remains empirically useful beyond that setting. The details of experiments are given in Appendix \ref{app:nondiag_diffusion}.

\begin{table}[h]
\centering
\caption{Parameter recovery with non-diagonal diffusion. Relative errors decrease
as the number of interventions increases.}
\label{tab:nondiag_diffusion}
\begin{tabular}{cccc}
\toprule
\# int. & Rel. err. \((\boldsymbol{\Lambda})\) & Rel. err. \((\mathbf D)\) & Rel. err. \((\mathbf b)\) \\
\midrule
0  & 0.2064 & 0.1811 & 0.1407 \\
2  & 0.1782 & 0.1582 & 0.1423 \\
4  & 0.1570 & 0.1449 & 0.1394 \\
6  & 0.1198 & 0.1188 & 0.1185 \\
8  & 0.0859 & 0.1011 & 0.0994 \\
10 & 0.0646 & 0.0719 & 0.0661 \\
\bottomrule
\end{tabular}
\end{table}

We further empirically investigate how conservative the graphical conditions in Theorem \ref{th:multiSCC} are in Appendix \ref{app:graphical conditions}, and study the impact of sample size on performance in Appendix~\ref{app:sample_size}.

\textbf{Real Data.} We assess our method on real-world data, leveraging three published single-cell perturbation screen datasets (Frangieh et al., 2021). Since the true causal graph is unknown in this setting, our evaluation focuses on generalization to unseen perturbations. Specifically, we consider a Perturb-seq dataset containing targeted CRISPR knock-out perturbations of 249 target genes in tumor-infiltrating lymphocytes (TILs) of melanoma patients. The perturbations were performed under three conditions, which we treat as separate datasets: a baseline culture of TILs in a neutral medium (``Control''), a culture of TILs with interferon-$\gamma$ added (``IFN-$\gamma$''), and a co-culture of TILs with patient-derived melanoma cells (``Co-Culture''). 

Following the setup in \citep{sethuraman2023nodags}, we restrict our analysis to the same subset of 61 genes and adopt their reported training/test split: $90\%$ of interventions are used for training and the remaining $10\%$ are held out for evaluation, with analyses performed separately for each dataset. Our goal is to predict the interventional mean $\boldsymbol{\mu}$ for unseen perturbations, using the estimated parameters and the steady-state equation for the mean. Note that global scaling is not an issue here, as it cancels out for predicting $\boldsymbol{\mu}$. 

For evaluation, we do not rely on Mean Absolute Error (MAE), as even the observational mean achieves a very close performance to the one using the interventional mean on the held-out set. Instead, following the recommendation in the Virtual Cell Challenge\footnote{\url{https://virtualcellchallenge.org/evaluation\#scoring}}, we report the Differential Expression Score (DES) and the Perturbation Differential Score (PDS), where higher values indicate better performance. DES measures agreement in identifying differentially expressed genes, while PDS measures a model's ability to distinguish between perturbations by ranking predictions according to their similarity to the true perturbational effect, regardless of their effect size. In Table \ref{tab:realdata}, we compare our full method (``Mean + Covariance") against the ``Covariance" only variant (similar to the approaches in \citep{dettling2024lasso,rohbeck2024bicycle}), and against the one using the interventional mean of the held-out set (which is not available to our method). As shown in Table~\ref{tab:realdata}, our method achieves comparable and in some cases even higher scores than the oracle baseline using the interventional mean, despite not having access to held-out interventional data.

\section{Conclusions and Future Work}

We studied recovery of multivariate OU parameters from steady-state observational and interventional data. 
Our main theoretical contribution shows that one intervention per SCC of the drift graph suffices for generic recovery of $(\boldsymbol{\Lambda},\mathbf{b},\mathbf{D})$ up to a single global scaling when the DAG over SCCs is connected with a unique root. 
The single-SCC case (Theorem~\ref{th:Single SCC}) yields a one-dimensional null space for the stacked moment equations, and the multi-SCC result (Theorem~\ref{th:multiSCC}) follows via a constructive, recursive decomposition that leverages mean shifts (Theorem~\ref{th:learning_SCC_mu}) to infer SCCs and a topological order. 
Building on these guarantees, we considered a regularized least-squares estimator and observed accurate parameter recovery in synthetic datasets. 
\\
While our theoretical results are developed for linear SDEs, we note that this setting is a canonical model class for studying causal inference in continuous-time systems where causal effects and parameter recovery can be analyzed rigorously. The identifiability results we obtain, can lead to some practical insights on when stationary data of interventions are sufficient to recover causal structure. This might be interesting for the wider ML community working on perturbation predictions.
\\
Our guarantees rely on some spectral/\textcolor{black}{rank} nondegeneracy assumptions. Numerical results suggest these hold generically for strongly connected drift graphs. A key avenue for future work is a formal genericity proof. Another direction is to understand how diffusion assumptions (e.g., not diagonal diffusion matrices) change the boundary between identifiable and non-identifiable regimes.

\bibliography{example_paper}
\bibliographystyle{icml2026}

\newpage
\appendix
\onecolumn

\section{Proofs}
\label{app:proofs}

\subsection{Forming the Linear System}
\label{app:linear_system}
\begingroup\color{black}
For completeness, write $m_t=\mathbb E[\mathbf x_t]$ and
$S_t=\operatorname{Cov}(\mathbf x_t)$. Taking expectations in the SDE
and applying It\^o's product rule to the centered second moment gives
\[
\dot m_t=-\boldsymbol\Lambda m_t+\mathbf b,\qquad
\dot S_t=-\boldsymbol\Lambda S_t-S_t\boldsymbol\Lambda^\top+\mathbf D.
\]
The last term is the Brownian quadratic-variation contribution.
At stationarity both derivatives vanish, yielding
\eqref{eq:mean_steady} and~\eqref{eq:lyapanouv}; the same calculation
with the intervened drift gives their interventional counterparts.

\endgroup
Throughout, $\mathbf{I}_{n}$ denotes the $n\times n$ identity,  
$\otimes$ is the Kronecker product,  
and $\mathbf e_{k}$ is the $k$-th canonical basis vector in $\mathbb R^{n}$.
\begingroup\color{black}
Explicitly, $A\otimes B=[A_{jk}B]$ and
$(A\odot B)_{jk}=A_{jk}B_{jk}$, where $\odot$ denotes the Hadamard
(entrywise) product. Column-wise vectorization satisfies
$\operatorname{vec}(AXB)=(B^\top\otimes A)\operatorname{vec}(X)$;
this identity produces the linear-system blocks below.
\endgroup

We define the following selector matrices:
\begin{itemize}
\item Introduce the matrix $\mathbf{P} \in \mathbb{R}^{\,n^{2}\times n}$
such that $\operatorname{vec}\!\bigl(\operatorname{diag}(\mathbf{d})\bigr)
= \mathbf{P}\,\mathbf{d},$
where the $k$-th column of $\mathbf{P}$ is given by $\mathbf{P}_{:,k} = \operatorname{vec}(\mathbf{E}_{kk}),$
and $\mathbf{E}_{kk}$ denotes the $n\times n$ matrix with a one in entry $(k,k)$ and zeros elsewhere.


\item
For a fixed row index $i$, let
$
\mathbf{E}_{i}:=\mathbf{I}_{n}\otimes\mathbf e_{i}^{\!\top}\in\mathbb R^{\,n\times n^{2}}$. Therefore, $\mathbf{E}_{i}\operatorname{vec}(\boldsymbol{\Lambda})
      =\bigl[\Lambda_{i1},\dots ,\Lambda_{in}\bigr]^{\!\top}$ extracts the entire $i$-th row of $\boldsymbol{\Lambda}$.
\item
$\mathbf{S}_{n}\in\mathbb R^{\,\frac{n(n+1)}2\times n^{2}}$ is the upper-triangular
elimination matrix.
\item Let $\mathbf{C}\in\mathbb{R}^{\,n^{2}\times n^{2}}$ denote the commutation matrix, so that
$\operatorname{vec}(\mathbf{X}^{\!\top})=\mathbf{C}\,\operatorname{vec}(\mathbf{X})$ for any $X\in\mathbb{R}^{n\times n}$.

\end{itemize}

For the equation of observational mean, we define:
\[
\mathbf{M}_{0}:=\Bigl[
        -\!\bigl(\boldsymbol{\mu}^{\!\top}\!\otimes \mathbf{I}_{n}\bigr)\;
        \Big|\;
        \mathbf{I}_{n}\;
        \Big|\;
        \mathbf{0}
       \Bigr]\in\mathbb{R}^{\,n\times p},
\]
where $p=n^2+2n$.

For the equation of the interventional mean (on coordinate $i$),
let $\mathbf{J}_{i}:=\mathbf{I}_n-\mathbf e_{i}\mathbf e_{i}^{\!\top}$. We define:
\[
\mathbf{M}_{1}:=\Bigl[
        -\!\bigl((\boldsymbol{\mu}^{(i)})^{\!\top}\!\otimes \mathbf{I}_{n}\bigr)
        +\mathbf e_{i}(\boldsymbol{\mu}^{(i)})^{\!\top}\mathbf{J}_{i}\mathbf{E}_{i}\;
        \Big|\;
        \mathbf{I}_{n}\;
        \Big|\;
        \mathbf{0}
       \Bigr]\in\mathbb{R}^{\,n\times p}.
\]

For the observational covariance, 
vectorising
$\boldsymbol{\Lambda}\boldsymbol{\Sigma}
 +\boldsymbol{\Sigma}\boldsymbol{\Lambda}^{\!\top}
 -\operatorname{diag}(\mathbf d)=0$
and selecting the upper-triangular part yields
\[
\mathbf{K}_{0}
=
\Bigl[
\mathbf{S}_{n}\bigl(
   \boldsymbol{\Sigma}\otimes \mathbf{I}_{n}
 + (\mathbf{I}_{n}\otimes \boldsymbol{\Sigma})\,\mathbf{C}
\bigr)
\ \Big|\ \mathbf{0}\ \Big|\ -\mathbf{S}_{n}\mathbf{P}
\Bigr]\in\mathbb{R}^{\,\frac{n(n+1)}2\times p}.
\]

For the interventional covariance (on coordinate $i$), we define:
\[
\mathbf{K}_{1}
=
\Bigl[
\mathbf{S}_{n}\bigl(
   \boldsymbol{\Sigma}^{(i)}\otimes \mathbf{I}_{n}
 + (\mathbf{I}_{n}\otimes \boldsymbol{\Sigma}^{(i)})\,\mathbf{C}
 - \bigl[(\mathbf{I}_{n}\otimes \mathbf{e}_{i})+(\mathbf{e}_{i}\otimes \mathbf{I}_{n})\bigr]
   \boldsymbol{\Sigma}^{(i)}\mathbf{J}_{i}\mathbf{E}_{i}
\bigr)
\ \Big|\ \mathbf{0}\ \Big|\ -\mathbf{S}_{n}\mathbf{P}
\Bigr]\in\mathbb{R}^{\,\frac{n(n+1)}2\times p}.
\]

Assembling all the equations:

\begin{equation}
\mathbf{A}:=
\begin{bmatrix}
 \mathbf{M}_{0}\\[2pt]
 \mathbf{M}_{1}\\[2pt]
 \mathbf{K}_{0}\\[2pt]
 \mathbf{K}_{1}
\end{bmatrix}
\in\mathbb{R}^{\,m\times p},
\label{eq:A}
\end{equation}
where $
m=n+n+\frac{n(n+1)}2+\frac{n(n+1)}2=n^{2}+3n$.

\subsection{Notion of Genericity}
\label{app:genericity}
Fix a directed graph $G$, and let
$\Theta_G \subset\mathbb{R}^{|E|+3n}$
denote the admissible parameter space consisting of all free parameters
$
\theta=(\boldsymbol{\Lambda},\mathbf b,\mathbf d)$
\begingroup\color{black}
compatible with $G$, with $\boldsymbol{\Lambda}$ and the drifts for
the performed interventions positive stable, and
$\mathbf D=\operatorname{diag}(\mathbf d)\succ0$.
\endgroup
Here, compatibility with $G$ means that the off-diagonal support of
$\boldsymbol{\Lambda}$ is a subset of $G$. The set $\Theta_G$ is open in
$\mathbb{R}^{|E|+3n}$. We say that a property holds \emph{generically} on
$\Theta_G$ if there exists a Lebesgue-measure-zero subset
$N_G\subset \Theta_G$
such that the property holds for every $\theta\in \Theta_G\setminus N_G$.
\begingroup\color{black}
Here $\Theta_G$ specifies the family of true parameters over which genericity is asserted. The class of competing models is specified separately in each theorem. For instance, competitors are unrestricted in graph support in Theorem \ref{th:Single SCC}, whereas Theorem \ref{th:multiSCC} explicitly requires compatibility with $G$.

\paragraph{Rational witnesses.}
We repeatedly use the fact that the zero set of a nonzero real polynomial
has Lebesgue measure zero \citep{mityagin2015zero}.
If $f=p/q$ is rational, a single parameter value with $q\ne0$ and
$f\ne0$ proves that $p$ is not the zero polynomial. We call such a
parameter choice a \emph{witness}; it establishes generic nonvanishing
where $f$ is defined. Matrix inverses and solutions of nonsingular finite
linear systems are rational by the adjugate formula.
A full-column-rank matrix has at least one nonzero maximal minor, so the
same argument gives generic full rank from one such witness.
Finitely many rational functions that are not identically zero can
be made nonzero simultaneously. Indeed, each function has a
measure-zero zero set, and the finite union of these sets still
has measure zero. Outside this union, all the functions are
nonzero at the same point. Their individual witnesses need not
coincide. Each witness only establishes that its corresponding
function is not identically zero. In particular, every nonempty
open admissible neighborhood on which the functions are defined
contains such a common point, because an open neighborhood has
positive Lebesgue measure and therefore cannot be contained in
the exceptional union.
\endgroup
\subsection{Proof of Theorem \ref{th:learning_SCC_mu}}
\label{app:learning_SCC}

\textcolor{black}{Throughout this proof, genericity is understood with respect to the admissible parameter space $\Theta_G$ for any given graph $G$, as defined in the genericity convention. The statement of this theorem only depends on $(\boldsymbol{\Lambda},\mathbf b)$, not on the diffusion parameters $\mathbf d$. Thus, any Lebesgue-measure-zero exceptional set in the free coordinates for $(\boldsymbol{\Lambda},\mathbf b)$ induces a Lebesgue-measure-zero exceptional set in the full admissible space $\Theta_G$. Moreover, by the model setup, the intervened matrix $\widetilde{\boldsymbol{\Lambda}}^{(i)}$ is invertible whenever the interventional mean $\boldsymbol{\mu}^{(i)}$ is considered.}

At steady state, $\boldsymbol{\mu}=\boldsymbol{\Lambda}^{-1}\mathbf{b}$ and $\boldsymbol{\mu}^{(i)}=(\widetilde{\boldsymbol{\Lambda}}^{(i)})^{-1}\mathbf{b}$.
Let $\mathbf{E}:=\widetilde{\boldsymbol{\Lambda}}^{(i)}-\boldsymbol{\Lambda}$; only the $i$-th row of $\mathbf{E}$ is nonzero, with $E_{ij}=-\Lambda_{ij}$ for $j\neq i$.
By the following equation,
\begin{equation}\label{eq:delta-mu}
\Delta\boldsymbol{\mu}
:= \boldsymbol{\mu}-\boldsymbol{\mu}^{(i)}
= \boldsymbol{\Lambda}^{-1}\mathbf{E}\,(\widetilde{\boldsymbol{\Lambda}}^{(i)})^{-1}\mathbf{b}
= \boldsymbol{\Lambda}^{-1}\mathbf{E}\,\boldsymbol{\mu}^{(i)}.
\end{equation}
Since only row $i$ of $\mathbf{E}$ is nonzero, $\mathbf{E}\mathbf{v} = s(\mathbf{v})\,\mathbf{e}_i$ for any vector $\mathbf{v}$, where
\[
s(\mathbf{v}) := -\sum_{j\neq i}\Lambda_{ij}\,v_j.
\]
Applying this to $\mathbf{v}=\boldsymbol{\mu}^{(i)}$ in \eqref{eq:delta-mu} gives
\begin{equation}\label{eq:key-factorization}
\Delta\boldsymbol{\mu} = s\!\big(\boldsymbol{\mu}^{(i)}\big)\,\boldsymbol{\Lambda}^{-1}\mathbf{e}_i.
\end{equation}

Permute coordinates by some permutation matrix that orders the SCCs topologically. 
Let $\mathbf{L}$ be the block lower--triangular drift matrix after topologically ordering the SCCs,
\[
\mathbf{L}=\begin{bmatrix}
\mathbf{L}^{(1)} & 0 & \cdots & 0\\
\mathbf{L}^{(2,1)} & \mathbf{L}^{(2)} & \ddots & \vdots\\
\vdots & \ddots & \ddots & 0\\
\mathbf{L}^{(K,1)} & \cdots & \mathbf{L}^{(K,K-1)} & \mathbf{L}^{(K)}
\end{bmatrix},
\]
with each diagonal block $\mathbf{L}^{(u)}$ invertible. Fix a block $r$ and a coordinate $i$ in block $r$.

\begingroup\color{black}
Let $R$ be the union of the SCCs reachable from the component containing
$i$, including that component. No allowed edge leads from $R$ to $R^c$.
Ordering $R^c$ before $R$ therefore gives a block lower--triangular drift
with zero $(R^c,R)$ block, and its inverse has the same zero block.
Consequently $(\mathbf L^{-1}\mathbf e_i)_j=0$ whenever $j\notin R$.
It remains to show that the entries for descendant coordinates are
generically nonzero.
\endgroup
\textcolor{black}{We show that if $t$ is a descendant of $r$ in the DAG over SCCs and $j$ is a coordinate in block $t$, then there exists an admissible parameter value $\theta^\star\in\Theta_G$ such that}
$\textcolor{black}{\big(\mathbf{L}(\theta^\star)^{-1}\mathbf{e}_i\big)_j\neq 0.}$
\textcolor{black}{Indeed, by the adjugate formula, this entry can be written as}
$
\textcolor{black}{
\big(\mathbf L(\theta)^{-1}\mathbf e_i\big)_j
=
\frac{p_{j,i}(\theta)}{\det \mathbf L(\theta)},
}$
\textcolor{black}{where $p_{j,i}(\theta)$ is a polynomial in the free entries of $\mathbf L(\theta)$. Since $\det \mathbf L(\theta)\neq 0$ on the admissible parameter space, showing one admissible point where the entry is nonzero shows that $p_{j,i}$ is not identically zero. Hence the zero set of the entry is contained in the zero set of a nonzero polynomial, and therefore has Lebesgue measure zero.}

\textcolor{black}{Since $j$ lies in a descendant block of the block containing $i$, there exists a coordinate-level directed path in $G$ from $i$ to $j$, say}
$
\textcolor{black}{i=v_0\to v_1\to\cdots\to v_\ell=j.}$
\textcolor{black}{If necessary, remove cycles so that the path is simple. We construct a one-parameter family $\mathbf{L}(\varepsilon)$ as follows. Choose $\alpha>0$ sufficiently large. Set all diagonal entries of $\mathbf{L}(\varepsilon)$ equal to $\alpha$. For each edge $v_{q-1}\to v_q$ on the selected path, set the corresponding entry $L_{v_q,v_{q-1}}(\varepsilon)$ equal to $1$. Set every other allowed off-diagonal entry of $\mathbf{L}(\varepsilon)$ equal to $\varepsilon$, and keep all forbidden entries equal to zero. For every $\varepsilon>0$, every allowed off-diagonal entry is nonzero, so $\mathbf{L}(\varepsilon)$ has the exact support prescribed by $G$. Moreover, by choosing $\alpha$ large and $\varepsilon$ sufficiently small, $\mathbf{L}(\varepsilon)$ is strictly row-diagonally dominant with positive diagonal entries, hence positive stable and in particular invertible. Thus, after choosing arbitrary admissible $\mathbf b$ and any $\mathbf d\succ0$, this construction gives a point $\theta(\varepsilon)\in\Theta_G$ for all sufficiently small $\varepsilon>0$.}

\textcolor{black}{Let $\mathbf{P}_{\mathrm{path}}$ be the matrix containing only the selected path entries, i.e., $(\mathbf{P}_{\mathrm{path}})_{v_q,v_{q-1}}=1$ for $q=1,\dots,\ell$ and all other entries zero. At $\varepsilon=0$,}
$\textcolor{black}{\mathbf{L}(0)=\alpha\mathbf{I}+\mathbf{P}_{\mathrm{path}}.}$
\textcolor{black}{Since the selected path is simple, $\mathbf{P}_{\mathrm{path}}$ is nilpotent,\footnote{A square matrix $\mathbf A$ is nilpotent if there exists an integer $m\ge 1$ such that $\mathbf A^m=\mathbf 0$. In this proof, $\mathbf P_{\mathrm{path}}$ is nilpotent because it only moves forward along the finite simple path $i=v_0\to v_1\to\cdots\to v_\ell=j$, so $\mathbf P_{\mathrm{path}}^{\ell+1}=\mathbf 0$.} and}
\[
\textcolor{black}{
(\alpha\mathbf{I}+\mathbf{P}_{\mathrm{path}})^{-1}
=
\alpha^{-1}\sum_{q\ge 0}(-\alpha^{-1}\mathbf{P}_{\mathrm{path}})^q,
}
\]
\textcolor{black}{where the sum is finite. Because $\mathbf{P}_{\mathrm{path}}^\ell \mathbf e_i=\mathbf e_j$, while no lower power maps $\mathbf e_i$ to $\mathbf e_j$, we get}
\[
\textcolor{black}{
\big(\mathbf{L}(0)^{-1}\mathbf e_i\big)_j
=
(-1)^\ell \alpha^{-(\ell+1)}
\neq 0.
}
\]
\textcolor{black}{By continuity, $\big(\mathbf{L}(\varepsilon)^{-1}\mathbf e_i\big)_j\neq 0$ for all sufficiently small $\varepsilon>0$. Hence we have exhibited admissible parameters with $(\mathbf{L}^{-1}\mathbf e_i)_j\neq 0$.}

\textcolor{black}{It follows that, for each fixed descendant coordinate $j\in\mathrm{Desc}(C)$, the exceptional set on which $(\mathbf{L}^{-1}\mathbf e_i)_j=0$ has Lebesgue measure zero in $\Theta_G$. Since there are only finitely many coordinates, taking the union over all descendant coordinates still gives a Lebesgue-measure-zero exceptional set. Therefore, generically,}
\[
\textcolor{black}{
\big(\mathbf{L}^{-1}\mathbf e_i\big)_j\neq 0
\quad\text{for every }j\in \mathrm{Desc}(C),
}
\]
\begingroup\color{black}
\textcolor{black}{where $\mathrm{Desc}(C)$ includes the component $C$ itself. On the other hand, as shown above, for $j\notin \mathrm{Desc}(C)$, the descendant-block argument gives $\big(\mathbf{L}^{-1}\mathbf e_i\big)_j=0$ deterministically.}
\endgroup

If for some $j\neq i$, we have $\Lambda_{ij}\neq 0$, then
$
s\!\big(\boldsymbol{\mu}^{(i)}\big) = -\sum_{j\neq i}\Lambda_{ij}\,\mu^{(i)}_j.$
\textcolor{black}{This is an analytic function of $(\boldsymbol{\Lambda},\mathbf b)$. Moreover, if the $i$-th row is non-null, then the row vector}
$
\textcolor{black}{
\mathbf r_i^\top:= -\sum_{j\neq i}\Lambda_{ij}\mathbf e_j^\top
}$
\textcolor{black}{is nonzero. Since}
$
\textcolor{black}{
s\!\big(\boldsymbol{\mu}^{(i)}\big)
=
\mathbf r_i^\top(\widetilde{\boldsymbol{\Lambda}}^{(i)})^{-1}\mathbf b,
}$
\textcolor{black}{and $(\widetilde{\boldsymbol{\Lambda}}^{(i)})^{-1}$ is invertible, the row vector
$\mathbf r_i^\top(\widetilde{\boldsymbol{\Lambda}}^{(i)})^{-1}$ is nonzero. Thus, for fixed $\boldsymbol{\Lambda}$, $s(\boldsymbol{\mu}^{(i)})$ is a nonzero linear function of $\mathbf b$. Hence it is not identically zero as a function of $(\boldsymbol{\Lambda},\mathbf b)$, and its zero set has Lebesgue measure zero in the admissible free-coordinate space. Therefore $s(\boldsymbol{\mu}^{(i)})\neq 0$ generically.}

Combining with \eqref{eq:key-factorization}, the support of $\Delta\boldsymbol{\mu}$ matches that of $\boldsymbol{\Lambda}^{-1}\mathbf{e}_i$ whenever $s(\boldsymbol{\mu}^{(i)})\neq 0$.
Thus, if for some $j\neq i$,  $\Lambda_{ij}\neq 0$, generically,
\[
\textcolor{black}{
\mu^{(i)}_k\neq \mu_k \ \Longleftrightarrow\ k\in \mathrm{Desc}(C),
}
\]
\textcolor{black}{where $\mathrm{Desc}(C)$ includes $C$ itself.}

\textcolor{black}{If $\Lambda_{ij}=0$ for all $j\neq i$, then $\mathbf E=\mathbf 0$, so $s(\boldsymbol{\mu}^{(i)})=0$ and $\Delta\boldsymbol{\mu}=\mathbf 0$. Hence $\boldsymbol{\mu}^{(i)}=\boldsymbol{\mu}$.}

\subsection{Proof of Theorem \ref{th:learning_SCC_mu} under a soft row intervention}
\label{app:learning_SCC_soft}

We only indicate the changes relative to the proof of
Theorem~\ref{th:learning_SCC_mu} for the hard intervention. The argument showing
the generic support of $\boldsymbol{\Lambda}^{-1}\mathbf e_i$ is unchanged.

At steady state, $
\boldsymbol{\mu}=\boldsymbol{\Lambda}^{-1}\mathbf b,
\boldsymbol{\mu}^{(i)}
=
(\widetilde{\boldsymbol{\Lambda}}^{(i)})^{-1}\mathbf b$.
For a soft row intervention, $\widetilde{\boldsymbol{\Lambda}}^{(i)}$ differs from
$\boldsymbol{\Lambda}$ only in row $i$. The entries in row $i$, including the
diagonal entry, may change. Let
$\mathbf E
:=
\widetilde{\boldsymbol{\Lambda}}^{(i)}-\boldsymbol{\Lambda}$.
Then only row $i$ of $\mathbf E$ is nonzero. Denote this row change by $\boldsymbol{\delta}_i^\top
:=
\mathbf e_i^\top\mathbf E
=
\mathbf e_i^\top
\big(
\widetilde{\boldsymbol{\Lambda}}^{(i)}
-
\boldsymbol{\Lambda}
\big)$.
We assume the intervention is non-null, i.e., $\boldsymbol{\delta}_i\neq 0$.
\begingroup\color{black}
The row change is fixed independently of $\mathbf b$.
\endgroup

As before,
\begin{equation}
\label{eq:delta-mu-soft-short}
\Delta\boldsymbol{\mu}
:=
\boldsymbol{\mu}-\boldsymbol{\mu}^{(i)}
=
\boldsymbol{\Lambda}^{-1}
\mathbf E
(\widetilde{\boldsymbol{\Lambda}}^{(i)})^{-1}\mathbf b
=
\boldsymbol{\Lambda}^{-1}
\mathbf E
\boldsymbol{\mu}^{(i)}.
\end{equation}
Since only row $i$ of $\mathbf E$ is nonzero, for any vector $\mathbf v$, $\mathbf E\mathbf v
=
\eta(\mathbf v)\mathbf e_i$,
where
\[
\eta(\mathbf v)
:=
\boldsymbol{\delta}_i^\top\mathbf v
=
\sum_{j=1}^n
\left(
\widetilde{\Lambda}^{(i)}_{ij}
-
\Lambda_{ij}
\right)v_j .
\]
Applying this to $\mathbf v=\boldsymbol{\mu}^{(i)}$ in
\eqref{eq:delta-mu-soft-short} gives
\begin{equation}
\label{eq:key-factorization-soft-short}
\Delta\boldsymbol{\mu}
=
\eta(\boldsymbol{\mu}^{(i)})
\boldsymbol{\Lambda}^{-1}\mathbf e_i .
\end{equation}

The rest of the proof uses the same generic-support claim proved in the hard
intervention case: for the SCC $C$ containing $i$,
\[
(\boldsymbol{\Lambda}^{-1}\mathbf e_i)_k=0
\quad\text{deterministically for }k\notin\mathrm{Desc}(C),
\]
and
\[
(\boldsymbol{\Lambda}^{-1}\mathbf e_i)_k\neq0
\quad\text{generically for every }k\in\mathrm{Desc}(C),
\]
where $\mathrm{Desc}(C)$ includes $C$ itself.

It remains only to check that the scalar factor in
\eqref{eq:key-factorization-soft-short} is generically nonzero. We have
$\eta(\boldsymbol{\mu}^{(i)})
=
\boldsymbol{\delta}_i^\top
(\widetilde{\boldsymbol{\Lambda}}^{(i)})^{-1}\mathbf b$.
Since $\boldsymbol{\delta}_i\neq0$ and
$\widetilde{\boldsymbol{\Lambda}}^{(i)}$ is invertible, the row vector
$
\boldsymbol{\delta}_i^\top
(\widetilde{\boldsymbol{\Lambda}}^{(i)})^{-1}$
is nonzero. Hence, for fixed
$(\boldsymbol{\Lambda},\widetilde{\boldsymbol{\Lambda}}^{(i)})$,
$\eta(\boldsymbol{\mu}^{(i)})$ is a nonzero linear function of $\mathbf b$.
Therefore its zero set is a Lebesgue-measure-zero exceptional set, and
$\eta(\boldsymbol{\mu}^{(i)})\neq0$
\begingroup\color{black}
generically. The exceptional set is measurable, and its section in
$\mathbf b$ at every fixed admissible drift/perturbation pair is a
hyperplane of measure zero. Fubini-Tonelli theorem \citep{folland2009guide} (Theorem 2.16 (b)), applied to the indicator
of this set, therefore makes it jointly null in the drift, perturbation,
and input parameters.
\endgroup

Combining this with \eqref{eq:key-factorization-soft-short}, the support of
$\Delta\boldsymbol{\mu}$ generically agrees with the support of
$\boldsymbol{\Lambda}^{-1}\mathbf e_i$. Therefore, for a non-null soft row
intervention on coordinate $i$,
\[
\mu_k^{(i)}\neq \mu_k
\quad\Longleftrightarrow\quad
k\in\mathrm{Desc}(C)
\]
generically, where $\mathrm{Desc}(C)$ includes $C$ itself.

If the soft intervention is null, i.e., $\boldsymbol{\delta}_i=\mathbf0$, then
$\mathbf E=\mathbf0$, so $\Delta\boldsymbol{\mu}=\mathbf0$ and
$\boldsymbol{\mu}^{(i)}=\boldsymbol{\mu}$.

\subsection{Recovering SCCs and a topological order over SCCs from mean changes}
\label{app:rec_SCC}

Assume at least one intervention is performed in every SCC. For each intervention on $i$, set
\[
R_i:=\{\,j:\ \mu^{(i)}_j\neq \mu_j\,\}.
\]
By Theorem~\ref{th:learning_SCC_mu}, generically,
\[
R_i=\mathrm{Desc}(C_i)\quad\text{if the intervention on $i$ is non-null},\qquad
R_i=\emptyset\quad\text{if it is null},
\]
where $C_i$ is the SCC containing $i$ and $\mathrm{Desc}(C)$ is the set of nodes lying in SCCs reachable from $C$ in the SCC--DAG, including $C$ itself.

\medskip
\noindent\textbf{Procedure.}
\begin{enumerate}
\item \emph{Singleton sources (null interventions).} If $R_i=\emptyset$, declare $\{i\}$ a singleton SCC with no incoming edges, i.e., a source SCC. Do \emph{not} merge different $i$ with $R_i=\emptyset$.

\item \emph{Group non-null response sets.} For the remaining interventions, group by equality of response sets: $i\sim i'$ iff $R_i=R_{i'}$. Each distinct nonempty response set corresponds to exactly one non-null SCC, but the equivalence class of intervention targets is not necessarily the whole SCC. We therefore use the distinct nonempty response sets as temporary SCC labels.

\item \emph{Recover the node set of each non-null SCC.} Let $\mathcal R$ be the collection of distinct nonempty response sets obtained in step~2. For each $R\in\mathcal R$, define
$
C(R)
:=
R\setminus \bigcup_{\substack{R'\in\mathcal R\\ R'\subsetneq R}} R'$.
Then $C(R)$ is exactly the SCC whose descendant set is $R$. 

\item \emph{Reachability order among non-null SCCs.} For two recovered non-null SCCs $C(R)$ and $C(R')$,
$C(R)\text{ reaches }C(R')
\Longleftrightarrow
R\supseteq R'$.
Thus strict reverse inclusion of response sets recovers the SCC-level reachability partial order. If desired, one may draw the cover relation by adding an edge from $C(R)$ to $C(R')$ when $R\supsetneq R'$ and there is no $R''\in\mathcal R$ such that $
R\supsetneq R''\supsetneq R'$.
\begingroup\color{black}
Among these non-null SCCs, reachability is a partial order. The full condensation DAG
may also have direct edges whose reachability is already implied by a
longer path. The procedure recovers the SCCs and a compatible topological
order, without identifying every direct edge between them.
\endgroup

\item \emph{Topological order.} Output all singleton sources from step~1 first, in any arbitrary order, and then output the recovered non-null SCCs in any order consistent with reverse inclusion of their response sets, i.e., larger response sets before smaller response sets. This yields a valid topological ordering of the SCCs.
\end{enumerate}

\subsection{Proof of Theorem \ref{th:Single SCC}}
\label{app:singleSCC}

\textcolor{black}{Throughout this proof, genericity is understood with respect to
the admissible parameter space $\Theta_G$ for any fixed graph $G$, as defined in
the genericity convention in Appendix~\ref{app:genericity}.}

Consider the following two admissible triples satisfying the linear system
\eqref{eq:A}:
\begin{itemize}
  \item True underlying triple:
  $(\boldsymbol{\Lambda}_\star,\, \mathbf{b}_\star,\, \mathbf{D}_\star)$.
  \item Alternative admissible triple:
  $(\boldsymbol{\Lambda}',\, \mathbf{b}',\, \mathbf{D}')$.
\end{itemize}

Define
\[
\Delta \boldsymbol{\Theta} := 
\begin{bmatrix}
\operatorname{vec}(\Delta \boldsymbol{\Lambda}) \\
\Delta \mathbf{b} \\
\Delta \mathbf{d}
\end{bmatrix}, \qquad
\Delta \boldsymbol{\Lambda} := \boldsymbol{\Lambda}' - \boldsymbol{\Lambda}_\star,
\quad
\Delta \mathbf{b} := \mathbf{b}' - \mathbf{b}_\star,
\quad
\Delta \mathbf{D} := \mathbf{D}' - \mathbf{D}_\star,
\]
where $\Delta \mathbf d$ is the diagonal of $\Delta\mathbf D$. Since
$\mathbf A\boldsymbol{\Theta}_\star=0$ and
$\mathbf A\boldsymbol{\Theta}'=0$, we have
$\mathbf A\Delta\boldsymbol{\Theta}=0$. \textcolor{black}{Our goal is to show that
the alternative triple is a scalar multiple of the true triple, i.e.,}
$\textcolor{black}{
\boldsymbol{\Lambda}'=c\boldsymbol{\Lambda}_\star,
\mathbf b'=c\mathbf b_\star,
\mathbf D'=c\mathbf D_\star
}$
\textcolor{black}{for some scalar $c>0$.}

Fix the intervened row index $i$. The observational blocks $\mathbf M_0$ and
$\mathbf K_0$ yield
\begin{equation}
\Delta \boldsymbol{\Lambda} \boldsymbol{\mu} = \Delta \mathbf b,
\qquad
\Delta \boldsymbol{\Lambda}\boldsymbol{\Sigma}
+
\boldsymbol{\Sigma}\Delta \boldsymbol{\Lambda}^\top
=
\Delta \mathbf D,
\label{eq:obs-diff}
\end{equation}
where $\boldsymbol{\mu}$ and $\boldsymbol{\Sigma}$ are the true observational
mean and covariance. Since the intervention zeros out row $i$ off-diagonals,
\[
\widetilde{\boldsymbol{\Lambda}}_\star^{(i)}
=
\mathbf J_i\odot\boldsymbol{\Lambda}_\star,
\qquad
\widetilde{\boldsymbol{\Lambda}}'^{(i)}
=
\mathbf J_i\odot\boldsymbol{\Lambda}',
\qquad
\mathbf J_i:=\mathbf 1\mathbf 1^\top-\mathbf e_i\mathbf 1^\top+\mathbf e_i\mathbf e_i^\top .
\]
Thus the interventional blocks $\mathbf M_1$ and $\mathbf K_1$ give
\begin{equation}
(\mathbf J_i\odot\Delta \boldsymbol{\Lambda})\boldsymbol{\mu}^{(i)}
=
\Delta \mathbf b,
\qquad
(\mathbf J_i\odot\Delta \boldsymbol{\Lambda})\boldsymbol{\Sigma}^{(i)}
+
\boldsymbol{\Sigma}^{(i)}
(\Delta \boldsymbol{\Lambda}^{\top}\odot\mathbf J_i^\top)
=
\Delta \mathbf D .
\label{eq:int-diff}
\end{equation}
Taking the $i$-th row of the first equation gives
\begin{equation}
(\Delta\boldsymbol{\Lambda})_{ii}\mu_i^{(i)}
=
\Delta b_i .
\label{eq:mu_int_row_i}
\end{equation}

\textcolor{black}{Outside the measure-zero exceptional set where $b_{\star i}=0$,
define}
$
\textcolor{black}{
c:=\frac{b_i'}{b_{\star i}}.
}$
\textcolor{black}{Define the scaled-difference variables}
\[
\textcolor{black}{
\boldsymbol{\Lambda}_\Delta
:=
\boldsymbol{\Lambda}'-c\boldsymbol{\Lambda}_\star,
\qquad
\mathbf b_\Delta
:=
\mathbf b'-c\mathbf b_\star,
\qquad
\mathbf D_\Delta
:=
\mathbf D'-c\mathbf D_\star .
}
\]
\textcolor{black}{Then $(\boldsymbol{\Lambda}_\Delta,\mathbf b_\Delta,\mathbf D_\Delta)$
still satisfies the homogeneous linear system, and $(b_\Delta)_i=0$. In what
follows, all moments
$\boldsymbol{\mu},\boldsymbol{\Sigma},\boldsymbol{\mu}^{(i)},\boldsymbol{\Sigma}^{(i)}$
remain the true moments generated by
$(\boldsymbol{\Lambda}_\star,\mathbf b_\star,\mathbf D_\star)$.}

For the true intervened system, the $i$-th row gives
$
\textcolor{black}{
\Lambda_{\star,ii}\mu_i^{(i)}=b_{\star i}.
}$
Since $b_{\star i}\neq0$, we have $\mu_i^{(i)}\neq0$. Applying
\eqref{eq:mu_int_row_i} to the ambiguity triple and using $(b_\Delta)_i=0$, we get
$
\textcolor{black}{
(\boldsymbol{\Lambda}_\Delta)_{ii}=0.
}$
Moreover, the $(i,i)$ entry of the interventional covariance equation gives
$2\Sigma^{(i)}_{ii}(\boldsymbol{\Lambda}_\Delta)_{ii}
=(\mathbf D_\Delta)_{ii}$, and hence
$
\textcolor{black}{
(\mathbf D_\Delta)_{ii}=0.
}$

For the ambiguity triple, write
\[
\widetilde{\boldsymbol{\Lambda}}_\Delta^{(i)}
=
\mathbf J_i\odot\boldsymbol{\Lambda}_\Delta
=
\begin{bmatrix}
0 & 0\\
(\boldsymbol{\Lambda}_\Delta)_{-i,i} &
(\boldsymbol{\Lambda}_\Delta)_{-i,-i}
\end{bmatrix}.
\]
Using the $(-i,i)$ block of the interventional covariance equation and the fact
that $\mathbf D_\Delta$ is diagonal, we obtain
\[
(\boldsymbol{\Lambda}_\Delta)_{-i,i}\boldsymbol{\Sigma}^{(i)}_{ii}
+
(\boldsymbol{\Lambda}_\Delta)_{-i,-i}\boldsymbol{\Sigma}^{(i)}_{-i,i}
=0.
\]
Thus
\begin{equation}
\label{eq:lambda-minus-i}
\textcolor{black}{
(\boldsymbol{\Lambda}_\Delta)_{-i,i}
=
-\frac{
(\boldsymbol{\Lambda}_\Delta)_{-i,-i}\boldsymbol{\Sigma}^{(i)}_{-i,i}
}{
\boldsymbol{\Sigma}^{(i)}_{ii}
}.
}
\end{equation}
The $(-i,-i)$ block then gives
\begin{equation}
\label{eq:lyap:Z}
\textcolor{black}{
(\boldsymbol{\Lambda}_\Delta)_{-i,-i}\mathbf Z
+
\mathbf Z(\boldsymbol{\Lambda}_\Delta)_{-i,-i}^{\!\top}
=
(\mathbf D_\Delta)_{-i,-i},
}
\end{equation}
where
\[
\mathbf Z
:=
\boldsymbol{\Sigma}^{(i)}_{-i,-i}
-
\frac{
\boldsymbol{\Sigma}^{(i)}_{-i,i}\boldsymbol{\Sigma}^{(i)}_{i,-i}
}{
\boldsymbol{\Sigma}^{(i)}_{ii}
}.
\]
The matrix $\mathbf Z$ is positive definite, since it is the Schur complement of
$\boldsymbol{\Sigma}^{(i)}_{ii}$ in $\boldsymbol{\Sigma}^{(i)}\succ0$.

Next, the observational and interventional covariance equations for the ambiguity
variables are
\begin{align}
\boldsymbol{\Lambda}_\Delta\boldsymbol{\Sigma}
+
\boldsymbol{\Sigma}\boldsymbol{\Lambda}_\Delta^\top
&=
\mathbf D_\Delta,
\label{eq:diff-lyap-obs}\\
\widetilde{\boldsymbol{\Lambda}}_\Delta^{(i)}\boldsymbol{\Sigma}^{(i)}
+
\boldsymbol{\Sigma}^{(i)}
\widetilde{\boldsymbol{\Lambda}}_\Delta^{(i)\top}
&=
\mathbf D_\Delta.
\label{eq:diff-lyap-int}
\end{align}
Define $\mathbf w_\Delta\in\mathbb R^n$ by
$(w_\Delta)_i=0$ and
$(\mathbf w_\Delta)_{-i}
=
(\boldsymbol{\Lambda}_\Delta)_{i,-i}^{\top}$, so that
$
\widetilde{\boldsymbol{\Lambda}}_\Delta^{(i)}
=
\boldsymbol{\Lambda}_\Delta-\mathbf e_i\mathbf w_\Delta^\top .$
Subtracting \eqref{eq:diff-lyap-int} from \eqref{eq:diff-lyap-obs}, with
$\boldsymbol{\Gamma}:=\boldsymbol{\Sigma}-\boldsymbol{\Sigma}^{(i)}$, gives
\begin{equation}
\label{eq:ambiguity-Gamma}
\textcolor{black}{
\boldsymbol{\Lambda}_\Delta\boldsymbol{\Gamma}
+
\boldsymbol{\Gamma}\boldsymbol{\Lambda}_\Delta^\top
+
\mathbf e_i\mathbf w_\Delta^\top\boldsymbol{\Sigma}^{(i)}
+
\boldsymbol{\Sigma}^{(i)}\mathbf w_\Delta\mathbf e_i^\top
=
0.
}
\end{equation}
Substituting
$\boldsymbol{\Lambda}_\Delta
=
\widetilde{\boldsymbol{\Lambda}}_\Delta^{(i)}
+
\mathbf e_i\mathbf w_\Delta^\top$
into \eqref{eq:ambiguity-Gamma}, and then taking the $(-i,-i)$ block, removes
all terms containing $\mathbf e_i$ and gives
\begin{equation}
\label{eq:Gamma-i,-i}
\left(
\widetilde{\boldsymbol{\Lambda}}_\Delta^{(i)}\boldsymbol{\Gamma}
+
\boldsymbol{\Gamma}\widetilde{\boldsymbol{\Lambda}}_\Delta^{(i)\top}
\right)_{-i,-i}
=0.
\end{equation}
Expanding this block,
\begin{equation}
\label{eq:gamma-block}
\textcolor{black}{
(\boldsymbol{\Lambda}_\Delta)_{-i,i}\boldsymbol{\Gamma}_{i,-i}
+
(\boldsymbol{\Lambda}_\Delta)_{-i,-i}\boldsymbol{\Gamma}_{-i,-i}
+
\boldsymbol{\Gamma}_{-i,i}(\boldsymbol{\Lambda}_\Delta)_{-i,i}^{\top}
+
\boldsymbol{\Gamma}_{-i,-i}(\boldsymbol{\Lambda}_\Delta)_{-i,-i}^{\top}
=
0.
}
\end{equation}
Using \eqref{eq:lambda-minus-i}, we obtain
\begin{equation}
\label{eq:Xi-eq}
\textcolor{black}{
(\boldsymbol{\Lambda}_\Delta)_{-i,-i}\boldsymbol{\Xi}
+
\boldsymbol{\Xi}^{\top}(\boldsymbol{\Lambda}_\Delta)_{-i,-i}^{\top}
=
0,
}
\end{equation}
where
\begin{equation}
\label{eq:Xi}
\boldsymbol{\Xi}
:=
\boldsymbol{\Gamma}_{-i,-i}
-
\frac{
\boldsymbol{\Sigma}^{(i)}_{-i,i}\boldsymbol{\Gamma}_{i,-i}
}{
\boldsymbol{\Sigma}^{(i)}_{ii}
}.
\end{equation}

\begingroup\color{black}
This Schur-type expression need not be symmetric or a covariance
matrix. By Lemma~\ref{lemma:Gamma_inv}, $\boldsymbol{\Xi}$ is invertible. Define
\endgroup
\[
\textcolor{black}{
\mathbf A:=\boldsymbol{\Xi}^{-1}\mathbf Z,
\qquad
\mathbf S:=(\boldsymbol{\Lambda}_\Delta)_{-i,-i}\boldsymbol{\Xi}.
}
\]
Then \eqref{eq:Xi-eq} implies $\mathbf S^\top=-\mathbf S$. Moreover,
\[
\textcolor{black}{
\mathbf S\mathbf A-\mathbf A^\top\mathbf S
=
(\boldsymbol{\Lambda}_\Delta)_{-i,-i}\mathbf Z
+
\mathbf Z(\boldsymbol{\Lambda}_\Delta)_{-i,-i}^{\top}
=
(\mathbf D_\Delta)_{-i,-i},
}
\]
where the last equality follows from \eqref{eq:lyap:Z}. Since
$(\mathbf D_\Delta)_{-i,-i}$ is diagonal,
\[
\textcolor{black}{
\operatorname{offdiag}(\mathbf S\mathbf A-\mathbf A^\top\mathbf S)=0.
}
\]

\begin{assumption}
\label{ass:moment-nondeg}
\begingroup\color{black}
Let $\mathbf A:=\boldsymbol{\Xi}^{-1}\mathbf Z$ and let $m=n-1$ be
the reduced block dimension (or $m=|T|-1$ for a target SCC $T$).
Here $m$ is local to this assumption. Consider the linear map
\endgroup
$\mathcal L_{\mathbf A}:\mathcal K_m\to\mathbb R^{m\times m}_{\mathrm{off}}$,
\begingroup\color{black}
where the codomain consists of matrices with zero diagonal and
\endgroup
$\mathcal K_m:=\{\mathbf S\in\mathbb R^{m\times m}:\mathbf S^\top=-\mathbf S\}$,
defined by
\[
\mathcal L_{\mathbf A}(\mathbf S)
=
\operatorname{offdiag}\!\big(\mathbf S\mathbf A-\mathbf A^\top\mathbf S\big).
\]
Let $\mathbf L_{\mathbf A}$ be the matrix representation of
\begingroup\color{black}
$\mathcal L_{\mathbf A}$ in any basis of $\mathcal K_m$.
Write $\sigma(M)$ for the spectrum (set of complex eigenvalues) of $M$ (please note that
this differs from the diffusion matrix $\boldsymbol\sigma$). We assume
\endgroup
$
0\notin\sigma(\mathbf L_{\mathbf A}^{\top}\mathbf L_{\mathbf A}).$
Equivalently, the only skew-symmetric matrix $\mathbf S$ satisfying
$\operatorname{offdiag}\!\big(\mathbf S\mathbf A-\mathbf A^\top\mathbf S\big)=0
$
is $\mathbf S=0$.
\end{assumption}

\begingroup\color{black}
This is an injectivity condition: the kernel of the map, namely
the set of inputs mapped to zero, is $\{0\}$. Equivalently,
$\mathbf L_{\mathbf A}$ has full column rank and
$\mathbf L_{\mathbf A}^\top\mathbf L_{\mathbf A}$ is positive definite.
It therefore rules out a nonzero skew-symmetric ambiguity in the moments.
\endgroup
By Assumption~\ref{ass:moment-nondeg}, $\mathbf S=0$. Since
$\mathbf S=(\boldsymbol{\Lambda}_\Delta)_{-i,-i}\boldsymbol{\Xi}$ and
$\boldsymbol{\Xi}$ is invertible,
$
\textcolor{black}{
(\boldsymbol{\Lambda}_\Delta)_{-i,-i}=0.
}$
Using \eqref{eq:lambda-minus-i}, we also obtain
$
\textcolor{black}{
(\boldsymbol{\Lambda}_\Delta)_{-i,i}=0.
}$
Together with $(\boldsymbol{\Lambda}_\Delta)_{ii}=0$, this shows that the only
possibly nonzero entries of $\boldsymbol{\Lambda}_\Delta$ are the off-diagonal
entries in row $i$. Moreover, from \eqref{eq:lyap:Z}, we get
$(\mathbf D_\Delta)_{-i,-i}=0$. Since we already showed
$(\mathbf D_\Delta)_{ii}=0$, it follows that
$
\textcolor{black}{
\mathbf D_\Delta=0.
}$

Now the observational covariance equation gives
$\textcolor{black}{
\boldsymbol{\Lambda}_\Delta\boldsymbol{\Sigma}
+
\boldsymbol{\Sigma}\boldsymbol{\Lambda}_\Delta^\top
=
0.
}$
Because only row $i$ of $\boldsymbol{\Lambda}_\Delta$ can be nonzero, write
$
\textcolor{black}{
\boldsymbol{\Lambda}_\Delta=\mathbf e_i\mathbf u^\top, u_i=0.
}
$
Then
$
\textcolor{black}{
\mathbf e_i\mathbf u^\top\boldsymbol{\Sigma}
+
\boldsymbol{\Sigma}\mathbf u\mathbf e_i^\top
=
0.
}$
Taking the $i$-th row gives
$
\textcolor{black}{
\mathbf u^\top\boldsymbol{\Sigma}
+
(\boldsymbol{\Sigma}\mathbf u)_i\mathbf e_i^\top
=
0.
}$
For every $j\neq i$, this implies
$(\mathbf u^\top\boldsymbol{\Sigma})_j=0$, while the $(i,i)$ entry gives
$2(\mathbf u^\top\boldsymbol{\Sigma})_i=0$. Therefore
$\mathbf u^\top\boldsymbol{\Sigma}=0$. Since $\boldsymbol{\Sigma}\succ0$ is
invertible, $\mathbf u=0$, and hence
$\textcolor{black}{
\boldsymbol{\Lambda}_\Delta=0.
}$
Finally, the observational mean equation gives
$\boldsymbol{\Lambda}_\Delta\boldsymbol{\mu}=\mathbf b_\Delta$, so
$\mathbf b_\Delta=0$. Hence the scaled-difference triple is zero:
$
\textcolor{black}{
\boldsymbol{\Lambda}'=c\boldsymbol{\Lambda}_\star,
\mathbf b'=c\mathbf b_\star,
\mathbf D'=c\mathbf D_\star.
}$
Since $\mathbf D'$ and $\mathbf D_\star$ are positive diagonal matrices, we have
$c>0$. This completes the proof.

\begin{assumption}
\label{assum:lambda_W}
\textcolor{black}{Consider right/left eigenbases of the true drift
$\boldsymbol{\Lambda}_\star$:}
$\textcolor{black}{
\boldsymbol{\Lambda}_\star\mathbf r_{\star,\ell}
=
\lambda_{\star,\ell}\mathbf r_{\star,\ell},
\boldsymbol{\rho}_{\star,k}^{\!\top}\boldsymbol{\Lambda}_\star
=
\lambda_{\star,k}\boldsymbol{\rho}_{\star,k}^{\!\top},
\boldsymbol{\rho}_{\star,k}^{\!\top}\mathbf r_{\star,\ell}
=
\delta_{k\ell}.
}$
\textcolor{black}{Let}
$
\textcolor{black}{
\mathbf R_\star:=[\mathbf r_{\star,1}\,\cdots\,\mathbf r_{\star,n}],
\mathbf P_\star:=[\boldsymbol{\rho}_{\star,1}\,\cdots\,\boldsymbol{\rho}_{\star,n}],
}$
\begingroup\color{black}
\textcolor{black}{so that $\mathbf P_\star^{\!\top}\mathbf R_\star=\mathbf I$.
Since the eigenvalues are distinct, $\mathbf R_\star$ is invertible.
We choose the rows of $\mathbf R_\star^{-1}$ as the left eigenvectors.
Here $\top$ denotes ordinary transpose, without complex conjugation.
Let $\mathbf w_\star\in\mathbb R^n$ be the true intervened-row vector, defined by
$(w_\star)_i=0$ and $(\mathbf w_\star)_{-i}
=(\boldsymbol{\Lambda}_\star)_{i,-i}^{\top}$, so that}
\endgroup
$\textcolor{black}{
\widetilde{\boldsymbol{\Lambda}}_\star^{(i)}
=
\boldsymbol{\Lambda}_\star-\mathbf e_i\mathbf w_\star^\top .
}$
\begin{enumerate}
    \item \textcolor{black}{We assume that $\boldsymbol{\Lambda}_\star$ has simple
    spectrum, i.e., $\lambda_{\star,k}\neq\lambda_{\star,\ell}$ for $k\neq \ell$.}

    \item \textcolor{black}{Let $\mathbf W_\star\in\mathbb C^{n\times n}$ be defined by}
    \[
    \textcolor{black}{
    (\mathbf W_\star)_{k\ell}
    :=
    \frac{
    (\boldsymbol{\rho}_{\star,k}^{\!\top}\mathbf e_i)
    \big(\mathbf w_\star^{\!\top}\boldsymbol{\Sigma}^{(i)}
    \boldsymbol{\rho}_{\star,\ell}\big)
    }{
    \lambda_{\star,k}+\lambda_{\star,\ell}
    }.
    }
    \]
    \textcolor{black}{We assume}
    \[
    \textcolor{black}{
    0\notin\sigma(\mathbf W_\star+\mathbf W_\star^{\!\top}).
    }
    \]

    \item \textcolor{black}{With
    $\boldsymbol{\Gamma}:=\boldsymbol{\Sigma}-\boldsymbol{\Sigma}^{(i)}$, we assume:}
    $
    \textcolor{black}{
    \mathbf e_i^{\!\top}\boldsymbol{\Gamma}^{-1}
    \boldsymbol{\Sigma}^{(i)}\mathbf e_i\neq0 .
    }$
\end{enumerate}
\end{assumption}

\begin{lemma}
\label{lemma:Gamma_inv}
Under Assumption~\ref{assum:lambda_W}, the matrix $\boldsymbol{\Xi}$ is invertible.
\end{lemma}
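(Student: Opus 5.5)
The plan is to relate $\boldsymbol{\Xi}$ to an $n\times n$ matrix whose determinant we can compute, and to control that determinant with the three items of Assumption~\ref{assum:lambda_W}. Concretely, let $\mathbf H:=\boldsymbol{\Gamma}+(\boldsymbol{\Sigma}^{(i)}-\boldsymbol{\Gamma})\mathbf e_i\mathbf e_i^{\!\top}$. This is $\boldsymbol{\Gamma}$ with its $i$-th column replaced by $\boldsymbol{\Sigma}^{(i)}\mathbf e_i$.

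\textbf{Step 1: reduce to $\det\mathbf H\neq 0$.} After moving index $i$ to the front, $\mathbf H$ has the block form
\[
\begin{bmatrix}\Sigma^{(i)}_{ii} & \boldsymbol{\Gamma}_{i,-i}\\ \boldsymbol{\Sigma}^{(i)}_{-i,i} & \boldsymbol{\Gamma}_{-i,-i}\end{bmatrix}.
\]
Its Schur complement with respect to the pivot $\Sigma^{(i)}_{ii}>0$ is exactly $\boldsymbol{\Xi}$ in \eqref{eq:Xi}. Hence $\det\mathbf H=\Sigma^{(i)}_{ii}\det\boldsymbol{\Xi}$, and it suffices to show $\det\mathbf H\neq0$.

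\textbf{Step 2: factor $\det\mathbf H$.} Assuming $\boldsymbol{\Gamma}$ is invertible, apply the matrix determinant lemma to the rank-one update:
\[
\det\mathbf H=\det\boldsymbol{\Gamma}\,\bigl(1+\mathbf e_i^{\!\top}\boldsymbol{\Gamma}^{-1}(\boldsymbol{\Sigma}^{(i)}-\boldsymbol{\Gamma})\mathbf e_i\bigr)
=\det\boldsymbol{\Gamma}\;\mathbf e_i^{\!\top}\boldsymbol{\Gamma}^{-1}\boldsymbol{\Sigma}^{(i)}\mathbf e_i .
\]
The second factor is nonzero by item~3 of Assumption~\ref{assum:lambda_W}. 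Everything therefore reduces to showing that $\boldsymbol{\Gamma}$ is invertible.

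\textbf{Step 3: invertibility of $\boldsymbol{\Gamma}$.} This is the main step. Both true systems share the same $\mathbf D_\star$. Subtracting the interventional Lyapunov equation from the observational one, and using $\widetilde{\boldsymbol{\Lambda}}_\star^{(i)}=\boldsymbol{\Lambda}_\star-\mathbf e_i\mathbf w_\star^{\!\top}$, gives
\[
\boldsymbol{\Lambda}_\star\boldsymbol{\Gamma}+\boldsymbol{\Gamma}\boldsymbol{\Lambda}_\star^{\!\top}
=-\bigl(\mathbf e_i\mathbf w_\star^{\!\top}\boldsymbol{\Sigma}^{(i)}+\boldsymbol{\Sigma}^{(i)}\mathbf w_\star\mathbf e_i^{\!\top}\bigr).
\]
By item~1 (simple spectrum), $\boldsymbol{\Lambda}_\star$ is diagonalizable, and $\mathbf P_\star^{\!\top}=\mathbf R_\star^{-1}$. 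Conjugate by setting $\mathbf G:=\mathbf P_\star^{\!\top}\boldsymbol{\Gamma}\mathbf P_\star$. The left side becomes $(\lambda_{\star,k}+\lambda_{\star,\ell})G_{k\ell}$. The right side becomes
\[
-\bigl[(\boldsymbol{\rho}_{\star,k}^{\!\top}\mathbf e_i)(\mathbf w_\star^{\!\top}\boldsymbol{\Sigma}^{(i)}\boldsymbol{\rho}_{\star,\ell})+(k\leftrightarrow\ell)\bigr].
\]
Positive stability gives $\lambda_{\star,k}+\lambda_{\star,\ell}\neq0$, so we may divide, and we obtain $\mathbf G=-(\mathbf W_\star+\mathbf W_\star^{\!\top})$. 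Therefore
\[
\boldsymbol{\Gamma}=-\mathbf R_\star(\mathbf W_\star+\mathbf W_\star^{\!\top})\mathbf R_\star^{\!\top}.
\]
Since $\mathbf R_\star$ is invertible, item~2 makes $\boldsymbol{\Gamma}$ invertible.

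Combining the three steps, $\det\boldsymbol{\Xi}=\det\mathbf H/\Sigma^{(i)}_{ii}\neq0$. The only care needed is bookkeeping in Step~3. The transposes are unconjugated, so the complex eigenbasis is handled purely algebraically. We must also check that the conjugation indeed produces $\mathbf W_\star$ rather than its transpose; this is harmless, since only the symmetric sum enters.
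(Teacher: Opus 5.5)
Your proof is correct and follows essentially the same route as the paper: the modal expansion giving $\boldsymbol{\Gamma}=-\mathbf R_\star(\mathbf W_\star+\mathbf W_\star^{\top})\mathbf R_\star^{\top}$, followed by the column-replacement determinant identity $\det\mathbf H=\det\boldsymbol{\Gamma}\,\mathbf e_i^{\top}\boldsymbol{\Gamma}^{-1}\boldsymbol{\Sigma}^{(i)}\mathbf e_i$ and the Schur determinant formula $\det\mathbf H=\Sigma^{(i)}_{ii}\det\boldsymbol{\Xi}$. The differences are cosmetic. You conjugate the $\boldsymbol{\Gamma}$-Sylvester equation directly by $\mathbf P_\star$, whereas the paper first introduces an auxiliary solution $\mathbf U$ with $\boldsymbol{\Gamma}=\mathbf U+\mathbf U^{\top}$. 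You also invoke the matrix determinant lemma where the paper cites Cramer's rule.
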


\begin{proof}
Since
$\textcolor{black}{
\widetilde{\boldsymbol{\Lambda}}_\star^{(i)}
=
\boldsymbol{\Lambda}_\star-\mathbf e_i\mathbf w_\star^\top,
}$
\textcolor{black}{subtracting the true observational and interventional Lyapunov
equations gives}
\begin{equation}
\label{eq:true-Gamma-Sylvester}
\textcolor{black}{
\boldsymbol{\Lambda}_\star\boldsymbol{\Gamma}
+
\boldsymbol{\Gamma}\boldsymbol{\Lambda}_\star^\top
+
\mathbf e_i\mathbf w_\star^\top\boldsymbol{\Sigma}^{(i)}
+
\boldsymbol{\Sigma}^{(i)}\mathbf w_\star\mathbf e_i^\top
=
0.
}
\end{equation}

\textcolor{black}{Let $\mathbf U$ solve the Sylvester equation}
\begin{equation}
\label{eq:U-Syl-star}
\textcolor{black}{
\boldsymbol{\Lambda}_\star\mathbf U
+
\mathbf U\boldsymbol{\Lambda}_\star^{\!\top}
=
-\mathbf e_i\big(\mathbf w_\star^{\!\top}\boldsymbol{\Sigma}^{(i)}\big).
}
\end{equation}
\begingroup\color{black}
\textcolor{black}{The Sylvester operator $X\mapsto AX+XB$ is
invertible precisely when $A$ and $-B$ have disjoint spectra.
Here $A=\boldsymbol\Lambda_\star$ and $B=\boldsymbol\Lambda_\star^\top$;
positive stability makes every eigenvalue sum nonzero. Thus this
operator is invertible. Transposing \eqref{eq:U-Syl-star} and adding
gives}
\endgroup
\[
\textcolor{black}{
\boldsymbol{\Lambda}_\star(\mathbf U+\mathbf U^\top)
+
(\mathbf U+\mathbf U^\top)\boldsymbol{\Lambda}_\star^\top
=
-\mathbf e_i\mathbf w_\star^\top\boldsymbol{\Sigma}^{(i)}
-
\boldsymbol{\Sigma}^{(i)}\mathbf w_\star\mathbf e_i^\top .
}
\]
\textcolor{black}{Comparing with \eqref{eq:true-Gamma-Sylvester} and using
uniqueness gives}
\begin{equation}
\label{eq:Gamma-U-star}
\textcolor{black}{
\boldsymbol{\Gamma}=\mathbf U+\mathbf U^\top .
}
\end{equation}

\textcolor{black}{Expand $\mathbf U$ in the right-eigenvector basis on both sides:}
\[
\textcolor{black}{
\mathbf U
=
\sum_{k,\ell}u_{k\ell}\,
\mathbf r_{\star,k}\mathbf r_{\star,\ell}^{\!\top},
}
\]
\textcolor{black}{where 
$u_{k\ell}
=
\boldsymbol{\rho}_{\star,k}^{\!\top}\mathbf U
\boldsymbol{\rho}_{\star,\ell}$. Multiplying \eqref{eq:U-Syl-star} on the left
by $\boldsymbol{\rho}_{\star,k}^{\!\top}$ and on the right by
$\boldsymbol{\rho}_{\star,\ell}$ gives}
\[
\textcolor{black}{
(\lambda_{\star,k}+\lambda_{\star,\ell})u_{k\ell}
=
-
(\boldsymbol{\rho}_{\star,k}^{\!\top}\mathbf e_i)
\big(\mathbf w_\star^{\!\top}\boldsymbol{\Sigma}^{(i)}
\boldsymbol{\rho}_{\star,\ell}\big).
}
\]
\textcolor{black}{Thus $u_{k\ell}=-(\mathbf W_\star)_{k\ell}$, and}
\begin{equation}
\label{eq:U-modal-star}
\textcolor{black}{
\mathbf U=-\mathbf R_\star\mathbf W_\star\mathbf R_\star^\top .
}
\end{equation}
\textcolor{black}{Using \eqref{eq:Gamma-U-star},}
\[
\textcolor{black}{
\boldsymbol{\Gamma}
=
-\mathbf R_\star(\mathbf W_\star+\mathbf W_\star^\top)\mathbf R_\star^\top .
}
\]
\textcolor{black}{By Assumption~\ref{assum:lambda_W}(2),
$\mathbf W_\star+\mathbf W_\star^\top$ is invertible. Since
$\mathbf R_\star$ is invertible, $\boldsymbol{\Gamma}$ is invertible.}

Let $J=\{1,\dots,n\}\setminus\{i\}$. Define
\[
\mathbf M_i
:=
\begin{bmatrix}
\boldsymbol{\Sigma}^{(i)}_{ii} & \boldsymbol{\Gamma}_{i,J}\\[2pt]
\boldsymbol{\Sigma}^{(i)}_{J,i} & \boldsymbol{\Gamma}_{J,J}
\end{bmatrix}.
\]
\begingroup\color{black}
\textcolor{black}{This matrix is obtained from $\boldsymbol{\Gamma}$ by replacing
its $i$-th column with $\boldsymbol{\Sigma}^{(i)}\mathbf e_i$, followed
by the same row and column permutation placing $i$ first; this
permutation leaves the determinant unchanged. By Cramer's rule,}
\endgroup
\[
\textcolor{black}{
\det(\mathbf M_i)
=
\det(\boldsymbol{\Gamma})\,
\mathbf e_i^\top\boldsymbol{\Gamma}^{-1}
\boldsymbol{\Sigma}^{(i)}\mathbf e_i .
}
\]
\begingroup\color{black}
\textcolor{black}{By Assumption~\ref{assum:lambda_W}(3), $\det(\mathbf M_i)\neq0$.
On the other hand, the Schur determinant identity gives,}
\endgroup
\[
\textcolor{black}{
\det(\mathbf M_i)
=
\boldsymbol{\Sigma}^{(i)}_{ii}\det(\boldsymbol{\Xi}).
}
\]
\textcolor{black}{Since $\boldsymbol{\Sigma}^{(i)}_{ii}>0$, we get
$\det(\boldsymbol{\Xi})\neq0$. Thus $\boldsymbol{\Xi}$ is invertible.}
\end{proof}

\subsection{Proof of Theorem \ref{th:multiSCC}}
\label{app:multiSCC}
\begingroup\color{black}
Let $\mathbf{T}$ be the target block. For a nonsingleton root SCC, its
\endgroup
observational and interventional marginals are sub-blocks of the observed
\begingroup\color{black}
moments. The isolated-realization hypothesis makes the $\boldsymbol\Xi$
invertibility and skew-kernel conditions used in the proof of
Theorem~\ref{th:Single SCC} generic, by the rational-rank argument below.
That proof therefore identifies the root parameters up to a positive common
scale. If the root SCC is a singleton, its observational equations
$\lambda\mu=b$ and $2\lambda\Sigma=d$, with $\lambda>0$, give the same conclusion.
\endgroup
Suppose the parameters of block $\mathbf{P}$ satisfy
$\mathbf{b}_{\mathbf{P}}=c\,\bar{\mathbf{b}}_{\mathbf{P}}$ and
$\mathbf{D}_{\mathbf{P}}=c\,\bar{\mathbf{D}}_{\mathbf{P}}$ for a scalar $c>0$ and known representatives
$\bar{\mathbf{b}}_{\mathbf{P}},\bar{\mathbf{D}}_{\mathbf{P}}$. Define the residual covariances and cross terms
\[
\mathbf{Z}:=\boldsymbol{\Sigma}_{\mathbf{T}\mid \mathbf{P}},\quad
\mathbf{Z}^{(i)}:=\boldsymbol{\Sigma}^{(i)}_{\mathbf{T}\mid \mathbf{P}},\quad
\mathbf{B}:=\boldsymbol{\Sigma}_{\mathbf{T}\mathbf{P}}\boldsymbol{\Sigma}_{\mathbf{P}\mathbf{P}}^{-1},\quad
\mathbf{B}^{(i)}:=\boldsymbol{\Sigma}^{(i)}_{\mathbf{T}\mathbf{P}}\big(\boldsymbol{\Sigma}^{(i)}_{\mathbf{P}\mathbf{P}}\big)^{-1}.
\]
\textcolor{black}{Define the parent mean contribution}
$\textcolor{black}{
\mathbf h_{\mathbf P}
:=
\bar{\mathbf b}_{\mathbf P}
-
\bar{\mathbf D}_{\mathbf P}
\boldsymbol{\Sigma}_{\mathbf P\mathbf P}^{-1}
\boldsymbol{\mu}_{\mathbf P}.
}$
\textcolor{black}{Then set}
\[
\textcolor{black}{
\mathbf{v}:=\mathbf{B}\mathbf h_{\mathbf P},\quad
\mathbf{v}^{(i)}:=\mathbf{B}^{(i)}\mathbf h_{\mathbf P},
}
\qquad
\mathbf{Q}:=\mathbf{B}\,\bar{\mathbf{D}}_{\mathbf{P}}\,\mathbf{B}^\top,\quad
\mathbf{Q}^{(i)}:=\mathbf{B}^{(i)}\bar{\mathbf{D}}_{\mathbf{P}}\,\mathbf{B}^{(i)\top}.
\]
\textcolor{black}{Here we used that the intervention is inside the target block $\mathbf T$, so the upstream parent moments $\boldsymbol{\mu}_{\mathbf P}$ and $\boldsymbol{\Sigma}_{\mathbf P\mathbf P}$ are unchanged by the intervention.}
Indeed, under the hard intervention, row \(i\) of the full drift has no
off-diagonal entries. In particular, the \(i\)-th row of
\(\widetilde{\boldsymbol{\Lambda}}^{(i)}_{\mathbf T\mathbf P}\) is zero and the
\(i\)-th row of
\(\widetilde{\boldsymbol{\Lambda}}^{(i)}_{\mathbf T\mathbf T}\) is
\((\lambda_{ii},0)\). Taking the \(i\)-th row of the interventional
cross-covariance Lyapunov block gives
\[
\lambda_{ii}\boldsymbol{\Sigma}^{(i)}_{i,\mathbf P}
+
\boldsymbol{\Sigma}^{(i)}_{i,\mathbf P}
\boldsymbol{\Lambda}_{\mathbf P\mathbf P}^{\top}
=0.
\]
Since \(\lambda_{ii}\mathbf I+\boldsymbol{\Lambda}_{\mathbf P\mathbf P}^{\top}\)
is invertible, we obtain
\(\boldsymbol{\Sigma}^{(i)}_{i,\mathbf P}=0\). Hence the \(i\)-th row of
\(\mathbf B^{(i)}
=
\boldsymbol{\Sigma}^{(i)}_{\mathbf T\mathbf P}
(\boldsymbol{\Sigma}^{(i)}_{\mathbf P\mathbf P})^{-1}\)
is zero, and therefore
\((\mathbf Q^{(i)})_{i,-i}=(\mathbf Q^{(i)})_{-i,i}=0\).


Partition the drift on $\mathbf{T}$ as
\[
\boldsymbol{\Lambda}_{\mathbf{T}\mathbf{T}}=
\begin{bmatrix}
\lambda_{ii} & \boldsymbol{\lambda}_{i,-i}\\
\boldsymbol{\lambda}_{-i,i} & \boldsymbol{\Lambda}_{-i,-i}
\end{bmatrix},
\qquad
\widetilde{\boldsymbol{\Lambda}}^{(i)}_{\mathbf{T}\mathbf{T}}=
\begin{bmatrix}
\lambda_{ii} & 0\\
\boldsymbol{\lambda}_{-i,i} & \boldsymbol{\Lambda}_{-i,-i}
\end{bmatrix},
\]
with unknown diagonal $\mathbf{D}_{\mathbf{T}}$ and vector $\mathbf{b}_{\mathbf{T}}$.

\textcolor{black}{We first justify the residual mean equation. From the block mean equations}
\[
\textcolor{black}{
\boldsymbol{\Lambda}_{\mathbf P\mathbf P}\boldsymbol{\mu}_{\mathbf P}
=
\mathbf b_{\mathbf P},
\qquad
\boldsymbol{\Lambda}_{\mathbf T\mathbf P}\boldsymbol{\mu}_{\mathbf P}
+
\boldsymbol{\Lambda}_{\mathbf T\mathbf T}\boldsymbol{\mu}_{\mathbf T}
=
\mathbf b_{\mathbf T},
}
\]
\textcolor{black}{and from}
\[
\textcolor{black}{
\boldsymbol{\mu}_{\mathbf T}
=
\boldsymbol{\mu}_{\mathbf T\mid\mathbf P}
+
\mathbf B\boldsymbol{\mu}_{\mathbf P},
}
\]
\textcolor{black}{we obtain}
\[
\textcolor{black}{
\boldsymbol{\Lambda}_{\mathbf T\mathbf T}\boldsymbol{\mu}_{\mathbf T\mid\mathbf P}
=
\mathbf b_{\mathbf T}
-
(\boldsymbol{\Lambda}_{\mathbf T\mathbf P}
+
\boldsymbol{\Lambda}_{\mathbf T\mathbf T}\mathbf B)
\boldsymbol{\mu}_{\mathbf P}.
}
\]
\textcolor{black}{The cross-covariance block of the Lyapunov equation gives}
\[
\textcolor{black}{
\boldsymbol{\Lambda}_{\mathbf T\mathbf P}\boldsymbol{\Sigma}_{\mathbf P\mathbf P}
+
\boldsymbol{\Lambda}_{\mathbf T\mathbf T}\boldsymbol{\Sigma}_{\mathbf T\mathbf P}
+
\boldsymbol{\Sigma}_{\mathbf T\mathbf P}\boldsymbol{\Lambda}_{\mathbf P\mathbf P}^{\top}
=
0.
}
\]
\textcolor{black}{Using $\mathbf B=\boldsymbol{\Sigma}_{\mathbf T\mathbf P}\boldsymbol{\Sigma}_{\mathbf P\mathbf P}^{-1}$, this implies}
\[
\textcolor{black}{
\boldsymbol{\Lambda}_{\mathbf T\mathbf P}
+
\boldsymbol{\Lambda}_{\mathbf T\mathbf T}\mathbf B
=
-\mathbf B\boldsymbol{\Sigma}_{\mathbf P\mathbf P}
\boldsymbol{\Lambda}_{\mathbf P\mathbf P}^{\top}
\boldsymbol{\Sigma}_{\mathbf P\mathbf P}^{-1}.
}
\]
\textcolor{black}{The parent Lyapunov equation}
\[
\textcolor{black}{
\boldsymbol{\Lambda}_{\mathbf P\mathbf P}\boldsymbol{\Sigma}_{\mathbf P\mathbf P}
+
\boldsymbol{\Sigma}_{\mathbf P\mathbf P}\boldsymbol{\Lambda}_{\mathbf P\mathbf P}^{\top}
=
\mathbf D_{\mathbf P}
}
\]
\textcolor{black}{then gives}
\[
\textcolor{black}{
\boldsymbol{\Lambda}_{\mathbf T\mathbf P}
+
\boldsymbol{\Lambda}_{\mathbf T\mathbf T}\mathbf B
=
\mathbf B\boldsymbol{\Lambda}_{\mathbf P\mathbf P}
-
\mathbf B\mathbf D_{\mathbf P}
\boldsymbol{\Sigma}_{\mathbf P\mathbf P}^{-1}.
}
\]
\textcolor{black}{Since $\mathbf b_{\mathbf P}=c\bar{\mathbf b}_{\mathbf P}$ and $\mathbf D_{\mathbf P}=c\bar{\mathbf D}_{\mathbf P}$, we get}
\[
\textcolor{black}{
\boldsymbol{\Lambda}_{\mathbf T\mathbf T}\boldsymbol{\mu}_{\mathbf T\mid\mathbf P}
=
\mathbf b_{\mathbf T}
-
c\,\mathbf B
\left(
\bar{\mathbf b}_{\mathbf P}
-
\bar{\mathbf D}_{\mathbf P}
\boldsymbol{\Sigma}_{\mathbf P\mathbf P}^{-1}
\boldsymbol{\mu}_{\mathbf P}
\right)
=
\mathbf b_{\mathbf T}-c\mathbf v.
}
\]

On $\mathbf{T}$, the stationary equations are
\[
\boldsymbol{\Lambda}_{\mathbf{T}\mathbf{T}}\mathbf{Z}
+
\mathbf{Z}\boldsymbol{\Lambda}_{\mathbf{T}\mathbf{T}}^\top
=
\mathbf{D}_{\mathbf{T}} + c\,\mathbf{Q},
\qquad
\boldsymbol{\Lambda}_{\mathbf{T}\mathbf{T}}\boldsymbol{\mu}_{\mathbf{T}\mid \mathbf{P}}
=
\mathbf{b}_{\mathbf{T}} - c\,\mathbf{v},
\]
and, under intervention,
\begin{equation}
\widetilde{\boldsymbol{\Lambda}}^{(i)}_{\mathbf{T}\mathbf{T}}\mathbf{Z}^{(i)}
+
\mathbf{Z}^{(i)}\widetilde{\boldsymbol{\Lambda}}^{(i)\top}_{\mathbf{T}\mathbf{T}}
=
\mathbf{D}_{\mathbf{T}} + c\,\mathbf{Q}^{(i)},
\qquad
\widetilde{\boldsymbol{\Lambda}}^{(i)}_{\mathbf{T}\mathbf{T}}\boldsymbol{\mu}^{(i)}_{\mathbf{T}\mid \mathbf{P}}
=
\mathbf{b}_{\mathbf{T}} - c\,\mathbf{v}^{(i)}.
\label{eq:Multi_Z(i)}
\end{equation}
Subtracting gives the difference relations
\[
\boldsymbol{\Lambda}_{\mathbf{T}\mathbf{T}}\mathbf{Z}
+
\mathbf{Z}\boldsymbol{\Lambda}_{\mathbf{T}\mathbf{T}}^\top
-
\big(
\widetilde{\boldsymbol{\Lambda}}^{(i)}_{\mathbf{T}\mathbf{T}}\mathbf{Z}^{(i)}
+
\mathbf{Z}^{(i)}\widetilde{\boldsymbol{\Lambda}}^{(i)\top}_{\mathbf{T}\mathbf{T}}
\big)
=
c\,(\mathbf{Q}-\mathbf{Q}^{(i)}),
\]
\[
\boldsymbol{\Lambda}_{\mathbf{T}\mathbf{T}}\boldsymbol{\mu}_{\mathbf{T}\mid \mathbf{P}}
-
\widetilde{\boldsymbol{\Lambda}}^{(i)}_{\mathbf{T}\mathbf{T}}\boldsymbol{\mu}^{(i)}_{\mathbf{T}\mid \mathbf{P}}
=
c\,(\mathbf{v}^{(i)}-\mathbf{v}).
\]

\paragraph{Interventional \textcolor{black}{$(-i,i)$} block.}
Taking the \textcolor{black}{$(-i,i)$} block of the interventional Lyapunov equation and using that
$\mathbf{D}_{\mathbf{T}}$ is diagonal and \textcolor{black}{$(\mathbf{Q}^{(i)})_{-i,i}=0$} yields
\[
\lambda_{ii}\,\mathbf{Z}^{(i)}_{-i,i}
+
\boldsymbol{\lambda}_{-i,i}\,Z^{(i)}_{ii}
+
\boldsymbol{\Lambda}_{-i,-i}\,\mathbf{Z}^{(i)}_{-i,i}
=0.
\]
Since $Z^{(i)}_{ii}>0$, write
\[
\boldsymbol{u}:=\frac{\mathbf{Z}^{(i)}_{-i,i}}{Z^{(i)}_{ii}}
\]
to obtain
\begin{equation}
\boldsymbol{\lambda}_{-i,i}
+
(\boldsymbol{\Lambda}_{-i,-i}+\lambda_{ii}\mathbf{I})\,\boldsymbol{u}
=0.
\label{eq:lambda-i,i}
\end{equation}

\paragraph{The $\boldsymbol{\Xi}$-equation on $(-i,-i)$.}
Let $\boldsymbol{\Gamma}:=\mathbf{Z}-\mathbf{Z}^{(i)}$ and define the data-only matrix
\[
\boldsymbol{\Xi}:=\boldsymbol{\Gamma}_{-i,-i} - \boldsymbol{u}\,\boldsymbol{\Gamma}_{i,-i}.
\]
From the $(-i,-i)$ block of the difference Lyapunov equation and the identity above,
\begin{equation}
\boldsymbol{\Lambda}_{-i,-i}\,\boldsymbol{\Xi}
+
\boldsymbol{\Xi}^\top\boldsymbol{\Lambda}_{-i,-i}^\top
=
c\,(\mathbf{Q}-\mathbf{Q}^{(i)})_{-i,-i}
+
\lambda_{ii}\,
\Big(
\boldsymbol{u}\,\boldsymbol{\Gamma}_{i,-i}
+
\boldsymbol{\Gamma}_{-i,i}\,\boldsymbol{u}^\top
\Big).
\label{eq:multi_Xi}
\end{equation}

From \eqref{eq:Multi_Z(i)} and \eqref{eq:lambda-i,i}, we derive
\begin{equation}
\operatorname{offdiag}\!\Big(
\boldsymbol{\Lambda}_{-i,-i}\,\mathbf S^{(i)}
+
\mathbf S^{(i)}\,\boldsymbol{\Lambda}_{-i,-i}^{\!\top}
\Big)
-
\lambda_{ii}\,\operatorname{offdiag}\!\Big(
\boldsymbol{u}\,\mathbf Z^{(i)}_{i,-i}
+
\mathbf Z^{(i)}_{-i,i}\,\boldsymbol{u}^{\!\top}
\Big)
=
\operatorname{offdiag}\!\big(c\,\mathbf Q^{(i)}_{-i,-i}\big),
\label{eq:S-eq}
\end{equation}
with
\[
\mathbf S^{(i)}
:=
\mathbf Z^{(i)}_{-i,-i}
-
\boldsymbol{u}\,\mathbf Z^{(i)}_{i,-i}
\succ0.
\]
Define the linear maps
\[
\mathcal A_S(\mathbf X)
:=
\operatorname{offdiag}(\mathbf X\mathbf S^{(i)}+\mathbf S^{(i)}\mathbf X^{\!\top}),
\qquad
\mathcal A_\Xi(\mathbf X)
:=
\mathbf X\boldsymbol{\Xi}+\boldsymbol{\Xi}^{\!\top}\mathbf X^{\!\top}.
\]
Equations \eqref{eq:multi_Xi} and \eqref{eq:S-eq} form a stacked linear system in $\boldsymbol{\Lambda}_{-i,-i}$,
\[
\begin{bmatrix}
\mathcal A_S \\[2pt] \mathcal A_\Xi
\end{bmatrix}\!(\boldsymbol{\Lambda}_{-i,-i})
=
c\,
\begin{bmatrix}
\operatorname{offdiag}(\mathbf Q^{(i)}_{-i,-i})\\[2pt]
(\mathbf Q-\mathbf Q^{(i)})_{-i,-i}
\end{bmatrix}
+
\lambda_{ii}\,
\begin{bmatrix}
\operatorname{offdiag}\!\big(\boldsymbol{u}\,\mathbf Z^{(i)}_{i,-i}+\mathbf Z^{(i)}_{-i,i}\,\boldsymbol{u}^{\!\top}\big)\\[2pt]
\boldsymbol{u}\,\boldsymbol{\Gamma}_{i,-i} + \boldsymbol{\Gamma}_{-i,i}\,\boldsymbol{u}^{\!\top}
\end{bmatrix}.
\]

Since both the true and competing drift matrices are compatible with the
fixed graph $G$, their relevant blocks belong to the same allowed linear
space. 
Let $\mathcal V_G$ be the linear space of matrices with the allowed sparsity pattern
of $\boldsymbol{\Lambda}_{-i,-i}$. Define
\[
\mathcal L_{\rm blk}(\mathbf X)
:=
\begin{bmatrix}
\operatorname{offdiag}(\mathbf X\mathbf S^{(i)}+\mathbf S^{(i)}\mathbf X^\top)\\[2pt]
\mathbf X\boldsymbol{\Xi}+\boldsymbol{\Xi}^{\top}\mathbf X^\top
\end{bmatrix},
\qquad \mathbf X\in\mathcal V_G.
\]
\begingroup\color{black}
For a nonsingleton target SCC, choose the isolated realization supplied by
the theorem's hypothesis and set its incoming couplings to zero. Embed it
in the full graph by taking the remaining variables to be independent scalar
OU processes. The residual target moments then equal the isolated moments.
For a singleton target, $\mathcal V_G$ has dimension zero, so the stacked
operator is injective automatically.

\endgroup
Let $\mathbf L_{\rm blk}$ be any matrix representation of $\mathcal L_{\rm blk}$. At the point \(\boldsymbol{\Lambda}_{\mathbf T\mathbf P}=0\), the
residual target-block equations reduce to the single-SCC equations. Hence, under
the single-SCC nondegeneracy assumptions, the stacked operator
\(\mathcal L_{\rm blk}\) is injective at this point. Since the entries of a
matrix representation of \(\mathcal L_{\rm blk}\) depend continuously, indeed
rationally, on the model parameters, this full-column-rank condition is not
identically violated and therefore holds generically. Thus, generically, the
stacked linear system uniquely determines \(\boldsymbol{\Lambda}_{-i,-i}\) from
\begingroup\color{black}
its right-hand side. A \emph{rank certificate} here is one nonzero
maximal minor of $\mathbf L_{\rm blk}$, and the genericity conclusion
uses the rational-witness argument in Appendix~\ref{app:genericity}.
The single-SCC assumptions in fact give injectivity on the full reduced
matrix space at the isolated witness, and hence on its subspace
$\mathcal V_G$.
\endgroup
 Since the right-hand side is affine in $c$ and $\lambda_{ii}$, there exist data-dependent matrices $\mathbf K_0,\mathbf K_2$ such that
\begin{equation}
\boldsymbol{\Lambda}_{-i,-i}
=
c\,\mathbf K_0
+
\lambda_{ii}\,\mathbf K_2.
\label{eq:Lii-affine}
\end{equation}
\begingroup\color{black}
For a fixed parent scale $c$, this is affine dependence on
$\lambda_{ii}$.
\endgroup
\textcolor{black}{From \eqref{eq:lambda-i,i},}
\[
\textcolor{black}{
\boldsymbol{\lambda}_{-i,i}
=
-\big(c\mathbf K_0+\lambda_{ii}\mathbf K_2+\lambda_{ii}\mathbf I\big)\boldsymbol u .
}
\]

\textcolor{black}{Using the observational $(i,-i)$ block of the Lyapunov equation,}
\[
\textcolor{black}{
\lambda_{ii}\mathbf Z_{i,-i}
+
\boldsymbol{\lambda}_{i,-i}\mathbf Z_{-i,-i}
+
Z_{ii}\boldsymbol{\lambda}_{-i,i}^{\top}
+
\mathbf Z_{i,-i}\boldsymbol{\Lambda}_{-i,-i}^{\top}
=
c\mathbf Q_{i,-i}.
}
\]
\textcolor{black}{Since $\mathbf Z_{-i,-i}\succ0$, we obtain}
\[
\textcolor{black}{
\boldsymbol{\lambda}_{i,-i}
=
\big(
c\mathbf Q_{i,-i}
-\lambda_{ii}\mathbf Z_{i,-i}
-
Z_{ii}\boldsymbol{\lambda}_{-i,i}^{\top}
-
\mathbf Z_{i,-i}\boldsymbol{\Lambda}_{-i,-i}^{\top}
\big)\mathbf Z_{-i,-i}^{-1}.
}
\]
\textcolor{black}{Substituting the expressions for $\boldsymbol{\Lambda}_{-i,-i}$ and $\boldsymbol{\lambda}_{-i,i}$ gives}
\begin{equation}
\textcolor{black}{
\boldsymbol{\lambda}_{i,-i}
=
\mathbf A_0+\lambda_{ii}\mathbf A_1,
}
\label{eq:lambda-i-minus-affine}
\end{equation}
\textcolor{black}{where}
\[
\textcolor{black}{
\mathbf A_0
=
c\Big(
\mathbf Q_{i,-i}
+
Z_{ii}\boldsymbol u^\top\mathbf K_0^\top
-
\mathbf Z_{i,-i}\mathbf K_0^\top
\Big)\mathbf Z_{-i,-i}^{-1},
}
\]
and
\[
\textcolor{black}{
\mathbf A_1
=
\Big(
-\mathbf Z_{i,-i}
+
Z_{ii}\boldsymbol u^\top(\mathbf I+\mathbf K_2^\top)
-
\mathbf Z_{i,-i}\mathbf K_2^\top
\Big)\mathbf Z_{-i,-i}^{-1}.
}
\]
\textcolor{black}{Thus $\mathbf A_0$ scales linearly with $c$, while $\mathbf K_2$ and $\mathbf A_1$ are data-dependent and do not scale with $c$.}

From the $i$-th component of the mean difference, let
\[
\Delta\mu_i
:=
(\boldsymbol{\mu}_{\mathbf{T}\mid \mathbf{P}})_i
-
(\boldsymbol{\mu}^{(i)}_{\mathbf{T}\mid \mathbf{P}})_i,
\qquad
\Delta v_i:=v^{(i)}_i-v_i.
\]
Then
\[
\lambda_{ii}\,\Delta\mu_i
+
\boldsymbol{\lambda}_{i,-i}(\boldsymbol{\mu}_{\mathbf{T}\mid \mathbf{P}})_{-i}
=
c\,\Delta v_i.
\]
Insert $\boldsymbol{\lambda}_{i,-i}=\mathbf{A}_0+\lambda_{ii}\mathbf{A}_1$ and group the terms to obtain
\[
\lambda_{ii}
\big(
\Delta\mu_i
+
\mathbf{A}_1(\boldsymbol{\mu}_{\mathbf{T}\mid \mathbf{P}})_{-i}
\big)
=
c\Delta v_i
-
\mathbf{A}_0(\boldsymbol{\mu}_{\mathbf{T}\mid \mathbf{P}})_{-i}.
\]

\begingroup\color{black}
The denominator below is generically nonzero. To justify the parameter
choice, write $A=\boldsymbol\Lambda_{\mathbf P\mathbf P}$ and
$S=\boldsymbol\Sigma_{\mathbf P\mathbf P}$ and normalize the parent scale
to one for this witness. The parent Lyapunov equation gives
\[
\mathbf h_{\mathbf P}
=\bigl(I-\mathbf D_{\mathbf P}S^{-1}A^{-1}\bigr)\mathbf b_{\mathbf P}
=-SA^\top S^{-1}A^{-1}\mathbf b_{\mathbf P}.
\]
This map is invertible. An allowed incoming edge followed by a simple
directed path inside $\mathbf T$ to $i$ gives a stable sparse realization
with $B_{ij}\ne0$ for some $j\in\mathbf P$.\footnote{\color{black} For the sparse path realization, take the parent drift and covariance to
be $I$, with parent diffusion $2I$, and target drift $2I-N$, where $N$
has unit entries along the chosen path $k=k_0\to\cdots\to k_m=i$ and
is zero elsewhere. Set
$\boldsymbol\Lambda_{\mathbf T\mathbf P}=-\eta e_ke_j^\top$ with
$\eta\ne0$ and use positive diagonal target diffusion. These drifts,
including hard-intervened drifts, are triangular after ordering the path
vertices and have positive diagonals. The cross-covariance equation gives
$B_{ij}=\eta/3^{m+1}\ne0$.} Both this entry and a
full-column-rank certificate of $\mathcal L_{\rm blk}$ are rational and
have nonzero witnesses, which may be different parameter choices.
By Appendix~\ref{app:genericity}, their nonvanishing can be made simultaneous
in an arbitrarily small admissible neighborhood of the decoupled rank
witness. Fix such covariance parameters, choose $\mathbf b_{\mathbf P}$
so that $v_i\ne0$, and take $\mathbf b_{\mathbf T}=\mathbf v$.
Then $\boldsymbol\mu_{\mathbf T\mid\mathbf P}=0$, while
$v_i^{(i)}=0$ and the interventional mean equation give
$\lambda_{ii}(\boldsymbol\mu^{(i)}_{\mathbf T\mid\mathbf P})_i=v_i$.
Consequently the denominator equals $-v_i/\lambda_{ii}\ne0$.
The denominator is rational on the full-rank admissible domain, so its
zero set is nongeneric. Restoring a parent scale $c$ gives
$-cv_i/\lambda_{ii}$ at the corresponding witness.
\endgroup
Hence $\lambda_{ii}$ is identified as
\[
\lambda_{ii}
=
\frac{
c\Delta v_i
-
\mathbf{A}_0(\boldsymbol{\mu}_{\mathbf{T}\mid \mathbf{P}})_{-i}
}{
\Delta\mu_i
+
\mathbf{A}_1(\boldsymbol{\mu}_{\mathbf{T}\mid \mathbf{P}})_{-i}
},
\]
where it is determined up to the same scale $c$.

With $\lambda_{ii}$ identified, the previous equations give
\[
\boldsymbol{\Lambda}_{-i,-i}
=
c\mathbf{K}_0
+
\lambda_{ii}\,\mathbf{K}_2,
\qquad
\boldsymbol{\lambda}_{-i,i}
=
-\Big(c\mathbf{K}_0+\lambda_{ii}\,\mathbf{K}_2+\lambda_{ii}\mathbf{I}\Big)\boldsymbol{u},
\qquad
\boldsymbol{\lambda}_{i,-i}
=
\mathbf{A}_0+\lambda_{ii}\mathbf{A}_1.
\]
Thus the entire $\boldsymbol{\Lambda}_{\mathbf{T}\mathbf{T}}$ is determined with the same scaling $c$.

Finally,
\[
\textcolor{black}{
\mathbf{D}_{\mathbf{T}}
=
\mathrm{diag}\big(
\boldsymbol{\Lambda}_{\mathbf{T}\mathbf{T}}\mathbf{Z}
+
\mathbf{Z}\boldsymbol{\Lambda}_{\mathbf{T}\mathbf{T}}^\top
-
c\,\mathbf{Q}
\big),
}
\]
and
\[
\textcolor{black}{
\mathbf{b}_{\mathbf{T}}
=
\boldsymbol{\Lambda}_{\mathbf{T}\mathbf{T}}
\boldsymbol{\mu}_{\mathbf{T}\mid \mathbf{P}}
+
c\,\mathbf{v}.
}
\]
Thus $(\boldsymbol{\Lambda}_{\mathbf{T}\mathbf{T}},\mathbf{D}_{\mathbf{T}},\mathbf{b}_{\mathbf{T}})$ are identified up to the same multiplicative scale $c$.

\subsection{An Example with Multiple Root SCCs}
\label{app:Example}
Consider an OU process with state indices $1,2,3,4$, where $1,2,3$ are root singletons and $4$ is a child singleton. Let
\[
\boldsymbol{\Lambda}=\begin{bmatrix}
\lambda_1&0&0&0\\
0&\lambda_2&0&0\\
0&0&\lambda_3&0\\
a_1&a_2&a_3&\lambda_4
\end{bmatrix},
\qquad
\mathbf{D}=\operatorname{diag}(d_1,d_2,d_3,d_4),
\]
and assume $\mathbf{D}$ is diagonal and each $\lambda_i>0$. Suppose we observe the steady-state mean and covariance $(\boldsymbol{\mu},\boldsymbol{\Sigma})$ and also the interventional moments $(\boldsymbol{\mu}^{(4)},\boldsymbol{\Sigma}^{(4)})$ under an intervention on node $4$ that zeros the entries $a_i$s (please note that interventions on root give the same mean and covariance as the observational ones).

For each root $i\in\{1,2,3\}$, the scalar OU gives
\[
\mu_i=\frac{b_i}{\lambda_i},\qquad s_i:=\Sigma_{ii}=\frac{d_i}{2\lambda_i}.
\]
Let $t_i:=\Sigma_{i4}$ denote the observed cross-covariances with the child. The off-diagonal Lyapunov equations yield
\begin{equation}
(\lambda_i+\lambda_4)t_i+s_i a_i=0
\quad\Longleftrightarrow\quad
a_i=-\frac{\lambda_i+\lambda_4}{s_i}\,t_i.
\label{eq:ai_from_ti}
\end{equation}
The child's observed mean and variance satisfy
\begin{align}
\lambda_4\mu_4+\sum_{i=1}^3 a_i\mu_i&=b_4, \label{eq:child_mean}\\
2\lambda_4 s_4+2\sum_{i=1}^3 a_i t_i&=d_4. \label{eq:child_var}
\end{align}
Under the intervention on $4$ (which zeros the $a_i$s), we also observe
\begin{equation}
\mu_4^{(4)}=\frac{b_4}{\lambda_4},\qquad
2\lambda_4 s_4^{(4)}=d_4,\qquad
t_i^{(4)}=0.
\label{eq:int_child}
\end{equation}

Now, fix any positive $(u_1,u_2,u_3)$ and a positive $v$ (to be chosen). Define
\[
\lambda_i'=u_i\lambda_i,\quad b_i'=u_i b_i,\quad d_i'=u_i d_i \ \ (i=1,2,3),\qquad
\lambda_4'=v\lambda_4,\quad b_4'=v b_4,\quad d_4'=v d_4,
\]
and choose $a_i'$ to preserve the observed $t_i$:
\begin{equation}
(\lambda_i'+\lambda_4')t_i+s_i a_i'=0
\quad\Longleftrightarrow\quad
a_i'=-\frac{u_i\lambda_i+v\lambda_4}{s_i}\,t_i.
\label{eq:ai_prime}
\end{equation}
Then for each root, $\mu_i'=\frac{b_i'}{\lambda_i'}=\mu_i$ and $s_i'=\frac{d_i'}{2\lambda_i'}=s_i$, while \eqref{eq:ai_prime} ensures the same $t_i$.

For the interventional moments on $4$, scaling $(\lambda_4,b_4,d_4)$ by the common factor $v$ preserves
$\mu_4^{(4)}=\frac{b_4}{\lambda_4}$ and $s_4^{(4)}=\frac{d_4}{2\lambda_4}$, and $t_i^{(4)}=0$ still holds since the $a_i$ are zeroed under intervention. Thus all interventional moments match \eqref{eq:int_child}.

It remains to enforce that the primed parameters also satisfy the child's \emph{observational} equations \eqref{eq:child_mean}--\eqref{eq:child_var}. Subtracting $v$ times the original \eqref{eq:child_var} from the primed version gives
\[
2\lambda_4's_4-2v\lambda_4 s_4+2\sum_{i=1}^3 (a_i'-v a_i)t_i=0
\ \Longleftrightarrow\
\sum_{i=1}^3 (a_i'-v a_i)t_i=0.
\]
Using \eqref{eq:ai_from_ti} and \eqref{eq:ai_prime},
\[
a_i'-v a_i
=-\frac{u_i\lambda_i+v\lambda_4}{s_i}t_i+v\frac{\lambda_i+\lambda_4}{s_i}t_i
=\frac{(v-u_i)\lambda_i}{s_i}\,t_i.
\]
Hence
\begin{equation}
\sum_{i=1}^3 (v-u_i)\,\lambda_i\,\frac{t_i^2}{s_i}=0
\quad\Longleftrightarrow\quad
v\sum_{i=1}^3 \alpha_i=\sum_{i=1}^3 u_i\alpha_i,
\qquad
\alpha_i:=\lambda_i\frac{t_i^2}{s_i}>0.
\label{eq:v_from_var}
\end{equation}
Thus
\begin{equation}
v=\frac{\sum_i u_i\alpha_i}{\sum_i \alpha_i}.
\label{eq:v_var_formula}
\end{equation}

Similarly, subtracting $v$ times \eqref{eq:child_mean} from the primed version gives
\[
\lambda_4'\mu_4-v\lambda_4\mu_4+\sum_{i=1}^3 (a_i'-v a_i)\mu_i=0
\ \Longleftrightarrow\
\sum_{i=1}^3 (v-u_i)\,\lambda_i\,\frac{t_i}{s_i}\,\mu_i=0,
\]
i.e.,
\begin{equation}
v\sum_{i=1}^3 \beta_i=\sum_{i=1}^3 u_i\beta_i,
\qquad
\beta_i:=\lambda_i\frac{t_i}{s_i}\,\mu_i=\frac{b_i t_i}{s_i}.
\label{eq:v_from_mean}
\end{equation}
Thus also
\begin{equation}
v=\frac{\sum_i u_i\beta_i}{\sum_i \beta_i}.
\label{eq:v_mean_formula}
\end{equation}

Equating \eqref{eq:v_var_formula} and \eqref{eq:v_mean_formula} yields a single linear constraint on $(u_1,u_2,u_3)$:
\begin{equation}
\frac{\sum_i u_i\alpha_i}{\sum_i \alpha_i}=\frac{\sum_i u_i\beta_i}{\sum_i \beta_i}
\ \Longleftrightarrow\
\sum_{i=1}^3 u_i\Big(\alpha_i\sum_j \beta_j-\beta_i\sum_j \alpha_j\Big)=0.
\label{eq:hyperplane}
\end{equation}
\begingroup\color{black}
Let $\gamma_i:=\alpha_i\sum_j \beta_j-\beta_i\sum_j \alpha_j$. Unless $(\gamma_1,\gamma_2,\gamma_3)=(0,0,0)$, \eqref{eq:hyperplane} defines a nontrivial hyperplane in $\mathbb{R}^3$, namely the
two-dimensional solution space of one nonzero linear equation (Note that the condition $\gamma_i=0$ for all $i$ is nongeneric). 
\endgroup
Indeed, it requires $\frac{\beta_1}{\alpha_1}=\frac{\beta_2}{\alpha_2}=\frac{\beta_3}{\alpha_3}$, i.e.,
$\frac{b_i}{\lambda_i t_i}$ is constant across $i$.
Moreover,
\(\sum_i\gamma_i=0\), so the hyperplane contains \((1,1,1)\). Therefore it
contains positive choices of \((u_1,u_2,u_3)\) arbitrarily close to
\((1,1,1)\), including choices that are not all equal.
This imposes two independent algebraic constraints on the continuous parameters 
$(b_i,\lambda_i,t_i)$, hence holds only on a measure-zero subset of the parameter space. 
Therefore, the set of positive $(u_1,u_2,u_3)$ satisfying \eqref{eq:hyperplane} has (at least) two degrees of freedom. For any such $(u_1,u_2,u_3)$, take $v$ from \eqref{eq:v_var_formula} (which equals \eqref{eq:v_mean_formula}) and define $a_i'$ by \eqref{eq:ai_prime}. 
Unless \(u_1=u_2=u_3=v\), the transformation is not a global rescaling of $(\boldsymbol{\Lambda},\mathbf{b},\mathbf{D})$, yet all moments (observational and interventional under node-$4$ intervention) coincide. Hence, identifiability up to a single global scale fails.

\section{Details of Experiments}
\label{app:experiments}
\subsection{Synthetic Data}\label{app:synthetic}

We generate synthetic datasets from a stable OU model. For each setup $|\mathcal{I}|\in\{0,2,4,6,8,10\}$ (number of single-node interventions) and $40$ instances, we fix $n=10$ variables and sample:
\begin{itemize}
\item \textbf{Drift matrix} $\boldsymbol{\Lambda}$: start from zeros; for $i\neq j$, set $\Lambda_{ij}\sim\mathcal{N}(0.2,\sigma^2)$ with probability $\rho$ and $0$ otherwise (default $\rho=0.3$, $\sigma=0.8$). Enforce positive stability by making each row strictly diagonally dominant:
\[
\Lambda_{ii}\leftarrow \max\!\bigl(\texttt{DIAG\_MIN},\,\sum_{j}|\Lambda_{ij}| + 0.2 + u_i\bigr),\quad u_i\sim\mathrm{Unif}[0,0.3],\ \texttt{DIAG\_MIN}=0.8.
\]
\item \textbf{Bias vector and diffusion matrix}: $\mathbf{b}\sim\mathrm{Unif}[0.2,1.5]^n$; $\mathbf{D}=\operatorname{diag}(\mathbf{d})$ with $d_k\sim\mathrm{Unif}[0.2,0.4]$.
\end{itemize}
Observational moments are computed via $\boldsymbol{\mu}=\boldsymbol{\Lambda}^{-1}\mathbf{b}$ and
$\boldsymbol{\Lambda}\boldsymbol{\Sigma}+\boldsymbol{\Sigma}\boldsymbol{\Lambda}^{\!\top}=\mathbf{D}$
(continuous-time Lyapunov). An intervention on node $k$ is implemented by zeroing row $k$ of $\boldsymbol{\Lambda}$ and restoring its diagonal entry (self-regulation preserved), leaving $\mathbf{b}$ and $\mathbf{D}$ unchanged; intervened moments $(\boldsymbol{\mu}^{(k)},\boldsymbol{\Sigma}^{(k)})$ are computed analogously. We then draw snapshots from the corresponding Gaussians (by default $40{,}000$ observational samples and $20{,}000$ samples per intervention).

\subsection{Real Data}\label{app:realdata}
\paragraph{Library-size normalization.}
For each cell $k\in\{1,\dots,T_c\}$ in context/intervention $c$ with raw counts $\{x_k^{\,i}\}_{i=1}^N$ (genes $i=1,\dots,N$), define the library size
$s_k=\sum_{i=1}^N x_k^{\,i}$. We normalize by fractions $f_k^{\,i}=x_k^{\,i}/s_k$ and then scale to a common size
\[
\tilde{x}_k^{\,i}\;=\;10^{4}\,f_k^{\,i}\;=\;10^{4}\,\frac{x_k^{\,i}}{s_k}.
\]
All downstream statistics are computed on $\tilde{x}_k^{\,i}$. We denote the cell vector
$\tilde{\mathbf{x}}_k=(\tilde{x}_k^{\,1},\dots,\tilde{x}_k^{\,N})^\top\in\mathbb{R}^N$.

\paragraph{Empirical moments with shrinkage.}
Given a context $c$ with $T_c$ cells $\{\tilde{\mathbf{x}}_k\}_{k=1}^{T_c}$, the empirical mean and (shrunk) covariance are
\[
\hat{\boldsymbol{\mu}}_c=\frac{1}{T_c}\sum_{k=1}^{T_c}\tilde{\mathbf{x}}_k,\qquad
\hat{\boldsymbol{\Sigma}}_{c}=(1-\eta_c)\,\hat{\boldsymbol{\Sigma}}_{c,\mathrm{sample}}+\eta_c\,\operatorname{diag}\!\big(\hat{\boldsymbol{\Sigma}}_{c,\mathrm{sample}}\big),
\]
where $\hat{\boldsymbol{\Sigma}}_{c,\mathrm{sample}}=\frac{1}{T_c-1}\sum_{k=1}^{T_c}(\tilde{\mathbf{x}}_k-\hat{\boldsymbol{\mu}}_c)(\tilde{\mathbf{x}}_k-\hat{\boldsymbol{\mu}}_c)^\top$ and
$\eta_c=\min\!\bigl(0.3,\;10/(T_c-1)\bigr)$.
\begingroup\color{black}
\endgroup

\paragraph{Model.}
We fit an OU model with parameters $(\boldsymbol{\Lambda},\mathbf{b},\boldsymbol{D})$ on wild-type and interventional data.
\begingroup\color{black}
\endgroup

\paragraph{Train/test split and metrics.}
Similar to previous work \cite{rohbeck2024bicycle}, interventions are split into train context IDs $\{0,\dots,54\}$ and held-out test context IDs $\{55,\dots,60\}$.  
\begingroup\color{black}
\endgroup

\subsection{Empirical study of graphical conditions}
\label{app:graphical conditions}
We emphasize that our identifiability result provides \emph{sufficient} conditions.
Thus, there may exist linear SDEs whose parameters are identifiable up to a global scaling even if
they do not satisfy the graphical conditions in Theorem \ref{th:multiSCC}. To empirically study the conservativeness of the sufficient conditions in
Theorem~\ref{th:multiSCC}, we constructed a family of OU models whose condensation
graph forms a star over SCCs. Each SCC
consists of a two-node directed cycle, with a single root SCC and $m$ leaf
SCCs. For each leaf SCC, we add exactly one directed edge from a randomly
chosen node in the root SCC to a randomly chosen node in the leaf SCC. Interventions are restricted to
leaf SCCs and are applied only to the node within each leaf SCC that does not
receive the incoming edge from the root SCC. For a fixed number of interventions $|\mathcal{I}| = k$, we randomly select
$k$ distinct leaf SCCs and apply one hard intervention per selected SCC.
For each configuration, we evaluate whether the model parameters are
identifiable up to a global scaling by analyzing the stacked linear system
induced by the stationary first- and second-moment equations across all
observational and interventional settings. Identifiability is assessed
numerically via a spectral-gap criterion where we compute the singular values of
the stacked system and declare identifiability if there is a clear separation
between the smallest singular value and the remaining spectrum, corresponding
to a one-dimensional nullspace. In all experiments, we used a threshold of $10^{10}$ for the ratio between the second-smallest and smallest singular values.
\begingroup\color{black}
\endgroup

Figure~\ref{fig:test_cond} shows the fraction of identifiable instances as a function of the number of
interventions.
We observe a sharp transition from non-identifiability to full identifiability when the number of interventions becomes equal to the number of leaf components (The dashed vertical lines shows the number of leaf SCCs). This result indicates that
the theoretical bound from Theorem \ref{th:multiSCC} is close to tight in this setting.

\begin{figure}
    \centering
    \includegraphics[width=0.5\linewidth]{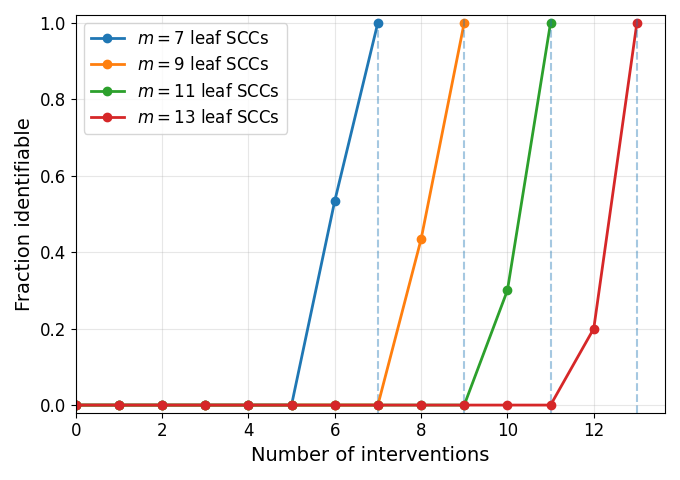}
    \caption{Identifiability versus number of interventions in the star condensation DAG.}
    \label{fig:test_cond}
\end{figure}

\begin{figure}
    \centering
    \includegraphics[width=0.5\linewidth]{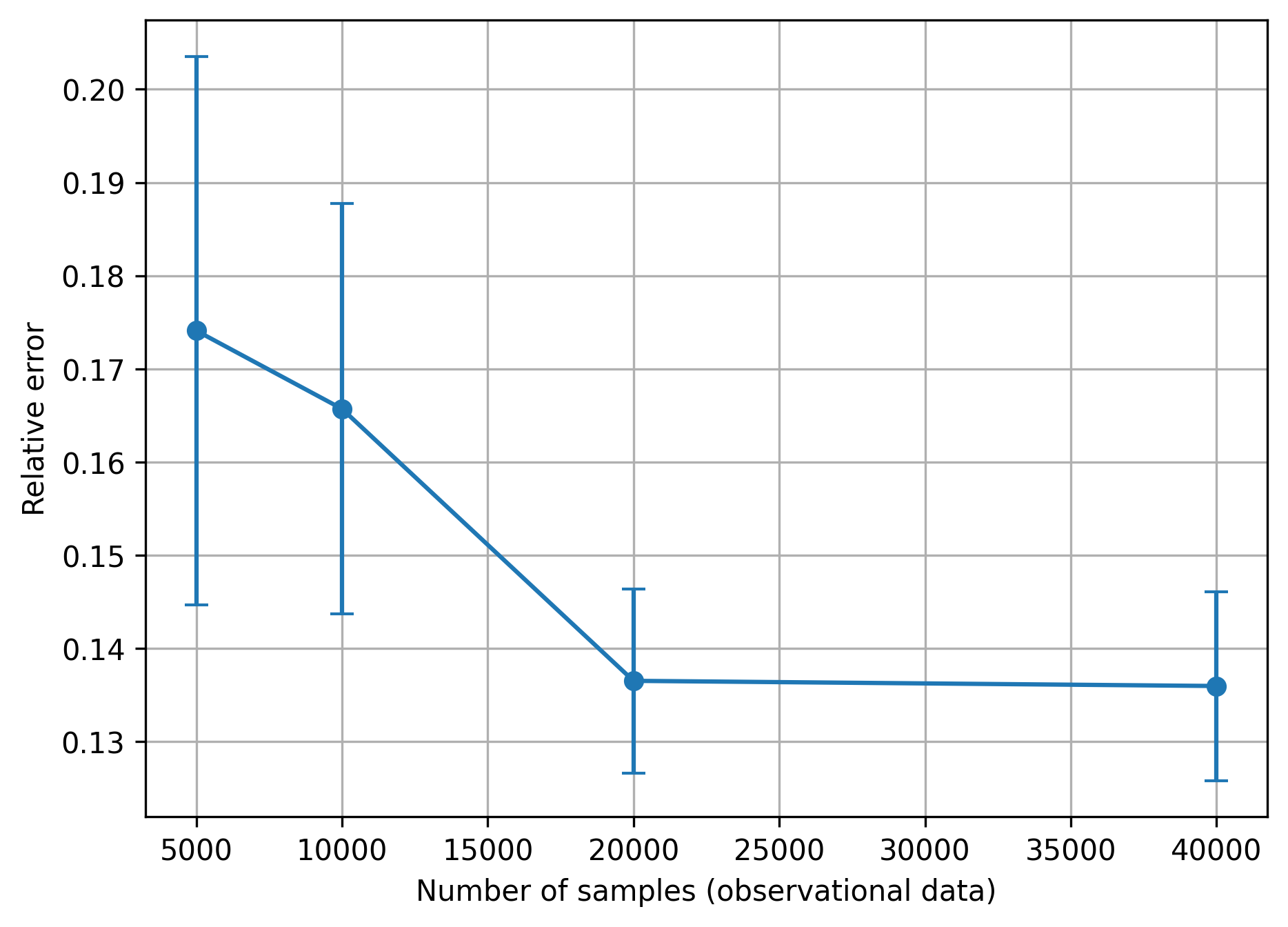}
    \caption{The average relative error for recovering $\mathbf{\Lambda}$ (up to some global scaling) versus the sample size (observational data).}
    \label{Fig:sample}
\end{figure}

\subsection{Effect of sample size on relative error}
\label{app:sample_size}
To assess the sensitivity of our estimator to the number of steady-state samples, we conducted an
experiment in which we varied the number of observational samples used to estimate
$(\hat{\boldsymbol{\mu}},\hat{\mathbf{\Sigma}})$. For each trial, the number of samples per intervention was set to
half of the observational sample size. We evaluated the relative error of recovering
$\boldsymbol{\Lambda}$ for observational sample sizes
$\{5\mathrm{k}, 10\mathrm{k}, 20\mathrm{k}, 40\mathrm{k}\}$ over 30 random instances with $n=10$. As shown in
Fig. \ref{Fig:sample}, the estimation error decreases markedly when increasing the number of samples from
$5\mathrm{k}$ to $20\mathrm{k}$ and then stabilizes, indicating that the estimator achieves
accurate recovery with a moderate number of steady-state samples. This behavior is expected, since
both empirical means and covariances become sufficiently accurate in this regime, after which
additional samples yield diminishing returns.

\subsection{Finite-sample SCC recovery}
\label{app:scc_recovery_experiment}

We evaluate the SCC-recovery procedure implied by
Theorem~\ref{th:learning_SCC_mu} and Remark~\ref{remak:scc_recovery} in a
finite-sample setting. For each trial, we generate a random OU system with
\(n=10\) nodes. The nodes are first partitioned into random SCCs of size between
\(1\) and \(3\). The condensation graph is then generated as a random DAG by
allowing edges only from earlier SCCs to later SCCs, with edge probability
\(0.3\). Within each non-singleton SCC, we include a directed cycle to ensure
strong connectivity and add extra within-SCC edges with probability \(0.2\).
For each edge in the condensation DAG, we add one randomly selected inter-SCC
edge.

The drift matrix \(\boldsymbol{\Lambda}\) is made positive stable by strict row
diagonal dominance: after assigning all off-diagonal weights, we set
\(\Lambda_{ii}=\sum_{j\neq i}|\Lambda_{ij}|+0.5\). The input vector is sampled
as \(b_i\sim \mathrm{Unif}[0.5,1.5]\), and the diagonal diffusion entries are
sampled as \(d_i\sim \mathrm{Unif}[0.2,0.5]\). The observational stationary mean
and covariance are computed from
\(\boldsymbol{\mu}=\boldsymbol{\Lambda}^{-1}\mathbf b\) and the Lyapunov
equation
$
\boldsymbol{\Lambda}\boldsymbol{\Sigma}
+
\boldsymbol{\Sigma}\boldsymbol{\Lambda}^{\top}
=
\mathbf D$.

We use one intervention target per SCC, chosen as one representative node from
that SCC. For hard interventions, we zero all off-diagonal entries in the
intervened row while keeping the diagonal entry fixed. For soft interventions,
we multiply the off-diagonal entries in the intervened row by a factor
\(\alpha=0.4\), again keeping the diagonal entry fixed. For both observational
and interventional settings, samples are drawn from the corresponding stationary
Gaussian distributions.

To estimate the response set
\(R_i=\{j:\mu_j^{(i)}\neq\mu_j\}\), we perform coordinate-wise two-sample
\begingroup\color{black}
\(z\)-tests comparing observational and interventional samples.
\endgroup
The estimated response sets are used to reconstruct SCCs as follows: empty response sets are treated as
singleton source SCCs; identical non-empty response sets are grouped together;
strict inclusions among response sets recover the SCC-level partial order; and
subtracting descendant response sets recovers the individual SCC node sets.

We repeat this procedure over \(100\) independently generated OU systems. The
quality of the recovered SCC partition is measured by the Adjusted Rand Index
(ARI), where \(1\) indicates exact recovery. We report results for both hard and
soft interventions as a function of sample size.

\subsection{Experiment with non-diagonal diffusion}
\label{app:nondiag_diffusion}
In this experiment, the data generator was modified to sample a full positive
definite diffusion matrix \(\boldsymbol{\Omega}\) instead of a diagonal
\(\mathbf D\). Specifically, we generated a random matrix \(\mathbf A\), formed
\(\mathbf A\mathbf A^{\top}\), normalized its scale, and added a positive
diagonal jitter. The steady-state covariance was then obtained by solving the
full Lyapunov equation
$
\boldsymbol{\Lambda}\boldsymbol{\Sigma}
+
\boldsymbol{\Sigma}\boldsymbol{\Lambda}^{\top}
=
\boldsymbol{\Omega}$.
Thus, the data-generating process no longer satisfies the diagonal diffusion
assumption. The learning model was also modified to estimate a full positive
definite diffusion matrix. We parameterized
\(\boldsymbol{\Omega}\) as
\(\boldsymbol{\Omega}=\mathbf M\mathbf M^{\top}+\epsilon\mathbf I\), where
\(\mathbf M\) is a learned lower-triangular matrix. Accordingly, the covariance
loss was changed from separate diagonal and off-diagonal residuals to the full
matrix residual
$
\boldsymbol{\Lambda}^{(c)}\widehat{\boldsymbol{\Sigma}}^{(c)}
+
\widehat{\boldsymbol{\Sigma}}^{(c)}
\boldsymbol{\Lambda}^{(c)\top}
-
\boldsymbol{\Omega},$
summed over the observational and interventional contexts \(c\). These changes
ensure that both data generation and optimization use a full diffusion matrix,
rather than treating diffusion as diagonal.

\section{About the Spectral/Rank Assumptions}
\label{app:Spec_assumption}
\subsection{Numerical Analysis of Spectral/Rank Assumptions}
\begin{figure}[t]
  \centering
  \begin{subfigure}[t]{.24\textwidth}
    \centering
    \includegraphics[width=\linewidth]{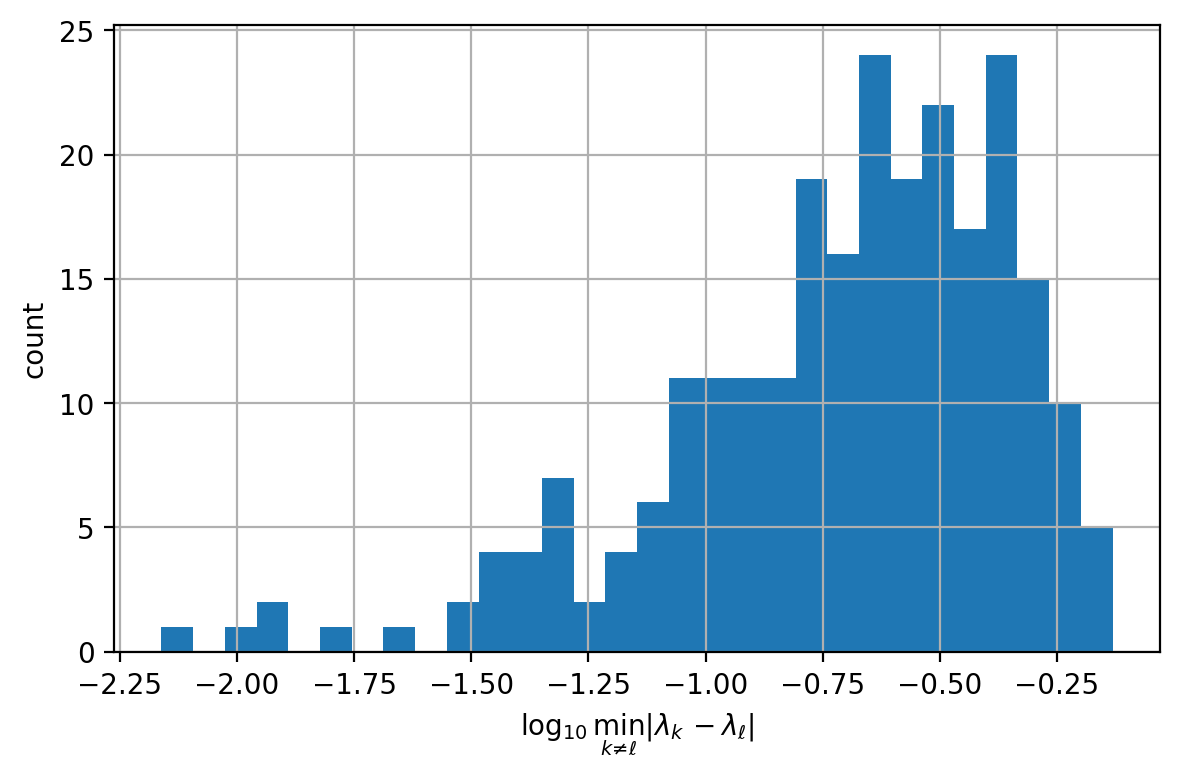}
    \caption{Min. eigenvalue spacing of $\boldsymbol{\Lambda}$}
    \label{figapp:sub1}
  \end{subfigure}%
  \begin{subfigure}[t]{.24\textwidth}
    \centering
    \includegraphics[width=\linewidth]{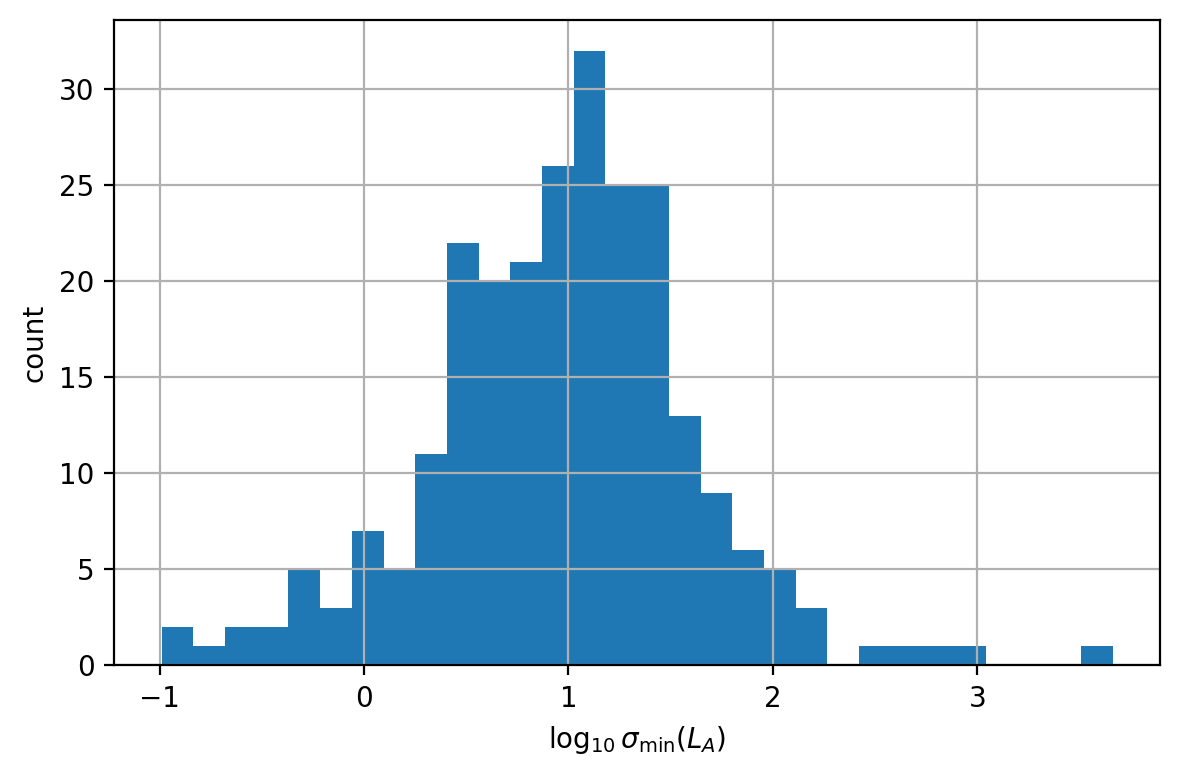}
    \caption{Min. singular value of $\mathbf{L}_{\boldsymbol{A}}$}
    \label{figapp:sub2}
  \end{subfigure}%
  \begin{subfigure}[t]{.24\textwidth}
    \centering
    \includegraphics[width=\linewidth]{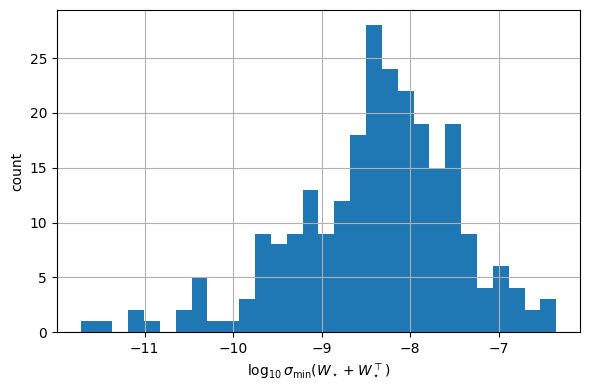}
    \caption{Assum. \ref{assum:lambda_W}(2).}
    \label{figapp:sub3}
  \end{subfigure}
  \begin{subfigure}[t]{.24\textwidth}
    \centering
    \includegraphics[width=\linewidth]{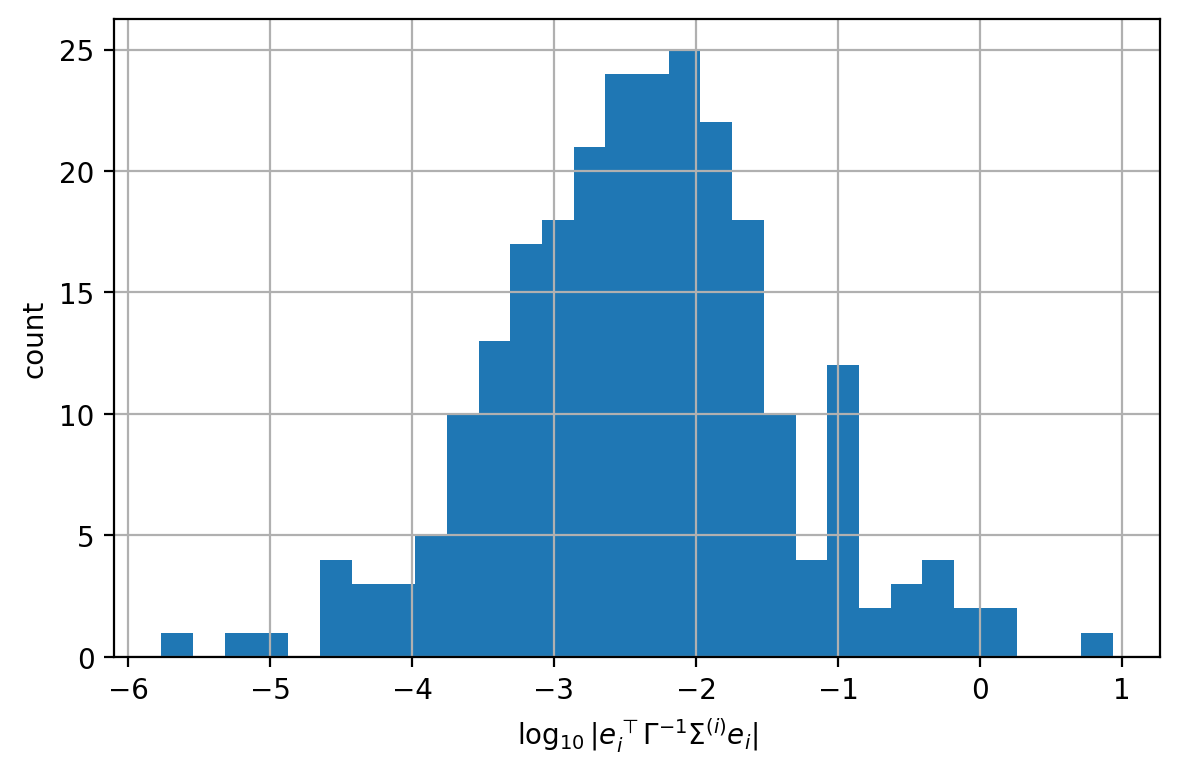}
    \caption{Assum. \ref{assum:lambda_W}(3).}
    \label{figapp:sub4}
  \end{subfigure}
  \caption{Numerical probe of spectral/rank assumptions: a) The histogram of minimum pairwise eigenvalue spacing of $\boldsymbol{\Lambda}$. b) The histogram of minimum singular of $\mathbf{L}_{\boldsymbol{A}}$. c) The condition in Assumption \ref{assum:lambda_W}(2) d) The condition in Assumption \ref{assum:lambda_W}(3) (The values on the x-axis are in $\log_{10}$ scale)  }
  \label{figapp:all}
\end{figure}

We assessed the spectral nondegeneracy assumptions by Monte Carlo analysis on random, strongly connected OU models. In each of $250$ trials, we sampled a drift matrix $\boldsymbol{\Lambda}\in\mathbb{R}^{10\times10}$ with off–diagonal pattern drawn i.i.d.\ at density $0.3$ (Gaussian weights, standard deviation $0.8$), and set each diagonal entry strictly larger than the $\ell_{1}$ row sum (ensuring positive stability). Strong connectivity was checked on the directed graph with edge $j\!\to\! i$ iff $\Lambda_{ij}\neq 0$. We drew a diagonal diffusion $\mathbf{D}$ with $d_k\sim\mathrm{Unif}[0.2,0.4]$, selected an intervention coordinate $i$ uniformly at random, and formed the intervened drift by zeroing row $i$’s off–diagonals (keeping $\Lambda_{ii}$). We then solved the continuous–time Lyapunov equations to obtain the observational and intervened covariances $\boldsymbol{\Sigma}$ and $\boldsymbol{\Sigma}^{(i)}$, formed $\boldsymbol{\Gamma}:=\boldsymbol{\Sigma}-\boldsymbol{\Sigma}^{(i)}$, and built the $( -i,-i)$ Schur blocks $\boldsymbol{\Xi}$ and $\mathbf{Z}$ used in the proofs, with $\mathbf{A}:=\boldsymbol{\Xi}^{-1}\mathbf{Z}$. We evaluated four quantities: (i) the simple-spectrum condition for
\(\boldsymbol{\Lambda}\), measured by
\(\min_{k\neq \ell}|\lambda_k-\lambda_\ell|\); (ii) the single-SCC
skew-kernel condition in Assumption~\ref{ass:moment-nondeg}, measured by
\(\sigma_{\min}(\mathbf L_{\mathbf A})\), where \(\mathbf L_{\mathbf A}\)
represents
\(\mathbf S\mapsto
\operatorname{offdiag}(\mathbf S\mathbf A-\mathbf A^\top\mathbf S)\)
on the skew-symmetric subspace; (iii) the condition in Assumption \ref{assum:lambda_W}(2) and (iv) the condition in Assumption \ref{assum:lambda_W}(3),
measured by
\(|\mathbf e_i^\top\boldsymbol{\Gamma}^{-1}
\boldsymbol{\Sigma}^{(i)}\mathbf e_i|\).
We report the resulting histograms in Figure~\ref{figapp:all}. In all \(250\)
\begingroup\color{black}
trials, the four quantities were strictly separated from zero. This is numerical evidence for the sampled models, not a proof of genericity for every SCC pattern.
\endgroup

\subsection{About the role of spectral/rank assumptions in the proofs}
\label{app:spec_assumption_role}
- Assumption~\ref{ass:moment-nondeg} holds on a nonempty open subset of
the ambient \(\mathbf A\)-space. For example, if
\(\mathbf A=\operatorname{diag}(a_1,\dots,a_m)\) with
\(a_p\neq a_q\) for \(p\neq q\), then for any skew-symmetric \(\mathbf S\),
\((\mathbf S\mathbf A-\mathbf A^\top\mathbf S)_{pq}
=(a_q-a_p)S_{pq}\) for \(p\neq q\). Hence
\(\operatorname{offdiag}(\mathbf S\mathbf A-\mathbf A^\top\mathbf S)=0\)
implies \(S_{pq}=0\) for all \(p\neq q\), and since \(\mathbf S\) is
skew-symmetric, this gives \(\mathbf S=0\).
Moreover, this injectivity is stable under sufficiently small off-diagonal
perturbations. Indeed, write
\(\mathbf A=\mathbf D+\mathbf E\), where
\(\mathbf D=\operatorname{diag}(a_1,\dots,a_m)\) and
\(\delta:=\min_{p\neq q}|a_p-a_q|>0\). For any skew-symmetric \(\mathbf S\),
$
\|\operatorname{offdiag}(\mathbf S\mathbf D-\mathbf D\mathbf S)\|_F
\geq
\delta\|\mathbf S\|_F$,
whereas
$
\|\operatorname{offdiag}(\mathbf S\mathbf E-\mathbf E^\top\mathbf S)\|_F
\leq
2\|\mathbf E\|_2\|\mathbf S\|_F$.
Therefore, if \(2\|\mathbf E\|_2<\delta\), then
\(\operatorname{offdiag}(\mathbf S\mathbf A-\mathbf A^\top\mathbf S)=0\)
forces \(\mathbf S=0\). Thus Assumption~\ref{ass:moment-nondeg} holds not only
for diagonal matrices with separated diagonal entries, but also for all
sufficiently small off-diagonal perturbations of such matrices.
In the model, however, \(\mathbf A\) is determined by the induced moments, so
we state the condition directly as a checkable spectral condition on the induced
operator \(\mathcal L_{\mathbf A}\). The perturbation argument above suggests a
 direction for finding a witnesses. For instance, one may look
for parameter regimes in a single SCC with suitably tuned weights,
where the moment-derived matrix \(\boldsymbol{\Xi}^{-1}\mathbf Z\) becomes close
to a diagonal matrix with separated diagonal entries. In such a regime,
Assumption~\ref{ass:moment-nondeg} holds. 
For small SCCs the condition simplifies. When \(|C|=2\), we have \(m=1\), so the condition is vacuous. When \(|C|=3\), we
have \(m=2\), and the condition reduces to the scalar requirement
\(A_{22}\neq A_{11}\).

\begingroup\color{black}
- Assumption~\ref{assum:lambda_W} guarantees invertibility of the
covariance difference
$\boldsymbol\Gamma=\boldsymbol\Sigma-\boldsymbol\Sigma^{(i)}$.
Its factors have a Popov--Belevitch--Hautus (PBH) \citep{hespanha2009linear} interpretation.
The PBH criterion characterizes
controllability and observability of a linear system
\[
\dot{\mathbf z}=\mathbf M\mathbf z+\mathbf g u,
\qquad y=\mathbf h^{\top}\mathbf z.
\]
The pair $(\mathbf M,\mathbf g)$ is controllable if and only if
$\boldsymbol\ell^{\top}\mathbf g\ne0$ for every nonzero left
eigenvector $\boldsymbol\ell^{\top}$ of $\mathbf M$.
Similarly, $(\mathbf h^{\top},\mathbf M)$ is observable if and only if
$\mathbf h^{\top}\mathbf r\ne0$ for every nonzero right eigenvector
$\mathbf r$ of $\mathbf M$. These tests express whether the input
can excite every mode and whether the output can detect every mode.

The structural versions ask whether these properties hold generically
when the permitted entries of the matrix/input/output pair vary
independently. The graphical criterion for structural controllability
requires that every state be reachable from an input and that the
bipartite graph of $[\mathbf M\ \mathbf g]$ admit a matching of size $n$,
where $n$ is the number of states. The latter means selecting $n$
permitted entries in distinct rows and distinct columns.
Structural observability has the dual test: every state must have a
directed path to an output, and the bipartite graph of
$[\mathbf M^{\top}\ \mathbf h]^{\top}$ must admit a matching of size $n$ \citep{lin1974structural}.

In our setting, the first factor
$\boldsymbol\rho_{\star,k}^{\top}\mathbf e_i$ in
$(\mathbf W_\star)_{k\ell}$ is the PBH controllability coefficient
for $(-\boldsymbol\Lambda_\star,\mathbf e_i)$: it determines whether
an input acting on the intervened coordinate $i$ can excite drift
mode $k$.
For the second factor, set
$\mathbf h=\boldsymbol\Sigma^{(i)}\mathbf w_\star$.
Since
\[
\boldsymbol\Lambda_\star^{\top}\boldsymbol\rho_{\star,\ell}
=\lambda_{\star,\ell}\boldsymbol\rho_{\star,\ell},
\]
the coefficient
$\mathbf w_\star^{\top}\boldsymbol\Sigma^{(i)}
\boldsymbol\rho_{\star,\ell}=\mathbf h^{\top}
\boldsymbol\rho_{\star,\ell}$ is the PBH observability coefficient
for $(\mathbf h^{\top},-\boldsymbol\Lambda_\star^{\top})$:
it determines whether this covariance-derived output detects mode
$\ell$ of the transpose dynamics. Under the assumed simple spectrum,
nonvanishing of all the first factors is equivalent to controllability
of the first pair, and nonvanishing of all the second factors is
equivalent to observability of the second pair.

For these graphical tests, our independently free diagonal drift
entries supply a self-loop at every state, so the matching conditions
are automatic. Within a single SCC, every vertex is reachable from
the intervened node $i$. Hence $(-\boldsymbol\Lambda_\star,\mathbf e_i)$
is structurally controllable, and the first PBH factors are generically
nonzero under the assumed simple spectrum.
For the transpose dynamics, reverse the drift edges and check that
every vertex can reach a vertex connected to the output. In a single
SCC this reachability condition holds for any nonempty output pattern.
It establishes structural observability when the permitted output
coefficients are independently free.

These interpretations do not alone prove genericity of
Assumption~\ref{assum:lambda_W}. In particular,
$\mathbf h=\boldsymbol\Sigma^{(i)}\mathbf w_\star$ depends on the
model parameters and cannot be varied independently in a structural
genericity argument. We retain the explicit condition
$0\notin\sigma(\mathbf W_\star+\mathbf W_\star^{\top})$,
which guarantees invertibility of
$\boldsymbol\Gamma=\boldsymbol\Sigma-\boldsymbol\Sigma^{(i)}$ through
$
\boldsymbol\Gamma=-\mathbf R_\star
(\mathbf W_\star+\mathbf W_\star^{\top})\mathbf R_\star^{\top}$.
\endgroup

- The scalar condition
\(\mathbf e_i^\top\boldsymbol{\Gamma}^{-1}
\boldsymbol{\Sigma}^{(i)}\mathbf e_i\neq0\) is used in the proof to pass from
invertibility of \(\boldsymbol{\Gamma}:=\boldsymbol{\Sigma}-\boldsymbol{\Sigma}^{(i)}\)
to invertibility of the Schur-type matrix \(\boldsymbol{\Xi}\).
\begingroup\color{black}
Consider this condition on the directed-cycle graph
\(1\to2\to\cdots\to n\to1\), with intervention on node \(1\).
The intervention removes the edge \(n\to1\), leaving the directed
chain \(1\to2\to\cdots\to n\). Thus the covariance effect generated by
\endgroup
the intervention can propagate through all coordinates of the block.
By Cramer's rule,
\[
\mathbf e_1^\top\boldsymbol{\Gamma}^{-1}
\boldsymbol{\Sigma}^{(1)}\mathbf e_1
=
\frac{
\det\!\big[
\boldsymbol{\Sigma}^{(1)}\mathbf e_1,\,
\boldsymbol{\Gamma}\mathbf e_2,\ldots,
\boldsymbol{\Gamma}\mathbf e_n
\big]
}{
\det(\boldsymbol{\Gamma})
}.
\]
On the directed cycles, both the numerator and denominator are rational
functions of the cycle weights, self-loops, and diffusion variances. Since the
intervened graph contains a directed chain from the intervened node to all other
nodes, the columns appearing in the Cramer numerator are not forced to vanish by
the graph structure. However, this does not by itself prove that the determinant
is nonzero. To prove genericity on directed cycles, or on general SCCs, one
would still need to exhibit one admissible parameter value for which the condition holds.

\paragraph{Witness for directed cycle}
\begingroup\color{black}
When comparing drifts supported on the same directed cycle, one can avoid the auxiliary
\endgroup
\(\mathbf A=\boldsymbol{\Xi}^{-1}\mathbf Z\) skew-kernel argument used in the
general single-SCC proof. Instead, we work directly with the
\(\boldsymbol{\Xi}\)-equation
$
\boldsymbol{\Lambda}_{\Delta,-i,-i}\boldsymbol{\Xi}
+
\boldsymbol{\Xi}^{\top}
\boldsymbol{\Lambda}_{\Delta,-i,-i}^{\top}
=
0$,
which was derived in \eqref{eq:Xi-eq}. When the SCC is the directed cycle
\(1\to2\to\cdots\to n\to1\) and the intervention is on node \(1\), the hard
intervention removes the edge \(n\to1\). Hence, on the remaining nodes
\(2,\dots,n\), the support of
$
\mathbf X:=\boldsymbol{\Lambda}_{\Delta,-1,-1}$
is lower bidiagonal, corresponding to the directed chain
$
2\to3\to\cdots\to n$.
For this special support pattern, the adjacent-minor condition on
\begingroup\color{black}
\(\boldsymbol{\Xi}\) given below forces \(\mathbf X=0\). This gives a direct
injectivity argument for cycle-supported competitors; it does not establish
Assumption~\ref{ass:moment-nondeg} for unrestricted competing drifts.
\endgroup

Consider the directed cycle
$
1\to2\to\cdots\to n\to1$,
with intervention on node \(1\). After the hard intervention, the edge
\(n\to1\) is removed, and the remaining support on nodes \(2,\dots,n\) is the
directed chain
$
2\to3\to\cdots\to n$.
Thus the ambiguity
$
\mathbf X:=\boldsymbol{\Lambda}_{\Delta,-1,-1}$
has lower-bidiagonal support in the natural ordering \(2,\dots,n\). That is, if
\begingroup\color{black}
\(m=n-1\), reduced index $p$ corresponds to original node $p+1$, and
\endgroup
$
X_{pq}=0 
\text{ unless } 
p=q\ \text{or}\ p=q+1 .
$

Recall that the proof gives the equation
$
\mathbf X\boldsymbol{\Xi}
+
\boldsymbol{\Xi}^{\top}\mathbf X^\top
=
0$.
The following lemma gives a direct condition under which this equation forces
\(\mathbf X=0\).

\begin{lemma}
\label{lem:xi_chain_injectivity}
Let \(\boldsymbol{\Xi}\in\mathbb R^{m\times m}\) be arbitrary, not necessarily
symmetric. Suppose \(\mathbf X\in\mathbb R^{m\times m}\) is lower bidiagonal:
$
X_{pq}=0
\qquad\text{unless}\qquad
p=q\ \text{or}\ p=q+1$.
If
$
\mathbf X\boldsymbol{\Xi}
+
\boldsymbol{\Xi}^{\top}\mathbf X^\top
=
0$,
and
$
\Xi_{11}\neq0$,
and, for every \(p=1,\dots,m-1\),
$
\Delta_p(\boldsymbol{\Xi})
:=
\Xi_{pp}\Xi_{p+1,p+1}
-
\Xi_{p,p+1}\Xi_{p+1,p}
\neq0$,
then \(\mathbf X=0\).
\end{lemma}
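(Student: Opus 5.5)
The idea is to work with $\mathbf Y:=\mathbf X\boldsymbol{\Xi}$ rather than with the symmetric equation itself. The hypothesis $\mathbf X\boldsymbol{\Xi}+\boldsymbol{\Xi}^{\top}\mathbf X^{\top}=0$ says exactly that $\mathbf Y^{\top}=-\mathbf Y$. So $\mathbf Y$ is skew-symmetric: it has zero diagonal, and a zero row forces the corresponding column to vanish. Because $\mathbf X$ is lower bidiagonal, each row of $\mathbf Y$ involves only two unknowns:
\[
Y_{pq}=X_{pp}\Xi_{pq}+X_{p,p-1}\Xi_{p-1,q},
\]
with the convention $X_{1,0}=0$. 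The plan is to show by induction on $p$ that row $p$ of $\mathbf X$ vanishes, i.e. $X_{pp}=X_{p,p-1}=0$.

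\emph{Base case.} For $p=1$ the only unknown is $X_{11}$. The diagonal condition gives $Y_{11}=X_{11}\Xi_{11}=0$. Since $\Xi_{11}\neq0$, we get $X_{11}=0$.

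\emph{Inductive step.} Suppose rows $1,\dots,p-1$ of $\mathbf X$ are zero for some $2\le p\le m$.
\begin{itemize}
\item Then rows $1,\dots,p-1$ of $\mathbf Y=\mathbf X\boldsymbol{\Xi}$ are zero.
\item By skew-symmetry, columns $1,\dots,p-1$ of $\mathbf Y$ are zero as well; in particular $Y_{p,p-1}=0$.
\item The diagonal condition also gives $Y_{pp}=0$.
\end{itemize}
Writing these two equations out gives a $2\times 2$ homogeneous linear system in the unknowns $(X_{pp},X_{p,p-1})$:
\[
\begin{bmatrix}\Xi_{p,p-1} & \Xi_{p-1,p-1}\\ \Xi_{pp} & \Xi_{p-1,p}\end{bmatrix}
\begin{bmatrix}X_{pp}\\ X_{p,p-1}\end{bmatrix}=0 .
\]
Its determinant is $\Xi_{p,p-1}\Xi_{p-1,p}-\Xi_{p-1,p-1}\Xi_{pp}=-\Delta_{p-1}(\boldsymbol{\Xi})$. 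This is nonzero by the adjacent-minor hypothesis, so $X_{pp}=X_{p,p-1}=0$, which completes the induction.

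After $m$ steps every permitted entry of $\mathbf X$ is zero, so $\mathbf X=0$.

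The main point requiring care is the inductive step. One must use both skew-symmetry, which converts the already-established zero rows of $\mathbf Y$ into zero columns, and the zero diagonal. Only together do they supply two independent equations for the two new unknowns in row $p$. One must also check that the resulting $2\times2$ coefficient matrix is, up to sign, exactly the adjacent minor $\Delta_{p-1}$ appearing in the hypotheses.

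Note that no symmetry of $\boldsymbol{\Xi}$ is used anywhere. This is consistent with the statement, which allows $\boldsymbol{\Xi}$ to be arbitrary.
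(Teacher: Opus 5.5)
Your proof is correct and follows essentially the same argument as the paper. Both use a row-by-row induction: the base case comes from the $(1,1)$ entry, and each step solves a $2\times2$ homogeneous system whose determinant is $\pm\Delta_p(\boldsymbol{\Xi})$. Working with the skew-symmetric $\mathbf Y=\mathbf X\boldsymbol{\Xi}$ instead of the symmetric $\mathbf E=\mathbf Y+\mathbf Y^{\top}$ is only a cosmetic difference. Once row $p$ of $\mathbf X$ vanishes, $E_{p,p+1}=Y_{p+1,p}$ and $E_{p+1,p+1}=2Y_{p+1,p+1}$, so these are exactly the paper's two equations with the index shifted by one.
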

\begin{proof}
Write
$
x_p:=X_{pp}, p=1,\dots,m,$
and
$
\ell_p:=X_{p+1,p}, p=1,\dots,m-1.
$
These are the only possibly nonzero entries of \(\mathbf X\). Let
$
\mathbf E:=
\mathbf X\boldsymbol{\Xi}
+
\boldsymbol{\Xi}^{\top}\mathbf X^\top .
$
By assumption, \(\mathbf E=0\). The \((1,1)\)-entry gives
$
0=E_{11}=2x_1\Xi_{11}$.
Since \(\Xi_{11}\neq0\), we get \(x_1=0\).

Now suppose, inductively, that for some \(p\in\{1,\dots,m-1\}\),
\[
x_1=\cdots=x_p=0,
\qquad
\ell_1=\cdots=\ell_{p-1}=0.
\]
Then row \(p\) of \(\mathbf X\) is zero, while row \(p+1\) has only two
possibly nonzero entries, \(\ell_p\) in column \(p\) and \(x_{p+1}\) in column
\(p+1\). Hence the \((p,p+1)\)-entry of \(\mathbf E=0\) gives
\[
\ell_p\Xi_{pp}
+
x_{p+1}\Xi_{p+1,p}
=0,
\]
and the \((p+1,p+1)\)-entry gives
\[
\ell_p\Xi_{p,p+1}
+
x_{p+1}\Xi_{p+1,p+1}
=0.
\]
Therefore,
\[
\begin{bmatrix}
\Xi_{pp} & \Xi_{p+1,p}\\
\Xi_{p,p+1} & \Xi_{p+1,p+1}
\end{bmatrix}
\begin{bmatrix}
\ell_p\\
x_{p+1}
\end{bmatrix}
=
0.
\]
Its determinant is
\[
\Delta_p(\boldsymbol{\Xi})
=
\Xi_{pp}\Xi_{p+1,p+1}
-
\Xi_{p,p+1}\Xi_{p+1,p},
\]
which is nonzero by assumption. Thus \(\ell_p=0\) and \(x_{p+1}=0\). The claim
follows by induction.
\end{proof}

For a directed cycle, define
\[
\kappa_{\Xi}
:=
\Xi_{11}
\prod_{p=1}^{m-1}
\left(
\Xi_{pp}\Xi_{p+1,p+1}
-
\Xi_{p,p+1}\Xi_{p+1,p}
\right).
\]
By Lemma~\ref{lem:xi_chain_injectivity}, if \(\kappa_\Xi\neq0\), then the
\(\boldsymbol{\Xi}\)-equation forces the directed-chain ambiguity
\(\boldsymbol{\Lambda}_{\Delta,-1,-1}\) to vanish.

We now show that \(\kappa_{\Xi}\) is not identically zero on the directed-cycle
parameter space, for every cycle length \(n\). Since \(\boldsymbol{\Xi}\) is
obtained from solutions of Lyapunov equations, its entries are rational
functions of the cycle weights, self-loops, and diffusion variances. Hence each
factor in \(\kappa_\Xi\) is a rational function of the model parameters.
\begingroup\color{black}
For these rational witnesses, it suffices to work on the algebraic domain
where both Lyapunov operators are nonsingular and
\(\Sigma^{(1)}_{11}\neq0\); positivity is not required at the intermediate
witness. A nonzero rational function cannot vanish on the nonempty open
admissible parameter domain, so these witnesses imply the stated generic
conclusion for stationary OU models. Here ``algebraic domain'' means
only that the displayed rational expressions are defined; a Lyapunov
solution there need not be a positive covariance. The witness establishes
a nonzero numerator polynomial. Appendix~\ref{app:genericity} then
transfers its generic nonvanishing to the open admissible OU domain.
\endgroup

We use induction on \(n\). The base cases are direct. For \(n=2\), take
\[
\boldsymbol{\Lambda}
=
\begin{bmatrix}
2 & 1\\
1 & 2
\end{bmatrix},
\qquad
\mathbf D=\operatorname{diag}(1,2).
\]
A direct calculation gives
\[
\boldsymbol{\Xi}=\left[\frac{3}{64}\right],
\]
so \(\kappa_\Xi\neq0\). For \(n=3\), take
\[
\boldsymbol{\Lambda}
=
\begin{bmatrix}
2 & 0 & 1\\
1 & 2 & 0\\
0 & 1 & 2
\end{bmatrix},
\qquad
\mathbf D=\operatorname{diag}(1,2,3).
\]
For the intervention on node \(1\), direct calculation gives
\[
\boldsymbol{\Xi}
=
\begin{bmatrix}
\frac{1}{4032} & -\frac{1}{8064}\\[3pt]
\frac{19}{384} & -\frac{265}{21504}
\end{bmatrix}.
\]
Thus
\[
\Xi_{11}=\frac{1}{4032}\neq0,
\qquad
\det(\boldsymbol{\Xi})=\frac{89}{28901376}\neq0.
\]
Hence \(\kappa_\Xi\neq0\) for the 3-cycle.

Assume now that the claim holds for the \(n\)-cycle. We argue it for the
\((n+1)\)-cycle. Let
\[
P_p^{(n+1)}(\theta)
:=
\Xi_{pp}^{(n+1)}\Xi_{p+1,p+1}^{(n+1)}
-
\Xi_{p,p+1}^{(n+1)}\Xi_{p+1,p}^{(n+1)},
\qquad p=1,\dots,n-1,
\]
and let \(P_0^{(n+1)}(\theta):=\Xi_{11}^{(n+1)}\). We will show that each
\(P_p^{(n+1)}\) is not identically zero as a rational function of the
\((n+1)\)-cycle parameters.

First consider \(p=0,\dots,n-2\), i.e., the factors involving only the old
\begingroup\color{black}
reduced nodes \(2,\dots,n\). By the induction hypothesis and rational genericity, choose an
\(n\)-cycle in this nonsingular algebraic domain whose corresponding \(\boldsymbol{\Xi}^{(n)}\)-factors are all
\endgroup
nonzero. We now insert node \(n+1\) as a fast relay between \(n\) and \(1\).

Let \(\boldsymbol{\Lambda}^{\rm old}\in\mathbb R^{n\times n}\) be the drift of
the old \(n\)-cycle, and let
$\omega:=\Lambda^{\rm old}_{1n}$
be the closing-edge weight \(n\to1\). Define
$
\mathbf A:=\boldsymbol{\Lambda}^{\rm old}
-\omega\mathbf e_1\mathbf e_n^\top$,
so that \(\mathbf A\) is the old drift with the closing edge removed. Write
$
\mathbf x:=(x_1,\dots,x_n)^\top,
z:=x_{n+1}$.
\begingroup\color{black}
For parameters \(\alpha,c>0\) and \(\beta\in\mathbb R\) satisfying
\endgroup
$
\frac{\alpha\beta}{c}=\omega$,
define the relay drift
\[
\boldsymbol{\Lambda}_\varepsilon
=
\begin{bmatrix}
\mathbf A & \beta\mathbf e_1\\[2pt]
-\alpha\varepsilon^{-1}\mathbf e_n^\top & c\varepsilon^{-1}
\end{bmatrix}.
\]
The relay coordinate has drift
$
-\frac{c}{\varepsilon}z+\frac{\alpha}{\varepsilon}x_n
=
-\frac{c}{\varepsilon}
\left(z-\frac{\alpha}{c}x_n\right)$,
so as \(\varepsilon\to0\), the relay enforces
$
z\approx \frac{\alpha}{c}x_n$.
Therefore the edge \(n+1\to1\) contributes
$
-\beta z
\approx
-\frac{\alpha\beta}{c}x_n
=
-\omega x_n$,
which is exactly the contribution of the old closing edge \(n\to1\).

We verify this at the covariance level. Let
\[
\boldsymbol{\Sigma}_\varepsilon
=
\begin{bmatrix}
\mathbf S_\varepsilon & \mathbf r_\varepsilon\\
\mathbf r_\varepsilon^\top & s_\varepsilon
\end{bmatrix}
\]
\begingroup\color{black}
be the solution of the Lyapunov equation for \((\mathbf x,z)\), with
\(\mathbf S_\varepsilon\) the old-variable block and
\(\mathbf r_\varepsilon\) the cross block. Keep the diagonal diffusion
parameters fixed as \(\varepsilon\to0\), with block form
\endgroup
\[
\mathbf D_\varepsilon
=
\begin{bmatrix}
\mathbf D_x&0\\
0&d_z
\end{bmatrix}.
\]
The \((\mathbf x,z)\)-block of the Lyapunov equation gives
\[
\mathbf A\mathbf r_\varepsilon
+
\beta\mathbf e_1s_\varepsilon
-
\alpha\varepsilon^{-1}\mathbf S_\varepsilon\mathbf e_n
+
c\varepsilon^{-1}\mathbf r_\varepsilon
=
0.
\]
Multiplying by \(\varepsilon\) and rearranging,
\[
\mathbf r_\varepsilon
=
\frac{\alpha}{c}\mathbf S_\varepsilon\mathbf e_n
-
\frac{\varepsilon}{c}
\left(
\mathbf A\mathbf r_\varepsilon+\beta\mathbf e_1s_\varepsilon
\right).
\]
\begingroup\color{black}
To justify boundedness, also multiply the \((z,z)\)-equation by
\(\varepsilon\), obtaining
\[
2c s_\varepsilon-2\alpha\mathbf e_n^\top\mathbf r_\varepsilon
=\varepsilon d_z.
\]
At \(\varepsilon=0\), the scaled cross and scalar equations give
\(\mathbf r=(\alpha/c)\mathbf S\mathbf e_n\) and
\(s=(\alpha/c)^2\mathbf e_n^\top\mathbf S\mathbf e_n\).
Substitution into the old-variable equation gives exactly the old Lyapunov
equation. Its nonsingularity therefore makes the full scaled linear system
nonsingular at zero: its homogeneous system has only the zero solution.
In general, if $K(\varepsilon)$ is continuous and $K(0)$ is
invertible, then $K(\varepsilon)^{-1}$ is continuous near zero, as follows
from the adjugate formula and continuity of the nonzero determinant.
Applied here to the matrix of the scaled linear equations, with its
continuous right-hand side, this gives a bounded solution near zero.
The invertible matrix in this argument is the equation operator, not
the limiting covariance.
For \(\varepsilon\neq0\), scaling the equations is reversible. Hence
\endgroup
\[
\mathbf r_\varepsilon
=
\frac{\alpha}{c}\mathbf S_\varepsilon\mathbf e_n
+
O(\varepsilon).
\]
\begingroup\color{black}
Here $R_\varepsilon=O(\varepsilon)$ means
$\|R_\varepsilon\|\le C\varepsilon$ for sufficiently small positive
$\varepsilon$, with the other parameters fixed. We may use any fixed
norm on this finite-dimensional space.
\endgroup

The \((\mathbf x,\mathbf x)\)-block of the Lyapunov equation is
\[
\mathbf A\mathbf S_\varepsilon
+
\mathbf S_\varepsilon\mathbf A^\top
+
\beta\mathbf e_1\mathbf r_\varepsilon^\top
+
\beta\mathbf r_\varepsilon\mathbf e_1^\top
=
\mathbf D_x.
\]
Substituting the previous relation gives
\[
\left(
\mathbf A+\frac{\alpha\beta}{c}\mathbf e_1\mathbf e_n^\top
\right)\mathbf S_\varepsilon
+
\mathbf S_\varepsilon
\left(
\mathbf A+\frac{\alpha\beta}{c}\mathbf e_1\mathbf e_n^\top
\right)^\top
=
\mathbf D_x+O(\varepsilon).
\]
Since \(\alpha\beta/c=\omega\), the drift in parentheses is
\[
\mathbf A+\omega\mathbf e_1\mathbf e_n^\top
=
\boldsymbol{\Lambda}^{\rm old}.
\]
Therefore, by uniqueness and continuity of Lyapunov solutions,
\[
\mathbf S_\varepsilon
\longrightarrow
\boldsymbol{\Sigma}^{\rm old}
\qquad
\text{as }\varepsilon\to0.
\]

The same argument applies to the intervened covariance. Under the intervention
on node \(1\), the edge \(n+1\to1\) is removed, so the block
\(\beta\mathbf e_1\) is set to zero. Thus the effective closing edge also
disappears, matching the intervened old \(n\)-cycle. Hence the observational and
interventional covariance blocks on the old variables converge to those of the
old \(n\)-cycle.

Consequently, the corresponding old block of
\(\boldsymbol{\Xi}_\varepsilon\) converges to the old
\(\boldsymbol{\Xi}^{(n)}\)-matrix. Therefore, for \(p=0,\dots,n-2\), the factors
\(P_p^{(n+1)}\) converge to the nonzero factors of the old \(n\)-cycle. Hence
each \(P_p^{(n+1)}\), \(p=0,\dots,n-2\), is not identically zero as a rational
function of the \((n+1)\)-cycle parameters.

It remains to show that the last factor
\[
P_{n-1}^{(n+1)}
=
\Xi_{n-1,n-1}^{(n+1)}\Xi_{n,n}^{(n+1)}
-
\Xi_{n-1,n}^{(n+1)}\Xi_{n,n-1}^{(n+1)}
\]
\begingroup\color{black}
is not identically zero. Start from the displayed 3-cycle on the retained
nodes \(\{1,n,n+1\}\), whose adjacent minor is nonzero. Insert the intermediate
nodes \(2,\dots,n-1\) one at a time along the edge from \(1\) to \(n\).
The same one-relay calculation applies to any subdivided edge: removing the
incoming edge at node \(1\) commutes with this construction.
Indeed, the inserted relays subdivide the outgoing path from $1$ to $n$,
leaving the incoming edge $n+1\to1$ untouched. Cutting that edge before
or after insertion gives the same model, so the covariance convergence
applies to both observational and interventional equations. At each insertion,
choose a sufficiently small positive relay time after fixing the preceding
parameters. Continuity preserves the nonzero minor on the two retained nodes
\(\{n,n+1\}\), together with the two Lyapunov nonsingularity conditions and
the nonzero interventional variance at node \(1\). After the finitely many
insertions, the graph is the \((n+1)\)-cycle and that retained minor is exactly
\(P_{n-1}^{(n+1)}\): nodes $\{n,n+1\}$ occupy reduced positions
$\{n-1,n\}$. Thus this final rational factor is not identically zero.
\endgroup

We have shown that every factor \(P_p^{(n+1)}\), \(p=0,\dots,n-1\), is a
nonzero rational function. Since the product of finitely many nonzero rational
functions is again nonzero, we conclude that
\[
P_0^{(n+1)}P_1^{(n+1)}\cdots P_{n-1}^{(n+1)}
\]
is not identically zero. Hence there exists an admissible \((n+1)\)-cycle
parameter value for which all adjacent \(\boldsymbol{\Xi}\)-minors are nonzero.
This completes the induction.

Consequently, for every directed cycle length \(n\), the adjacent-\(\boldsymbol{\Xi}\)
condition holds generically on the directed-cycle parameter subfamily. By
Lemma~\ref{lem:xi_chain_injectivity}, the equation
\[
\boldsymbol{\Lambda}_{\Delta,-1,-1}\boldsymbol{\Xi}
+
\boldsymbol{\Xi}^{\top}
\boldsymbol{\Lambda}_{\Delta,-1,-1}^{\top}
=0
\]
has no nonzero solution with directed-chain support, generically on this
subfamily.

\end{document}